\newcolumntype{M}[1]{>{\centering\arraybackslash}m{#1}}
\newcolumntype{N}{@{}m{0pt}@{}}
\newcommand{\thickhline}{%
    \noalign {\ifnum 0=`}\fi \hrule height 1pt
    \futurelet \reserved@a \@xhline
}
\title{Deterministic PAC-Bayesian generalization bounds for deep networks via generalizing noise-resilience}
\author{Vaishnavh Nagarajan\\
Department of Computer Science\\
Carnegie Mellon University\\
 Pittsburgh, PA \\
\texttt{vaishnavh@cs.cmu.edu} 
\And
J. Zico Kolter \\
Department of Computer Science\\
Carnegie Mellon University \& \\
 Bosch Center for AI \\
 Pittsburgh, PA  \\
\texttt{zkolter@cs.cmu.edu} 
}
\renewcommand{\vec}[1]{\boldsymbol{\mathbf{#1}}}
\newcommand{\N}{ \mathcal{N} }
\newcommand{\E}{ \mathbb{E} } 
\newcommand{\Esub}[2]{ \mathbb{E}_{#1}\left[ {#2}\right] } 
\newcommand{\pr}[1]{ \text{Pr} \left[ {#1}\right] }
\newcommand{\prsub}[2]{ \text{Pr}_{#1}\left[ {#2}\right] }
\newcommand{\relu}[1]{ \phi\left( #1 \right)}
\newcommand{\D}{\mathcal{D}}
\newcommand{\U}{ \mathcal{U}}
\newcommand{\W}{\mathcal{W}}
\renewcommand{\omega}[1]{\Omega\left(#1\right)}
\newcommand{\bigoh}[1]{\mathcal{O}\left(#1\right)}
\newcommand{\logoh}[1]{\tilde{\mathcal{O}}\left(#1\right)}
\newcommand{\ellone}[1]{\left|{#1} \right|}
\newcommand{\ellinfty}[1]{\left\|{#1} \right\|_{\infty}}
\newcommand{\elltwo}[1]{\left\|{#1} \right\|}
\newcommand{\matrixnorm}[2]{\left\|{#1} \right\|_{#2}}
\newcommand{\jacobian}[5]{J^{#3/#2}(#5; #1)\ifthenelse{\isempty{#4}}{}{[#4]}}
\newcommand{\compjacobian}[5]{\breve{J}^{#3/#2}(#5; #1)\ifthenelse{\isempty{#4}}{}{[#4]}}
\newcommand{\verbaljacobian}[2]{#2/#1}
\newcommand{\act}[3]{A_{\ifthenelse{\isempty{#1}}{}{\mathsmaller{#1},}#2}\ifthenelse{\isempty{#3}}{}{^{#3}}
}
\newcommand{\frob}[1]{\left\|{#1} \right\|_F}
\newcommand{\spec}[1]{\left\|{#1} \right\|_2}
\newcommand{\sq}[1]{ \left(#1\right)^2}
\newcommand{\Z}{\mathcal{Z}}
\newcommand{\focus}[1]{{\color{red}  \boldsymbol{#1}}}
\newcommand{\nn}[4]{ f\ifthenelse{\isempty{#2}}{}{^{#2}}\left(#4; {{#1}}\right)\ifthenelse{\isempty{#3}}{}{[#3]}}
\newcommand{\prenn}[4]{ g\ifthenelse{\isempty{#2}}{}{^{#2}}\left(#4; {{#1}}\right)\ifthenelse{\isempty{#3}}{}{[#3]}}
\newcommand{\compnn}[4]{ \breve{f}\ifthenelse{\isempty{#2}}{}{^{#2}}\left(#4; {{#1}}\right)\ifthenelse{\isempty{#3}}{}{[#3]}}
\newcommand{\compprenn}[4]{ \breve{g}\ifthenelse{\isempty{#2}}{}{^{#2}}\left(#4; {{#1}}\right)\ifthenelse{\isempty{#3}}{}{[#3]}}
\newcommand{\comppreprenn}[4]{ \breve{h}\ifthenelse{\isempty{#2}}{}{^{#2}}\left(#4; {{#1}}\right)\ifthenelse{\isempty{#3}}{}{[#3]}}
\newcommand{\classmargin}{\gamma_{\text{class}}}
\newcommand{\tolset}{\hat{\mathscr{C}}}
\newcommand{\tolspec}[2]{\hat{\psi}_{#2/#1}}
\newcommand{\toljacob}[2]{\hat{\zeta}_{#2/#1}}
\newcommand{\toloutput}[1]{\hat{\alpha}_{#1}}
\newcommand{\tolpreact}[1]{\hat{\gamma}_{#1}}
\newcommand{\toldelta}{\hat{\delta}}
\newcommand{\trainset}{{\mathscr{C}}^\star}
\newcommand{\trainjacob}[2]{{\zeta}^\star_{#2/#1}}
\newcommand{\trainspec}[2]{{\psi}^\star_{#2/#1}}
\newcommand{\trainoutput}[1]{{\alpha}^\star_{#1}}
\newcommand{\trainpreact}[1]{{\gamma}^\star_{#1}}
\newcommand{\trainmargin}{{\Delta}^\star}
\newcommand{\testset}{{\mathscr{C}}^\dagger}
\newcommand{\testjacob}[2]{{\zeta}^\dagger_{#2/#1}}
\newcommand{\testspec}[2]{{\psi}^\dagger_{#2/#1}}
\newcommand{\testoutput}[1]{{\alpha}^\dagger_{#1}}
\newcommand{\testpreact}[1]{{\gamma}^\dagger_{#1}}
\newcommand{\marginset}{\mathscr{C}^{\smalltriangleup}}
\newcommand{\marginjacob}[2]{{\zeta}^{\smalltriangleup}_{#2/#1}}
\newcommand{\marginspec}[2]{{\psi}^{\smalltriangleup}_{#2/#1}}
\newcommand{\marginoutput}[1]{{\alpha}^{\smalltriangleup}_{#1}}
\newcommand{\marginpreact}[1]{{\gamma}^{\smalltriangleup}_{#1}}
\newcommand{\tempset}{\mathscr{C}}
\newcommand{\tempspec}[2]{{\psi}_{#2/#1}}
\newcommand{\tempjacob}[2]{{\zeta}_{#2/#1}}
\newcommand{\tempoutput}[1]{{\alpha}_{#1}}
\newcommand{\temppreact}[1]{{\gamma}_{#1}}
\newcommand{\prev}[1]{#1_{\text{prev}}}
\newcommand{\boundednormevent}{{\textrm{\textsc{\textsf{Norm-Bound}}}}}
\newcommand{\boundedperturbationevent}{{ \textrm{\textsc{\textsf{Pert-Bound}}}}}
\newcommand{\unchangedacts}{{ \textrm{\textsc{\textsf{Unchanged-Acts}}}}}
\newcommand{\boundspec}{\mathcal{B}_{\textrm{jac-spec}}}
\newcommand{\boundjacob}{\mathcal{B}_{\textrm{jac-row-}\ell_2}}
\newcommand{\boundoutput}{\mathcal{B}_{\textrm{layer-}\ell_2}}
\newcommand{\boundhiddenpreact}{\mathcal{B}_{\textrm{preact}}}
\newcommand{\boundoutputpreact}{\mathcal{B}_{\textrm{output}}}
\newcommand\numberthis{\addtocounter{equation}{1}\tag{\theequation}}
\newtheorem{theorem}{Theorem}[section]
\newtheorem*{theorem*}{Theorem}
\newtheorem{lemma}[theorem]{Lemma}
\newtheorem{fact}{Fact}[section]
\newtheorem{definition}{Definition}[section]
\begin{document}

\maketitle

\begin{abstract}
The ability of overparameterized deep networks to generalize well has been linked to the fact that stochastic gradient descent (SGD) finds solutions that lie in flat, wide minima in the training loss -- minima where the output of the network is resilient to small random noise added to its parameters. 
So far this observation has been used to provide generalization guarantees only for neural networks whose parameters are either \textit{stochastic} or \textit{compressed}. In this work, we present a general PAC-Bayesian framework that leverages this observation to provide a bound on the original network learned -- a network that is deterministic and uncompressed.  What enables us to do this is a key novelty in our approach: our framework allows us to show that if on training data, the interactions between the weight matrices satisfy certain conditions that imply a wide training loss minimum, these conditions themselves {\em generalize} to the interactions between the matrices on test data, thereby implying a wide test loss minimum. We then apply our general framework in a setup where we assume that the pre-activation values of the network are not too small (although we assume this only on the training data). In this setup, we provide a generalization guarantee for the original (deterministic, uncompressed) network, that does not scale with product of the spectral norms of the weight matrices -- a guarantee that would not have been possible with prior approaches.


\end{abstract}

\section{Introduction}

Modern deep neural networks contain millions of parameters and are trained on
relatively few samples. Conventional wisdom in machine learning suggests that such models should massively overfit on the training data, as these models have the capacity to memorize even a randomly labeled dataset of similar
size \citep{zhang17generalization,neyshabur15inductive}. Yet these models have achieved state-of-the-art generalization error on many real-world tasks. This observation has spurred an active line of research \citep{soudry18sgd,brutzkus18sgd,li18learning} that has tried to understand what properties are possessed by stochastic gradient descent (SGD) training of deep networks that allows these networks to generalize well. 

One particularly promising line of work in this area \citep{neyshabur17exploring,arora18compression} has been bounds that utilize the {\em noise-resilience} of deep networks on training data i.e., how much the  training loss of the network changes with noise injected into the parameters, or roughly, how wide is the training loss minimum. While these have yielded generalization bounds that do not have a severe exponential dependence on depth (unlike other bounds that grow with the product of spectral norms of the weight matrices), these bounds are quite limited: they either apply to a \emph{stochastic} version of the classifier (where the parameters are drawn from a distribution) or a \emph{compressed} version of the classifier (where the parameters are modified and represented using fewer bits). 


In this paper, we revisit the PAC-Bayesian analysis of deep networks in \cite{neyshabur17exploring,neyshabur18pacbayes} and provide a general framework that allows one to use noise-resilience of the deep network on training data to provide a bound on the original deterministic and uncompressed network. We achieve this by arguing that if on the training data, the interaction between the `activated weight matrices' (weight matrices where the weights incoming from/outgoing to inactive units are zeroed out)  satisfy certain conditions which results in a wide training loss minimum, these conditions 
themselves generalize to the weight matrix interactions on the test data. 

After presenting this general PAC-Bayesian framework, we specialize it to the case of deep ReLU networks, showing that we can provide a generalization bound that accomplishes two goals simultaneously: i) it applies {to the original network} and ii)
it does not scale exponentially with depth in terms of the products of the spectral norms of the weight matrices; instead our bound scales with more meaningful terms that capture the interactions between the weight matrices and do not have such a severe dependence on depth in practice. 
 We note that all but one of these terms are indeed quite small on networks in practice.  However, one particularly (empirically) large term that we use is the reciprocal of the magnitude of the network pre-activations on the training data (and so our bound would be small only in the scenario where the pre-activations are not too small).  We emphasize that this drawback is more of a limitation in how we characterize noise-resilience through the specific conditions we chose for the ReLU network, rather than a drawback in our PAC-Bayesian framework itself.  Our hope is that, since our technique is quite general and flexible, by carefully identifying the right set of conditions, in the future, one might be able to derive a similar generalization guarantee that is smaller in practice.

\section{Background and related work}

One of the most important aspects of the generalization puzzle that has been
studied is that of the flatness/width of the training loss at the minimum found by SGD. The general understanding is that flatter minima are correlated with better generalization
behavior, and this should somehow help explain the generalization behavior
\citep{hochreiter97flat,hinton93mdl,keskar17largebatch}.  Flatness of the training loss minimum is also correlated with the observation that on training data, adding noise to the parameters of the network results only in little change in the output of the network -- or in other words, the network is noise-resilient.  Deep networks are known to be similarly resilient to noise injected into the inputs \citep{novak2018sensitivity}; but note that our theoretical analysis relies on resilience to parameter perturbations.

While some progress has been made in understanding the convergence and generalization behavior of SGD training of simple models like two-layered hidden neural networks under simple data distributions \citep{neyshabur15inductive,soudry18sgd,brutzkus18sgd,li18learning}, all known generalization guarantees for SGD on deeper networks -- through analyses that do not use noise-resilience properties of the networks -- have strong exponential dependence on depth. In particular, these bounds scale either with the product of the spectral norms  of the weight matrices \citep{neyshabur18pacbayes,bartlett17spectral}  or their Frobenius norms \citep{golowich17size}. In practice, the weight matrices have a spectral norm that is as large as $2$ or $3$, and an even larger Frobenius norm that scales with $\sqrt{H}$ where $H$ is the width of the network i.e., maximum number of hidden units per layer. \footnote{To understand why these values are of this order in magnitude, consider the initial matrix that is randomly initialized with independent entries with variance $1/\sqrt{H}$. It can be shown that the spectral norm of this matrix, with high probability, lies near its expected value, near $2$ and the Frobenius norm near its expected value which is $\sqrt{H}$. Since SGD is observed not to move too far away from the initialization regardless of $H$ \citep{nagarajan17role}, these values are more or less preserved for the final weight matrices.}  Thus, the generalization bound scales as say, $2^D$ or $H^{D/2}$, where $D$ is the depth of the network. 

At a high level, the reason these bounds suffer from such an exponential dependence on depth is that they effectively perform a worst case approximation of how the weight matrices interact with each other. For example, the product of the spectral norms arises from a naive approximation of the Lipschitz constant of the neural network, which would hold only when the singular values of the weight matrices all align with each other. However, in practice, for most inputs to the network, the interactions between the activated weight matrices are not as adverse.

By using noise-resilience of the networks, prior approaches \citep{arora18compression,neyshabur17exploring} have been able to derive bounds that replace the above worst-case approximation with smaller terms that realistically capture these interactions. However, these works are limited in critical ways. \citet{arora18compression} use noise-resilience of the network to \textit{modify} and ``compress'' the parameter representation of the network, and derive a generalization bound on the compressed network. While this bound enjoys a better dependence on depth because its applies to a compressed network, the main drawback of this bound is that it does not apply on the original network. On the other hand, \citet{neyshabur17exploring} take advantage of noise-resilience on training data by incorporating it within a
 PAC-Bayesian generalization bound \citep{allester99pacbayes}. However, their final guarantee is only a bound on the expected test loss of a stochastic network.

In this work, we revisit the idea in \citet{neyshabur17exploring}, by pursuing the PAC-Bayesian framework \citep{allester99pacbayes} to answer this question. 
The standard PAC-Bayesian framework provides generalization bounds for the expected loss of a stochastic classifier, where the stochasticity typically corresponds to Gaussian noise injected into the parameters output by the learning algorithm. However, if the classifier is noise-resilient on both training and test data, one could extend the PAC-Bayesian bound to a standard generalization guarantee on the deterministic classifier.

Other works have used PAC-Bayesian bounds in different ways in the context of neural networks. \citet{langford01not,dziugaite17nonvacuous} optimize the stochasticity and/or the weights of the network in order to {\em numerically} compute good (i.e., non-vacuous) generalization bounds on the stochastic network. \citet{neyshabur18pacbayes}  derive generalization bounds on the original, deterministic network by working from the PAC-Bayesian bound on the stochastic network. However, as stated earlier, their work does not make use of noise resilience in the networks learned by SGD. 


\subparagraph{Our Contributions}  

The key contribution in our work is a general PAC-Bayesian framework for deriving generalization bounds while leveraging the noise resilience of a deep network.
While our approach is applied to deep networks, we note that it is general enough to be applied to other classifiers. 

In our framework, we consider a set of conditions that when satisfied by the network, makes the output of the network noise-resilient at a particular input datapoint. For example, these conditions could characterize the interactions between the activated weight matrices at a particular input. To provide a generalization guarantee, we assume that the learning algorithm has found weights such that these conditions hold for the weight interactions in the network {\em on training data} (which effectively implies a wide training loss minimum). Then, as a key step, we {\em generalize} these conditions over to the weight interactions on test data (which effectively implies a wide test loss minimum) \footnote{Note that we can not directly assume these conditions to hold on test data, as that would be `cheating' from the perspective of a generalization guarantee.}. Thus, with the guarantee that the classifier is noise-resilient both on training and test data, we derive a generalization bound on the test loss of the original network. 


Finally, we apply our framework to a specific set up of ReLU based feedforward networks. In particular, we first instantiate the above abstract framework with a set of specific conditions, and then use the above framework to derive a bound on the original network. While very similar conditions have already been identified in prior work \citep{arora18compression,neyshabur17exploring}  (see Appendix~\ref{app:comparison} for an extensive discussion of this), our contribution here is in showing how these conditions generalize from training to test data. Crucially, like these works, our bound does not have severe exponential dependence on depth in terms of products of spectral norms. 

We note that in reality, all but one of our conditions on the network do hold on training data as necessitated by the framework. The strong, non-realistic condition we make is that the pre-activation values of the network are sufficiently large, although only on training data; however, in practice a small proportion of the pre-activation values can be arbitrarily small.  Our generalization bound scales inversely with the smallest absolute value of the pre-activations on the training data, and hence in practice, our bound would be large.

Intuitively, we make this assumption to ensure that under sufficiently small parameter perturbations, the activation states of the units are guaranteed not to flip.  It is worth noting that 
\citet{arora18compression,neyshabur17exploring} too
require similar, but more realistic assumptions about pre-activation values that effectively assume only a small proportion of units flip under noise. However, even under our stronger condition that no such units exist, it is not apparent how these approaches would yield a similar bound on the deterministic, uncompressed network without generalizing their conditions to test data. We hope that in the future our work could be developed further to accommodate the more realistic conditions from \citet{arora18compression,neyshabur17exploring}.



\section{A general PAC-Bayesian framework}
\label{sec:pac-bayesian}

In this section, we present our general PAC-Bayesian framework that uses noise-resilience of the network to convert a PAC-Bayesian generalization bound on the stochastic classifier to a generalization bound on the deterministic classifier. 



\subparagraph{Notation.}  Let $KL(\cdot \| \cdot)$ denote the KL-divergence. Let $\elltwo{\cdot},\elltwo{\cdot}_{\infty} $  denote the $\ell_2$ norm and the $\ell_\infty$  norms of a vector, respectively. Let  $\spec{\cdot}, \frob{\cdot}, \matrixnorm{\cdot}{2,\infty}$ denote the spectral norm, Frobenius norm and maximum row $\ell_2$ norm of a matrix, respectively. Consider a $K$-class learning task where the labeled datapoints $(\vec{x},y)$ are drawn from an underlying distribution $\D$  over $\mathcal{X} \times \{1, 2, \hdots, K\}$ where $\mathcal{X} \in \mathbb{R}^N$.  We consider a classifier parametrized by weights $\W$. For a given input $\vec{x}$ and class $k$, we denote the output of the classifier by $\nn{\W}{}{k}{\vec{x}}$. 
  In our PAC-Bayesian analysis, we will use $\U \sim \mathcal{N}(0,\sigma^2)$ to denote parameters whose entries are sampled independently from a Gaussian, and $\W+\U$ to denote the entrywise addition of the two sets of parameters.  We use $\frob{\W}^2$ to denote $\sum_{d=1}^{D} \frob{W_d}^2$.  Given a training set $S$ of $m$ samples, we let $(\vec{x},y) \sim S$ to denote uniform sampling from the set. Finally, for any $\gamma > 0$, let $\mathcal{L}_{\gamma}(\nn{\W}{}{}{\vec{x}},y)$ denote a margin-based loss such that the loss is $0$ only when ${\nn{\W}{}{}{\vec{x}}[y] \geq \max_{j\neq y} \nn{\W}{}{}{\vec{x}}[j]} + \gamma$, and $1$ otherwise. Note that $\mathcal{L}_{0}$ corresponds to 0-1 error. See Appendix~\ref{app:notations} for more notations.
 
\subparagraph{Traditional PAC-Bayesian bounds.}
The PAC-Bayesian framework \citep{allester99pacbayes,mcallester99model} allows us to derive generalization bounds for a {\em stochastic} classifier. Specifically, let $\tilde{\W}$ be a random variable in the parameter space whose distribution is learned based on training data $S$. Let $P$ be a prior distribution in the parameter space chosen independent of the training data. 
 The PAC-Bayesian framework yields the following generalization bound on the 0-1 error of the stochastic classifier that holds with probability $1-\delta$ over the draw of the training set $S$ of $m$ samples\footnote{We use $\tilde{\mathcal{O}}(\cdot)$ to hide logarithmic factors.}:
\[
\mathbb{E}_{\tilde{\W}}[\mathbb{E}_{(\vec{x},y) \sim \D} [\mathcal{L}_0(\nn{\tilde{\W}}{}{}{\vec{x}},y)]] \leq \mathbb{E}_{\tilde{\W}} [\mathbb{E}_{(\vec{x},y) \sim S} [\mathcal{L}_0(\nn{\tilde{\W}}{}{}{\vec{x}},y)] ]+ \logoh{\sqrt{KL(\tilde{\W} \| P)/m}}
\]



Typically, and in the rest of this discussion, $\tilde{\W}$ is a Gaussian with covariance $\sigma^2 I$ for some $\sigma > 0$ centered at the weights $\W$ learned based on the training data. Furthermore, we will set $P$ to be a Gaussian with covariance $\sigma^2 I$ centered at the random initialization of the network like in \cite{dziugaite17nonvacuous}, instead of at the origin, like in \cite{neyshabur18pacbayes}. This is because the resulting KL-divergence -- which depends on the distance between the means of the prior and the posterior -- is known to be smaller, and to save a $\sqrt{H}$ factor in the bound \citep{nagarajan17role}. 

\subsection{Our framework}
\label{sec:framework}

To extend the above PAC-Bayesian bound to a standard generalization bound on a deterministic classifier $\W$, we need to replace the training and the test  loss of the stochastic classifier  with that of the original, deterministic classifier. However, in doing so, we will have to introduce extra terms in the upper bound to account for the perturbation suffered by the train and test loss under the Gaussian perturbation of the parameters. To tightly bound these two terms,  we need that the network is noise-resilient on training and test data respectively.
Our hope is that if the learning algorithm has found weights such that the network is noise-resilient on the training data, we can then generalize this noise-resilience over to test data as well, allowing us to better bound the excess terms. 

We now discuss how noise-resilience is formalized in our framework through certain conditions on the weight matrices. Much of our discussion below is dedicated to how these conditions must be designed, as these details carry the key ideas behind how  noise-resilience can be generalized from training to test data. We then present our main generalization bound and some intuition about our proof technique.



\subparagraph{Input-dependent properties of weights} Recall that, at a high level, the noise-resilience of a network corresponds to how little the network reacts to random parameter perturbations. Naturally, this would vary depending on the input. Hence, in our framework, we will analyze the noise-resilience of the network as a function of a given input. Specifically, we will characterize noise-resilience through conditions on how the weights of the model interact with each other for a given input. For example, in the next section, we will consider conditions of the form ``the preactivation values of the hidden units in layer $d$, have magnitude larger than some small positive constant''.  The idea is that when these conditions involving the weights and the input  are satisfied, if we add noise to the weights, the output of the classifier for that input will provably suffer only little perturbation.  We will more generally refer to each scalar quantity involved in these conditions, such as each of the pre-activation values, as an {\em input-dependent property of the weights}. 

We will now formulate these input-dependent properties and the conditions on them, for a generic classifier, and in the next section, we will see how they can be instantiated in the case of deep networks. Consider a classifier for which we can define $R$ different conditions, which when satisfied on a given input, will help us guarantee the classifier's noise-resilience at that input i.e., 
bound the output perturbation under random parameter perturbations (to get an idea of what $R$ corresponds to, in the case of deep networks, we will have a condition for each layer, and so $R$ will scale with depth). In particular, let the $r$th condition be a bound involving a particular set of input-dependent properties of the weights denoted by $\{ \rho_{r,1}(\W, \vec{x},y), \rho_{r,2}(\W, \vec{x},y), \hdots, \}$ -- here, each element $\rho_{r,l}(\W, \vec{x},y)$ is a scalar value that  depends on the weights and the input, just like pre-activation values\footnote{As we will see in the next section, most of these properties depend on only the unlabeled input $\vec{x}$ and not on $y$. But for the sake of convenience, we include $y$ in the formulation of the input-dependent property, and use the word input to refer to $\vec{x}$ or $(\vec{x},y)$ depending on the context}.  Note that here the first subscript $l$ is the index of the element in the set, and the second subscript $r$ is the index of the set itself.  Now for each of these properties, we will define a corresponding set of \textit{positive} constants (that are independent of $\W, \vec{x}$ and $y$), denoted by $\{ \trainmargin_{r,1}, \trainmargin_{r,2}, \hdots \}$, which we will use to specify our conditions. In particular,  we say that the weights $\W$ satisfy the $r$th \textit{condition} on the input $(\vec{x},y)$ if\footnote{When we say $\forall l$ below, we refer to the set of all possible indices $l$ in the $r$th set, noting that different sets may have different cardinality.}:
\begin{equation}
\forall l,  \; \rho_{r,l}(\W,\vec{x},y) >  \trainmargin_{r,l}
\label{eq:condition}
\end{equation}
For convenience, we also define an additional $R+1$th set to be the singleton set containing the \textit{margin} of the classifier on the input: $\nn{\W}{}{}{\vec{x}}[y]- \max_{j\neq y} \nn{\W}{}{}{\vec{x}}[j]$. Note that if this term is positive (negative) then the classification is (in)correct. We will also denote the constant $\trainmargin_{R+1,1}$ as $\classmargin$.


%
%

\subparagraph{Ordering of the sets of properties} We now impose a crucial constraint on how these sets of properties depend on each other. 
 Roughly speaking, we want that for a given input, {\em if the first $r-1$ sets of properties approximately satisfy the condition in Equation~\ref{eq:condition}, then the properties in the $r$th set are noise-resilient} i.e., under random parameter perturbations, these properties do not suffer much perturbation. This kind of constraint would naturally hold for deep networks if we have chosen the properties carefully e.g., we will show that, for any given input, the perturbation in the pre-activation values of the $d$th layer  is small
 as long as the absolute pre-activation values in the layers below $d-1$ are large, and a few other norm-bounds on the lower layer weights are satisfied.

We formalize the above requirement by defining expressions $\Delta_{r,l}(\sigma)$ that bound the perturbation in the properties $\rho_{r,l}$, in terms of the variance $\sigma^2$ of the parameter perturbations. For any $r \leq R+1$ and for any  $(\vec{x},y)$, our framework requires the following to hold: 
\vspace{-0.05cm}
\begin{align*}
& \text{if } \forall q < r, \forall l, \rho_{q,l}(\W,\vec{x},y) > 0 \text{ then} \\
& Pr_{\U \sim \N(0,\sigma^2 I)} \Big[ 
{
\forall l  \; |\rho_{r,l}(\W+\U,{\vec{x},y}) - \rho_{r,l}(\W,{\vec{x},y}) | > \frac{\Delta_{r,l}(\sigma)}{2}
 }
 \; \; \text{ and } \\
& \; \; \; \; \; \; \; \; \; \; \; \; \; \; \; { \forall q {<} r,  \forall l \; \;  |\rho_{q,l}(\W+\U,{\vec{x},y}) - \rho_{q,l}(\W,{\vec{x},y}) |{<} \frac{\Delta_{q,l}(\sigma)}{2} } \Big]  \leq \frac{1}{(R+1)\sqrt{m}}. \numberthis \label{eq:generic-noise-resilience}
\end{align*}

Let us unpack the above constraint. First, although the above constraint must hold for all inputs $(\vec{x},y)$, it effectively applies only to those inputs that satisfy the pre-condition of the if-then statement: namely, it applies
 only to inputs $(\vec{x},y)$ that approximately satisfy the first $r-1$ conditions in Equation~\ref{eq:condition} in that $\rho_{q,l}(\W,\vec{x},y) > 0$ (instead of $\rho_{q,l}(\W,\vec{x},y) > \trainmargin_{q,l}$).  Next, we discuss the second part of the above if-then statement which specifies a probability term that is required to be small for all such inputs. In words, the first event within the probability term above is the event that for a given random perturbation $\U$, the properties involved in the $r$th condition 
suffer a large perturbation. The second is the event that the properties involved in the first $r-1$ conditions do \textit{not} suffer much perturbation; but, given that these $r-1$ conditions already hold approximately, this second event implies that these conditions are still preserved approximately under perturbation. In summary, our constraint requires the following:  for any input on which the first $r-1$ conditions hold, there should be very few parameter perturbations that significantly perturb the $r$th set of properties while preserving the first $r-1$ conditions. When we instantiate the framework, we have to derive closed form expressions for the perturbation bounds $\Delta_{r,l}(\sigma)$ (in terms of only  $\sigma$ and the constants $\trainmargin_{r,l}$). As we will see, for ReLU networks, we will choose the properties in a way that this constraint naturally falls into place in a way that the perturbation bounds $\Delta_{r,l}(\sigma)$ do not grow with the product of spectral norms (Lemma~\ref{lem:noise-resilience-induction}).

\subparagraph{Theorem Statement} In this setup, we have the following `margin-based' generalization guarantee on the original network. That is, we bound the 0-1 test error of the network by a margin-based error on the training data. Our generalization guarantee, which scales linearly with the number of conditions $R$,  holds under the setting that the training algorithm always finds weights such that on the training data, the conditions in Equation~\ref{eq:condition}  is  satisfied for all $r = 1, \hdots, R$.

\begin{theorem}
\label{thm:generic-generalization} 
Let $\sigma^*$ be the maximum standard deviation of the Gaussian parameter perturbation such that the constraint in Equation~\ref{eq:generic-noise-resilience} holds with $\Delta_{r,l}(\sigma^\star) \leq \Delta^{\star}_{r,l}$ $\forall r \leq R+1$ and $\forall l$. 
Then, for any $\delta > 0$, with probability $1-\delta$ over the draw of samples $S$ from $\D^m$, for any $\W$ we have that, if $\W$ satisfies the conditions in Equation~\ref{eq:condition}  for all $r \leq R$ and for all training examples $(\vec{x},y) \in S$, then
\begin{align*} \prsub{(\vec{x},y) \sim \D}{ \mathcal{L}_{0}(\nn{\W}{}{}{\vec{x}},y)}  \leq &  \prsub{(\vec{x},y) \sim S}{ \mathcal{L}_{\classmargin}(\nn{\W}{}{}{\vec{x}},y)} \\
& + \tilde{\mathcal{O}} \left({R} \sqrt{\frac{2 KL(\N(\W, (\sigma^\star)^2 I) \| P)+ \ln \frac{2mR}{\delta}}{m-1} }  \right)
\end{align*}
\end{theorem}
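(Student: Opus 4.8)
The plan is to route the desired deterministic bound through the Gaussian-perturbed classifier, with posterior $Q=\N(\W,(\sigma^\star)^2 I)$ and data-independent prior $P$, and to use the constraint in Equation~\ref{eq:generic-noise-resilience} to control the discrepancy between deterministic and stochastic quantities on \emph{both} training and test data. The first step is to turn Equation~\ref{eq:generic-noise-resilience} into a single pointwise noise-resilience statement: for any $(\vec{x},y)$ on which $\W$ satisfies all $R$ conditions at level $0$, i.e. $\rho_{r,l}(\W,\vec{x},y)>0$ for every $r\le R$ and $l$, I claim that
\[
\prsub{\U\sim\N(0,(\sigma^\star)^2 I)}{\exists r\le R+1,\ l:\ |\rho_{r,l}(\W+\U,\vec{x},y)-\rho_{r,l}(\W,\vec{x},y)|>\tfrac{\Delta^\star_{r,l}}{2}}\le \tfrac{1}{\sqrt m}.
\]
The device is to decompose this failure event by the \emph{first} index $r$ at which some property is perturbed by more than $\Delta^\star_{r,l}/2$; on that sub-event the first $r-1$ sets are all perturbed by less than $\Delta^\star_{q,l}/2$, which is precisely the event whose probability Equation~\ref{eq:generic-noise-resilience} bounds by $1/((R+1)\sqrt m)$ (its precondition $\rho_{q,l}(\W)>0$ holding by assumption), so summing over the $R+1$ values of $r$ gives $1/\sqrt m$. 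This applies verbatim to every training point, since there $\W$ satisfies all conditions at full margin $\trainmargin_{r,l}>0$.

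Next I would set up the PAC-Bayesian comparison for the margin itself. Define a combined stochastic loss
\[
\hat{\mathcal L}(\W+\U,\vec{x},y)=\mathbf 1\Big[\big(\exists r\le R,\ l:\ \rho_{r,l}(\W+\U,\vec{x},y)\le \trainmargin_{r,l}-\tfrac{\Delta^\star_{r,l}}{2}\big)\ \text{ or }\ \mathcal L_{\classmargin/2}(\nn{\W+\U}{}{}{\vec{x}},y)=1\Big],
\]
where the intermediate thresholds $\trainmargin_{r,l}-\Delta^\star_{r,l}/2$ (lying in $(0,\trainmargin_{r,l})$ once $\Delta^\star_{r,l}\le\trainmargin_{r,l}$) are chosen so that a $\Delta^\star_{r,l}/2$-perturbation can move neither a deterministically-satisfied property ($>\trainmargin_{r,l}$) below them nor a deterministically-failed one ($\le 0$) above them. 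On the test side, for a point misclassified by $\W$ that \emph{does} satisfy all $R$ conditions at level $0$, the noise-resilience statement forces $\hat{\mathcal L}=1$ with probability $\ge 1-1/\sqrt m$ over $\U$, while points where some condition fails at level $0$ are charged separately; this yields, pointwise and then over $\D$,
\[
\prsub{\D}{\mathcal L_0(\nn{\W}{}{}{\vec{x}},y)}\le \big(1+\tfrac{2}{\sqrt m}\big)\,\E_{\U}\big[\Esub{\D}{\hat{\mathcal L}}\big]+\prsub{\D}{\exists r\le R,\ l:\ \rho_{r,l}(\W,\vec{x},y)\le 0}.
\]
On the training side the same noise-resilience (valid on all of $S$) gives $\E_\U[\Esub S{\hat{\mathcal L}}]\le \prsub S{\mathcal L_{\classmargin}(\nn{\W}{}{}{\vec{x}},y)}+1/\sqrt m$, and the standard PAC-Bayesian bound applied to the $[0,1]$-valued map $(\vec{x},y)\mapsto \E_{\U}[\hat{\mathcal L}]$ controls $\E_\U[\Esub\D{\hat{\mathcal L}}]$ by $\E_\U[\Esub S{\hat{\mathcal L}}]$ plus the $KL$ term.

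The crux is the remaining term $\prsub{\D}{\exists r\le R,\ l:\ \rho_{r,l}(\W,\vec{x},y)\le 0}$, i.e. generalizing the conditions from train to test. Here I would decompose the test failure event by the \emph{first} violated condition: let $B_r$ be the event that $\W$ satisfies conditions $1,\dots,r-1$ at level $0$ but violates condition $r$ at level $0$, so the event is $\bigcup_{r\le R}B_r$. On $B_r$ the first $r-1$ conditions hold, so the argument of the first step (union-bounding Equation~\ref{eq:generic-noise-resilience} over levels $1,\dots,r$) gives noise-resilience of property $r$ with probability $\ge 1-1/\sqrt m$; hence if $\rho_{r,l}(\W)\le 0$ then $\rho_{r,l}(\W+\U)\le \Delta^\star_{r,l}/2\le \trainmargin_{r,l}-\Delta^\star_{r,l}/2$, so the per-condition stochastic loss $\hat\ell_r(\W+\U,\vec{x},y)=\mathbf 1[\exists l:\rho_{r,l}(\W+\U,\vec{x},y)\le \trainmargin_{r,l}-\Delta^\star_{r,l}/2]$ equals $1$, giving $\prsub\D{B_r}\le 2\,\E_\U[\Esub\D{\hat\ell_r}]$. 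A PAC-Bayesian bound on $\hat\ell_r$, together with $\E_\U[\Esub S{\hat\ell_r}]\le 1/\sqrt m$ (on training data the conditions hold at full margin, so noise-resilience forces $\hat\ell_r=0$), shows each $\prsub\D{B_r}$ is at most the $KL$ term plus $O(1/\sqrt m)$. Summing the $R$ events $B_r$ produces the factor $R$ in front of the square root, and union-bounding the $R+1$ PAC-Bayesian applications at confidence $\delta/(R+1)$ produces the $\ln\frac{2mR}{\delta}$ inside it; collecting the margin comparison, the training bounds, and this term reproduces the claimed inequality.

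The hard part, conceptually, is this third step: one cannot assert the conditions on test data, and pointwise noise-resilience of property $r$ is available only \emph{after} the first $r-1$ conditions are known to hold there, so the generalization must be organized by the first violated condition so that each PAC-Bayesian comparison runs in a regime where the needed noise-resilience is licensed by Equation~\ref{eq:generic-noise-resilience}. The reason this closes rather than spirals is purely quantitative: the per-condition budget $1/((R+1)\sqrt m)$ is small enough that, after the union bound over $R+1$ conditions, the noise-resilience failure probability $1/\sqrt m$ is dominated by the $\logoh{\sqrt{KL/m}}$ PAC-Bayesian slack and can be absorbed. Two routine loose ends remain: ensuring $\Delta^\star_{r,l}\le \trainmargin_{r,l}$ so the intermediate thresholds are well placed (built into the choice of $\sigma^\star$), and the data-dependence of $\sigma^\star$, which is handled as in prior PAC-Bayesian analyses by a union bound over a grid of candidate $\sigma$ values folded into the hidden logarithmic factors.
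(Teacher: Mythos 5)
Your proposal is correct and follows essentially the same route as the paper: you convert Equation~\ref{eq:generic-noise-resilience} into a pointwise noise-resilience guarantee via the same decomposition over the first index at which a property is badly perturbed, pass between the deterministic and Gaussian-perturbed network using half-margin stochastic losses (which is exactly the content of the paper's Theorem~\ref{thm:pac-bayesian}), and generalize the conditions to test data one at a time with $R+1$ PAC-Bayesian applications union-bounded at confidence $\delta/(R+1)$, yielding the factor $R$ and the $\ln\frac{2mR}{\delta}$ term. The only difference is organizational: the paper proves a telescoping recursion by applying its abstract conversion theorem to the union of the first $r$ property sets, charging the non-noise-resilient test mass $\mu_{\D}$ at stage $r$ to failures of the earlier conditions, whereas you unroll this into a disjoint decomposition by the \emph{first} violated condition with a per-condition stochastic loss $\hat\ell_r$ -- the same argument in unrolled form (your remaining caveats are benign, since $\Delta^{\star}_{r,l}$ \emph{is} the margin in Equation~\ref{eq:condition}, and $\sigma^\star$ in this abstract theorem is fixed by the data-independent constants, so no grid over $\sigma$ is needed here).
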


The crux of our proof (in Appendix~\ref{app:abstract}) lies in generalizing the conditions of Equation~\ref{eq:condition} satisfied on the training data to test data {\em one after the other}, by proving that they are noise-resilient on both training and test data. Crucially, after we generalize the first $r-1$ conditions from training data to test data (i.e., on most test and training data, the $r-1$ conditions are satisfied), we will have from Equation~\ref{eq:generic-noise-resilience} that the $r$th set of properties are noise-resilient on both training and test data. Using the noise-resilience of the $r$th set of properties on test/train data, we can generalize even the $r$th condition to test data.


We emphasize a key, fundamental tool that we present in Theorem~\ref{thm:pac-bayesian} to convert a generic PAC-Bayesian bound on a stochastic classifier, to a generalization bound on the deterministic classifier. Our technique is at a high level similar to approaches in \cite{london16stability,mcallester03simplified}. In Section~\ref{app:advantage}, we argue how this technique is more powerful than other approaches in \citet{neyshabur18pacbayes,langford02pacbayes,herbrich00pac} in  
leveraging the noise-resilience of a classifier.  The high level argument is that, to convert the PAC-Bayesian bound, these latter works relied on a looser output perturbation bound, one that holds on all possible inputs, with high probability over all perturbations i.e., a bound on $\max_{\vec{x}} \ellinfty{\nn{\W}{}{}{\vec{x}}- \nn{\W+\U}{}{}{\vec{x}}}$ w.h.p over draws of $\U$. In contrast, our technique relies on a subtly different but significantly tighter bound: a bound on the output perturbation that holds with high probability {\em given an input} i.e., a bound on $\ellinfty{\nn{\W}{}{}{\vec{x}}- \nn{\W+\U}{}{}{\vec{x}}}$ w.h.p over draws of $\U$ for each $\vec{x}$. When we do instantiate our framework as in the next section, this subtle difference is critical in being able to bound the output perturbation without suffering from a factor proportional to the product of the spectral norms of the weight matrices (which is the case in \cite{neyshabur18pacbayes}).

\section{Application of our framework to ReLU Networks}
\label{sec:application}

 \subparagraph{Notation.}
In this section, we apply our framework to feedforward fully connected ReLU networks of depth $D$ (we care about $D > 2$) and width $H$ (which we will assume is larger than the input dimensionality $N$, to simplify our proofs) and derive a generalization bound on the original network that does not scale with the product of spectral norms of the weight matrices.  Let  $\relu{\cdot}$  denote the ReLU activation. We consider a network parameterized by $\mathcal{W} = (W_1, W_2, \hdots, W_D)$ such that the output of the network is computed as $\nn{\W}{}{}{\vec{x}} = W_D \relu{W_{D-1} \hdots \relu{W_1 \vec{x}}}$.  We denote the value of the $h$th hidden unit on the $d$th layer before and after the activation by $\prenn{\W}{d}{h}{\vec{x}}$ and $\nn{\W}{d}{h}{\vec{x}}$ respectively. We define $\jacobian{\W}{d'}{d}{}{\vec{x}}:={\partial \prenn{\W}{d}{}{\vec{x}}}/{ \partial \prenn{\W}{d'}{}{\vec{x}}}$ to be the Jacobian of the pre-activations of layer $d$ with respect to the pre-activations of layer $d'$ for $d' \leq d$ (each row in this Jacobian corresponds to a unit in layer $d$). In short, we will call this, Jacobian $\verbaljacobian{d'}{d}$. Let $\Z$ denote the random initialization of the network.

Informally, we consider a setting where the learning algorithm satisfies the following conditions on the training data that make it noise-resilient on training data:  a) the $\ell_2$ norm of the hidden layers are all  small, b) the pre-activation values are all sufficiently large in magnitude, c) the Jacobian of any layer with respect to a lower layer, has rows with a small $\ell_2$ norm, and has a small spectral norm. We cast these conditions in the form of Equation~\ref{eq:condition} by appropriately defining the properties $\rho$'s and the margins $\Delta^\star$'s in the general framework. We note that these properties are quite similar to those already explored in \cite{arora18compression,neyshabur17exploring}; we provide more intuition about these properties, and how we cast them in our framework in Appendix~\ref{app:noise-resilience-overview}.

Having defined these properties, we first prove in Lemma~\ref{lem:noise-resilience-induction} in Appendix~\ref{app:noise-resilience} a guarantee equivalent to the abstract inequality in Equation~\ref{eq:generic-noise-resilience}. Essentially, we show that under random perturbations of the parameters, the perturbation in the output of the network and the perturbation in the input-dependent properties involved in (a), (b), (c) themselves can all be bounded in terms of each other. Crucially, these perturbation bounds do not grow with the spectral norms of the network. 

Having instantiated the framework as above, we then instantiate the bound provided by the framework. Our generalization bound scales with the bounds on the properties in (a) and (c) above as satisfied on the training data, and with the reciprocal of the property in (b) i.e., the smallest absolute value of the pre-activations on the training data. Additionally, our bound has an explicit dependence on the depth of the network, which arises from the fact that we generalize $R=\mathcal{O}(D)$ conditions. Most importantly,
our bound does not have a dependence on the product of the spectral norms of the weight matrices.


\begin{theorem}
\label{thm:generalization}
 (\textbf{shorter version}; see Appendix~\ref{app:generalization} for the complete statement) 
 For any margin $\classmargin > 0$, and any $\delta> 0$, 
 with probability $1-\delta$ over the draw of samples from $\D^m$, for any $\W$, we have that:
\begin{align*}
\prsub{(\vec{x},y) \sim \D}{ \mathcal{L}_{0}(\nn{\W}{}{}{\vec{x}},y)}  \leq &  \prsub{(\vec{x},y) \sim S}{ \mathcal{L}_{\classmargin}(\nn{\W}{}{}{\vec{x}},y)} + \tilde{ \mathcal{O}}\left({D}\sqrt{{ \|\W-\Z\|_F^2 }/{( (\sigma^\star)^2} m) }\right) 
\end{align*}
Here ${1}/{\sigma^\star}$ equals  $\tilde{\mathcal{O}}(\sqrt{H} \max \{  \boundoutput, \boundhiddenpreact, \boundoutputpreact, \boundjacob, \boundspec\})$, where

\begin{align*}
&\boundoutput := \bigoh{  \max_{1 \leq d < D} \frac{\sum_{d'=1}^{d} {\trainjacob{d'}{d}} \trainoutput{d'-1} }{ \trainoutput{d} } },
\boundhiddenpreact  := \bigoh{ \max_{1 \leq d < D}
\frac{\sum_{d'=1}^{d}  \trainjacob{d'}{d} \trainoutput{d'-1} }{    \sqrt{H} \trainpreact{d} }
}  \\
 &\boundoutputpreact  := \bigoh{ 
\frac{\sum_{d=1}^{D} \trainjacob{d}{D} \trainoutput{d-1} }{   \sqrt{H} \classmargin }
}  ,
\boundjacob :=  \bigoh{ \max_{1 \leq d' < d < D}  \frac{{\trainjacob{d'}{d-1} } + \matrixnorm{W_d}{2,\infty} \sum_{d''=d'+1}^{d-1} \trainspec{d''}{d-1}  \trainjacob{d'}{d''-1}}{  \trainjacob{d'}{d} } } &  \\
 &\boundspec  :=  \bigoh{ \max_{1 \leq d' < d < D}  \frac{ \trainspec{d'}{d''-1} + \spec{W_d} \sum_{d''=d'+1}^{d-1} \trainspec{d''}{d-1} \trainjacob{d'}{d''-1}  }{  \trainspec{d'}{d} } } 
\end{align*}
where, the terms $\trainoutput{d}, \trainpreact{d}$ etc., are norm-bounds that hold on all training data $(\vec{x},y) \in S$ as follows: 
 $\trainoutput{d} \geq  \elltwo{\nn{\W}{d}{}{\vec{x}}}$ (an upper bound on the $\ell_2$ norm of each hidden layer output), $\trainpreact{d} \leq \min_h \ellone{\nn{\W}{d}{h}{\vec{x}}}$ (a lower bound on the absolute values of the pre-activations for each layer),  
$\trainjacob{d'}{d} \geq  \matrixnorm{\jacobian{\W}{d'}{d}{}{\vec{x}}}{2,\infty}$ (an upper bound on the row $\ell_2$ norms of the Jacobian for each layer), $\trainspec{d'}{d} \geq  \spec{\jacobian{\W}{d'}{d}{}{\vec{x}}}$ (an upper bound on the spectral norm of the Jacobian for each layer).
\end{theorem}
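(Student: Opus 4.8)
The plan is to instantiate the abstract framework of Theorem~\ref{thm:generic-generalization} with explicit input-dependent properties for the ReLU network, verify the noise-resilience constraint of Equation~\ref{eq:generic-noise-resilience} for these properties (this is the content of Lemma~\ref{lem:noise-resilience-induction}), solve for the largest admissible perturbation scale $\sigma^\star$, and substitute into the generic bound. For each layer $d$ I would introduce four families of properties matching the four norm-bounds in the statement: the hidden-layer output norm $\elltwo{\nn{\W}{d}{}{\vec{x}}}$, the pre-activation magnitudes $\ellone{\prenn{\W}{d}{h}{\vec{x}}}$, the Jacobian row $\ell_2$ norms $\matrixnorm{\jacobian{\W}{d'}{d}{}{\vec{x}}}{2,\infty}$, and the Jacobian spectral norms $\spec{\jacobian{\W}{d'}{d}{}{\vec{x}}}$. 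Each is defined with an appropriate sign and offset so that the corresponding upper/lower bound takes the canonical form $\rho_{r,l} > \trainmargin_{r,l}$ of Equation~\ref{eq:condition}, with the thresholds $\trainmargin_{r,l}$ set to the training-data bounds $\trainoutput{d}, \trainpreact{d}, \trainjacob{d'}{d}, \trainspec{d'}{d}$ and with the classifier margin forming the final $R+1$th singleton set (threshold $\classmargin$). Crucially, I would order the $R = \mathcal{O}(D)$ conditions layer by layer from the bottom up, so that every property at layer $d$ depends only on properties at layers below it, as the abstract ordering requires.

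The heart of the argument is Lemma~\ref{lem:noise-resilience-induction}, which produces the perturbation bounds $\Delta_{r,l}(\sigma)$ and discharges Equation~\ref{eq:generic-noise-resilience}. Here I would exploit the ordering: conditioned on the lower-layer pre-activations being bounded away from zero (exactly the precondition $\rho_{q,l} > 0$ for $q < r$ that Equation~\ref{eq:generic-noise-resilience} grants), a small enough Gaussian perturbation $\U \sim \N(0,\sigma^2 I)$ cannot flip any activation state below layer $d$, so on that event the map from $\U$ to the layer-$d$ quantities is linear and expands through the Jacobians. A first-order expansion then writes the perturbation of each layer-$d$ property as a sum over intermediate layers $d''$ of terms of the form $\jacobian{\W}{d''}{d}{}{\vec{x}}\, U_{d''}\, \jacobian{\W}{d'}{d''-1}{}{\vec{x}}$ (and the analogous products for the output-norm and pre-activation properties). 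Since the block $U_{d''}$ has i.i.d.\ $\N(0,\sigma^2)$ entries, its row $\ell_2$ and spectral norms concentrate around $\sigma\sqrt{H}$, so each $\Delta_{r,l}(\sigma)$ scales like $\sigma\sqrt{H}$ times an \emph{interaction ratio} of sub-Jacobian bounds --- precisely the quantities appearing inside $\boundoutput, \boundhiddenpreact, \boundoutputpreact, \boundjacob, \boundspec$ --- rather than like a product $\prod_d \spec{W_d}$. I would run this bound inductively over layers, so that the already-controlled perturbation of the lower Jacobians feeds the bound for the higher ones, closing the recursion implicit in the definitions of $\boundjacob$ and $\boundspec$.

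With the $\Delta_{r,l}(\sigma)$ in hand, I would identify $\sigma^\star$ as the largest $\sigma$ for which every $\Delta_{r,l}(\sigma)$ stays below its target $\trainmargin_{r,l}$ while the probabilistic bound of Equation~\ref{eq:generic-noise-resilience} holds. Because each $\Delta_{r,l}(\sigma)$ is (up to logarithmic concentration factors) proportional to $\sigma\sqrt{H}$ times one interaction ratio, the binding constraint gives exactly $1/\sigma^\star = \tilde{\mathcal{O}}(\sqrt{H}\,\max\{\boundoutput,\boundhiddenpreact,\boundoutputpreact,\boundjacob,\boundspec\})$. Finally I would invoke Theorem~\ref{thm:generic-generalization} with posterior $\N(\W,(\sigma^\star)^2 I)$ and prior $P = \N(\Z,(\sigma^\star)^2 I)$, for which the KL term evaluates to $\frob{\W-\Z}^2/(2(\sigma^\star)^2)$; substituting this and $R = \mathcal{O}(D)$ into the generic bound reproduces the stated $\tilde{\mathcal{O}}(D\sqrt{\frob{\W-\Z}^2/((\sigma^\star)^2 m)})$ term. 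Since $\sigma^\star$ depends on the data through the norm-bounds whereas the prior must be data-independent, this last step also needs the standard union bound over a grid of candidate variances, which I would treat as already absorbed into Theorem~\ref{thm:generic-generalization}.

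The main obstacle is the noise-resilience lemma, and within it the nonlinearity. Two things demand care: first, bounding the probability that \emph{any} unit flips its activation state under $\U$, which is where the large-pre-activation condition is indispensable and what forces the dependence on $1/\trainpreact{d}$; and second, bounding the perturbation of the Jacobians themselves, which is not a single linear quantity because it couples the perturbation blocks at all intermediate layers. Getting these coupled Jacobian-perturbation bounds to close inductively without reintroducing a product of spectral norms is the crux of the whole argument, and is exactly what separates this guarantee from the worst-case spectral-norm bounds.
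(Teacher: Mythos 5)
Your proposal follows the paper's own proof route essentially step for step: the same four families of per-layer properties cast in the canonical form $\rho_{r,l} > \trainmargin_{r,l}$ of Equation~\ref{eq:condition} with the classification margin as the final singleton set, the same bottom-up ordering with $R = \mathcal{O}(D)$ (the paper takes $R \le 4D$ and, within each layer $d$, orders Jacobian spectral norms, then Jacobian row norms, then the layer-output norm, then pre-activations --- a \emph{within-layer} dependence your ``depends only on layers below'' description glosses over, since the layer-$d$ output perturbation bound in Lemma~\ref{lem:noise-resilience-induction} conditions on the layer-$d$ Jacobian row norms; the lemma's event structure handles this, so it is a presentational slip rather than an error), the same frozen-activation decomposition of each perturbed quantity into terms of the form $\jacobian{\W}{d''}{d}{}{\vec{x}}\, U_{d''}\, \jacobian{\W}{d'}{d''-1}{}{\vec{x}}$ controlled by Gaussian tail bounds and the spectral-norm concentration of $U_{d''}$, the same use of the lower bound on pre-activation magnitudes to rule out activation flips, the same identification of $\sigma^\star$ from $\Delta_{r,l}(\sigma^\star) \le \Delta^\star_{r,l}$, and the same prior $\N(\Z,(\sigma^\star)^2 I)$ with the KL term evaluated via Lemma~\ref{lem:kldivergence}.

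The one genuine gap is in your final step. The union bound you defer to is \emph{not} absorbed into Theorem~\ref{thm:generic-generalization}: that theorem requires both $\sigma^\star$ \emph{and all the margin thresholds} $\trainmargin_{r,l}$ to be fixed before the draw of $S$, and a grid over ``candidate variances'' alone does not achieve this. The data-dependent norm constants $\trainoutput{d}, \trainpreact{d}, \trainjacob{d'}{d}, \trainspec{d'}{d}$ play a dual role --- they are the thresholds in the conditions assumed on the training data \emph{and} the quantities determining $\sigma^\star$ --- so for a given training run you cannot instantiate the hypothesis of the abstract theorem unless the thresholds themselves have been covered a priori. The paper's covering argument therefore grids the $\le 4D^2$ norm constants directly: each is gridded geometrically over a range outside of which the bound is already vacuous (e.g., $\trainoutput{d-1} = \omega{\sqrt{H}\classmargin\sqrt{m}}$ forces $1/\sigma^\star = \omega{\sqrt{m}}$ via $\boundoutputpreact$), yielding $(\mathrm{poly}(H,D,m,\classmargin))^{4D^2}$ grid points and a union bound whose price is a $\sqrt{D^2\log \mathrm{poly}(H,D,m,\classmargin)/m}$ term absorbed by the $\tilde{\mathcal{O}}$. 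As written, your last step would fail for gridding the wrong object; with the cover taken over the norm constants (equivalently, the margins) rather than over $\sigma$ alone, your proposal coincides with the paper's proof.
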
  
 
 In Figure~\ref{fig:quantities-vs-depth}, we show how the terms in the bound vary for networks of varying depth with a small width of $H=40$ on the MNIST dataset. We observe that $ \boundoutput, \boundoutputpreact,\boundjacob,\boundspec$ typically lie in the range of $[10^0, 10^2]$ and scale with depth as $\propto 1.57^D$. In contrast, the equivalent term from \cite{neyshabur18pacbayes} consisting of the product of spectral norms can be as large as $10^3$ or $10^5$ and scale with $D$ more severely as $2.15^D$.

The bottleneck in our bound is $\boundhiddenpreact$, which scales inversely with the magnitude of the smallest absolute pre-activation value of the network. In practice, this term can be arbitrarily large, even though it does not depend on the product of spectral norms/depth.  This is because some hidden units can have arbitrarily small absolute pre-activation values -- although this is true only for a small proportion of these units.  
   
  To give an idea of the typical, non-pathological magnitude of the pre-activation values, we plot two other  variations of $\boundhiddenpreact$: a) $5\%$-$\boundhiddenpreact$ which is calculated by ignoring $5\%$ of the training datapoints with the smallest absolute pre-activation values and  b) median-$\boundhiddenpreact$ which is calculated by ignoring half the hidden units in each layer with the smallest absolute pre-activation values for each input. We observe that median-$\boundhiddenpreact$  is quite small (of the order of $10^2$), while 
  $5\%$-$\boundhiddenpreact$, while large (of the order of $10^4$), is still orders of magnitude smaller than $\boundhiddenpreact$.

\begin{figure}[!h]
    \centering
    \begin{minipage}[t]{\textwidth}
    \centering
            \begin{minipage}{.25\textwidth}
        \centering
        \adjincludegraphics[width=1\textwidth,trim={0 {0} 0 0},clip,valign=t]{Images/b_out_vs_depth_log_width=40} %
    \end{minipage}%
            \begin{minipage}{.25\textwidth}
        \centering
        \adjincludegraphics[width=1\textwidth,trim={0 {0} 0 0},clip,,valign=t]{Images/b_jac_vs_depth_log_width=40} %
    \end{minipage}%
    \begin{minipage}{.25\textwidth}
        \centering
        \adjincludegraphics[width=1\textwidth,trim={0 {0} 0 0},clip,,valign=t]{Images/b_spectral_vs_depth_log_width=40} %
    \end{minipage}%
     \begin{minipage}{.25\textwidth}
        \centering
        \adjincludegraphics[width=1\textwidth,trim={0 {0} 0 0},clip,valign=t]{Images/b_output_preact_vs_depth_log_width=40} %
    \end{minipage}%
    \\
    \vspace{5pt}
                    \begin{minipage}{.25\textwidth}
        \centering
        \adjincludegraphics[width=1\textwidth,trim={0 {0} 0 0},clip,valign=t]{Images/b_hidden_preact_vs_depth_log_width=40} %
    \end{minipage}%
                \begin{minipage}{.25\textwidth}
        \centering
        \adjincludegraphics[width=1\textwidth,trim={0 {0} 0 0},clip,valign=t]{Images/five_pc_b_hidden_preact_vs_depth_log_width=40} %
    \end{minipage}%
                    \begin{minipage}{.25\textwidth}
        \centering
        \adjincludegraphics[width=1\textwidth,trim={0 {0} 0 0},clip,valign=t]{Images/median_b_hidden_preact_vs_depth_log_width=40} %
    \end{minipage}%
               \begin{minipage}{.25\textwidth}
        \centering
        \adjincludegraphics[width=1\textwidth,trim={0 {0} 0 0},clip,valign=t]{Images/spectral_norm_term_vs_depth_log_width=40} %
    \end{minipage}%
    \end{minipage}
       \caption{In the above figure, we plot the logarithm (to the base 10) of values of the terms occurring in the upper bound of $1/\sigma^\star$ for networks of varying depth, with $H=40$. Additionally, we  plot variations of $\boundhiddenpreact$, namely $5\%$-$\boundhiddenpreact$ and median-$\boundhiddenpreact$ as discussed in the text. We also plot the equivalent term from \cite{neyshabur18pacbayes} corresponding to $\max_{\vec{x}} \|\vec{x} \|D\prod_{d=1}^{D} \| W_d\|_2 /\classmargin$. Note that if the slope of the $\log y$ vs $D$ graph is $c$, then the $y \propto (10^c)^D$. 
       } \label{fig:quantities-vs-depth}
\end{figure}

  In Figure~\ref{fig:overall-bounds-depth} we show how our overall bound and existing product-of-spectral-norm-based bounds \citep{bartlett17spectral,neyshabur18pacbayes} vary with depth. While our bound is orders of magnitude larger than prior bounds, the key point here is that  our bound grows with depth as $1.57^D$ while prior bounds grow with depth as $2.15^D$ indicating that our bound should perform  asymptotically better with respect to depth. Indeed, we verify that our bound obtains better values than the other existing bounds when $D=28$ (see Figure~\ref{fig:overall-bounds-depth} b). We also plot hypothetical variations of our bound replacing  
$\boundhiddenpreact$ with   $5\%$-$\boundhiddenpreact$ (see ``Ours-5\%'') and median-$\boundhiddenpreact$  (see ``Ours-Median'') both of which perform orders of magnitude better than our actual bound (note that these two hypothetical bounds do not actually hold good). In fact for larger depth, the bound with $5\%$-$\boundhiddenpreact$ performs better than all other bounds (including existing bounds). This indicates that the only bottleneck in our bound comes from the dependence on the smallest pre-activation magnitudes, and if this particular dependence is addressed, our bound has the potential to achieve tighter guarantees for even smaller $D$ such as $D=8$. 

\begin{figure}[!h]
    \centering
    \begin{minipage}[t]{\textwidth}
    \centering
         \begin{minipage}{.45\textwidth}
        \centering
        \adjincludegraphics[width=1\textwidth,trim={0 {0} 0 0},clip,valign=t,scale=0.8]{Images/overall_bound_vs_depth_log_width=40}
	\caption*{a) Bound vs. Depth}        
    		\end{minipage}%
                \begin{minipage}{.55\textwidth}
			      \begin{minipage}{\textwidth}
			      \begin{minipage}{0.35\textwidth}
        \centering
        \adjincludegraphics[width=1\textwidth,trim={0 {0} 0 0},clip,valign=t,scale=0.8]{Images/overall_Ours_bound_vs_large_depth_3}
    		\end{minipage}%
    					      \begin{minipage}{0.35\textwidth}
        \centering
        \adjincludegraphics[width=1\textwidth,trim={0 {0} 0 0},clip,valign=t,scale=0.8]{Images/overall_Ours-5_bound_vs_large_depth_3}
    		\end{minipage}%
    					      \begin{minipage}{0.35\textwidth}
        \centering
        \adjincludegraphics[width=1\textwidth,trim={0 {0} 0 0},clip,valign=t,scale=0.8]{Images/overall_Ours-Median_bound_vs_large_depth_3}
    		\end{minipage}
	\end{minipage}    		
    		 \\
    					     \begin{minipage}{\textwidth}
    					     \centering
    					      \begin{minipage}{0.33\textwidth}
        \centering
        \adjincludegraphics[width=1\textwidth,trim={0 {0} 0 0},clip,valign=t,scale=0.8]{Images/overall_Neyshabur+18_bound_vs_large_depth_3}
    		\end{minipage}%
    					      \begin{minipage}{0.33\textwidth}
        \centering
        \adjincludegraphics[width=1\textwidth,trim={0 {0} 0 0},clip,valign=t,scale=0.8]{Images/overall_Bartlett+17_bound_vs_large_depth_3}
    		\end{minipage}
	\end{minipage}

        \caption*{b) Distribution of bounds for $H=40, D=28$}
    \end{minipage}%
    \end{minipage}
        \caption{In the left, we vary the depth of the network (fixing $H=40$) and plot the logarithm of various generalization bounds ignoring the dependence on the training dataset size and a $\log (DH)$ factor in all of the considered bounds. Specifically, we consider our bound, the hypothetical versions of our bound involving $5\%$-$\boundhiddenpreact$ and median-$\boundhiddenpreact$ respectively, and the bounds from \cite{neyshabur18pacbayes} $\frac{\max_{\vec{x}} \| \vec{x}\|_2 D\sqrt{H} \prod_{d=1}^{D} \|W_d \|_2}{\classmargin} \cdot \sqrt{\sum_{d=1}^{D} \frac{\frob{W_d-Z_d}^2}{\| W_d\|_2^2}}$ and \cite{bartlett17spectral} $\frac{\max_{\vec{x}} \| \vec{x}\|_2 \prod_{d=1}^{D} \|W_d \|_2}{\classmargin} \cdot \left({\sum_{d=1}^{D} \left(\frac{\|{W_d-Z_d}\|_{2,1}}{\| W_d\|}\right)^{2/3}}\right)^{3/2}$ both of which have been modified to include distance from initialization instead of distance from origin for a fair comparison.  Observe the last two bounds have a plot with a larger slope than the other bounds indicating that they might potentially do worse for a sufficiently large $D$. Indeed, this can be observed from the plots on the right where we report the distribution of the logarithm of these bounds for $D=28$ across $12$ runs (although under training settings different from the experiments on the left; see Appendix~\ref{app:interpret-generalization} for the exact details). } \label{fig:overall-bounds-depth}
\end{figure}

We refer the reader to Appendix~\ref{app:interpret-generalization} for added discussion where we demonstrate how all the quantities in our bound vary with depth for $H=1280$ (Figure~\ref{fig:alpha-jacobian}, \ref{fig:quantities-vs-depth_1280})  and with width for  $D=8,14$ (Figures~\ref{fig:quantities-vs-width_7} and ~\ref{fig:quantities-vs-width_13}).

Finally, as noted before, we emphasize that the dependence of our bound on the pre-activation values is a limitation in how we characterize noise-resilience through our conditions rather than a drawback in our general PAC-Bayesian framework itself.  Specifically, using the assumed lower bound on the pre-activation magnitudes we can ensure that, under noise, the activation states of the units do not flip;  then the noise propagates through the network in a tractable, ``linear'' manner. Improving this analysis is an important direction for future work. For example, one could modify our analysis to allow perturbations large enough to flip a small proportion of the activation states; one could potentially formulate such realistic conditions by drawing inspiration from the conditions in \cite{neyshabur17exploring,arora18compression}.

However, we note that even though these prior approaches made more realistic assumptions about 
the magnitudes of the pre-activation values, the key limitation in these approaches is that even under our non-realistic assumption, their approaches would yield bounds only on stochastic/compressed networks. {\em Generalizing noise-resilience from training data to test data} is crucial to extending these bounds to the original network, which we accomplish.

\section{Summary and future work}

In this work, we introduced a PAC-Bayesian framework for leveraging the noise-resilience of deep neural networks on training data, to derive a generalization bound on the original uncompressed, deterministic network. The main philosophy of our approach is to first generalize the noise-resilience from training data to test data using which we convert 
a PAC-Bayesian bound on a stochastic network to a standard margin-based generalization bound.
We apply our approach to ReLU based networks and derive a bound that scales with terms that capture the interactions between the weight matrices better than the product of spectral norms. 

For future work, the most important direction is that of removing the dependence on our strong assumption that  the magnitude of the pre-activation values of the network are not too small on training data. 
More generally, a better understanding of the source of noise-resilience in deep ReLU networks would help in applying our framework more carefully in these settings, leading to tighter guarantees on the original network.

\subparagraph{Acknowledgements.} Vaishnavh Nagarajan was partially supported by a grant from the Bosch Center for AI.

\bibliography{iclr2019_conference}
\bibliographystyle{iclr2019_conference}

\newpage	
\appendix
 
\section*{Appendix}
\section{Notations (continued)}

\label{app:notations}
We will use upper-case symbols to denote matrices, and lower-case bold-face symbols to denote vectors. In order to make the mathematical statements/derivations easier to read, if we want to emphasize a term, say $x$, we write, $\focus{{x}}$. 


Recall that we consider a neural netork of depth $D$ (i.e., $D-1$ hidden layers and one output layer) mapping from $\mathbb{R}^{N} \to \mathbb{R}^{K}$, where $K$ is the number of class labels in the learning task. The layers are fully connected with $H$ units in each hidden layer, and with ReLU activations $\relu{\cdot}$ on all the hidden units and linear activations on the output units. We denote the parameters of the network using the symbol $\mathcal{W}$, which in turn denotes a set of weight matrices $W_1, W_2, \hdots, W_D$. Here, $W_1 \in \mathbb{R}^{H \times N}$, and $W_{D} \in \mathbb{R}^{K \times H}$ and for all other layers $d \neq 1, D$, $W_{d} \in \mathbb{R}^{H \times H}$. We will use the notation $\W_d$ to denote the first $d$ weight matrices.  We denote the vector of weights input to the $h$th unit on the $d$th layer (which corresponds to the $h$th row in $W_d$) as $\vec{w}^{d}_{h}$.

 For any input $\vec{x} \in \mathbb{R}^{N}$, we denote the function computed by the network on that input as  $\nn{\W}{}{}{\vec{x}} = W_D \relu{W_{D-1} \hdots \relu{W_1 \vec{x}}}$. For any $d = 1, \hdots, D-1$, we denote the output of the $d$th hidden layer after the activation by $\nn{\W}{d}{}{\vec{x}}$. We denote the corresponding pre-activation values for that layer by $\prenn{\W}{d}{}{\vec{x}}$. We denote the value of the $h$th hidden unit on the $d$th layer after and before the activation by $\nn{\W}{d}{h}{\vec{x}}$ and $\prenn{\W}{d}{h}{\vec{x}}$ respectively. Note that for the output layer $d=D$, these two values are equal as we assume only a linear activation. For $d=0$, we define $\nn{\W}{0}{}{\vec{x}}= \vec{x}$.  As a result, we have the following recursions:

 \begin{align*}
\nn{\W}{d}{}{\vec{x}} & = \relu{\prenn{\W}{d}{}{\vec{x}}}, d=1,2,\hdots, D-1 \\
\nn{\W}{D}{}{\vec{x}} & = {\prenn{\W}{D}{}{\vec{x}}} = \nn{\W}{}{}{\vec{x}} ,  \\
\prenn{\W}{d}{}{\vec{x}} & = W_d \nn{\W}{d-1}{}{\vec{x}}, \forall d=1,2,\hdots, D \\
\prenn{\W}{d}{h}{\vec{x}} & = \vec{w}^d_{h} \cdot \nn{\W}{d-1	}{h}{\vec{x}}, \forall d=1,2,\hdots, D \\
 \end{align*}

For layers $d', d$ such that $d' \leq d$, let us define $\jacobian{\W}{d'}{d}{}{\vec{x}}$ to be the Jacobian corresponding to the pre-activation values of layer $d$ with respect to the pre-activation values of layer $d'$ on an input $\vec{x}$. That is,

\[
\jacobian{\W}{d'}{d}{}{\vec{x}}  = \frac{\partial \prenn{\W}{d}{}{\vec{x}}}{\partial \prenn{\W}{d'}{}{\vec{x}}}
\]
In other words, this corresponds to the product of the `activated' portion of the matrices $W_{d'+1}, W_{d'+2}, \hdots, W_{d}$, where the weights corresponding to inactive inputs are zeroed out. In short, we will call this `Jacobian $\verbaljacobian{d'}{d}$'. Note that each row in this Jacobian corresponds to a unit on the $d$th layer, and each column corresponds to a unit on the $d'$th layer.

We will denote the parameters of a random initialization of the network by $\Z = (Z_1, Z_2, \hdots, Z_d)$. 
Let $\D$ be an underlying distribution over $\mathbb{R}^N \times \{1, 2, \hdots, K\}$ from which the data is drawn. 

In our PAC-Bayesian analysis, we will use $\U$ to denote a set of $D$ weight matrices $U_1, U_2, \hdots, U_D$ whose entries are sampled independently from a Gaussian. Furthermore, we will use $\U_d$ to denote only the first $d$ of the randomly sampled weight matrices,  and $\W+\U_d$ to denote a network where the $d$ random matrices are added to the first $d$ weight matrices in $\W$. Note that $\W + \U_0 = \W$. Thus, $\nn{\W+\U_d}{}{}{\vec{x}}$ is the output of a network where the first $d$ weight matrices have been perturbed. 
In our analysis, we will also need to study a perturbed network where the hidden units are frozen to be at the activation state they were at before the perturbation; we will use the notation  $\W[+\U_d]$ to denote the weights of such a network. 

For our statements regarding probability of events, we will use $\land$, $\vee$, and $\lnot$ to denote the intersection, union and complement of events (to disambiguate from the set operators).
\section{Useful Lemmas}

\label{app:useful-lemmas}

In this section, we present some standard results. The first two results below will be useful for
our noise resilience analysis.

\subparagraph{Hoeffding Bound}

\begin{lemma}
\label{lem:hoeffding}
For $i=1,2, \hdots, n$, let $X_i$ be independent random variables sampled from a Gaussian with mean $\mu_i$ and variance $\sigma_i^2$. Then for all $t \geq 0$, we have:
\[
\pr{\sum_{i=1}^{n} (X_i - \mu_i) \geq t} \leq \exp \left( - \frac{t^2}{2 \sum_{i=1}^{n} \sigma_i^2} \right).
\]

Or alternatively, for $\delta \in (0,1]$
\[
\pr{\sum_{i=1}^{n} (X_i - \mu_i) \geq  \sqrt{2 \sum_{i=1}^{n} \sigma_i^2\ln \frac{1}{\delta}}} \leq \delta
\]
\end{lemma}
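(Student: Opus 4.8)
The plan is to prove this by the standard moment-generating-function (Chernoff) method, exploiting the fact that a sum of independent Gaussians is again Gaussian. First I would observe that if I set $Z := \sum_{i=1}^{n} (X_i - \mu_i)$, then each summand $X_i - \mu_i$ is a centered Gaussian with variance $\sigma_i^2$, and since the $X_i$ are independent, $Z$ is itself a centered Gaussian with variance $s^2 := \sum_{i=1}^{n} \sigma_i^2$. This reduces the claim to the one-variable tail bound $\pr{Z \geq t} \leq \exp(-t^2/(2s^2))$ for $Z \sim \N(0, s^2)$.

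To establish this one-variable bound, I would apply Markov's inequality to the exponential transform: for any $\lambda > 0$, $\pr{Z \geq t} = \pr{e^{\lambda Z} \geq e^{\lambda t}} \leq e^{-\lambda t}\, \Esub{}{e^{\lambda Z}}$. The key ingredient is the moment generating function of a centered Gaussian, $\Esub{}{e^{\lambda Z}} = e^{\lambda^2 s^2/2}$, which can be obtained by completing the square inside the Gaussian integral. Substituting gives $\pr{Z \geq t} \leq e^{-\lambda t + \lambda^2 s^2/2}$. I would then optimize the free parameter $\lambda$: the exponent $-\lambda t + \lambda^2 s^2/2$ is a convex quadratic in $\lambda$ minimized at $\lambda^\star = t/s^2$, at which point the exponent equals $-t^2/(2s^2)$. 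This yields exactly $\pr{\sum_{i=1}^{n}(X_i - \mu_i) \geq t} \leq \exp\!\big(-t^2/(2\sum_{i=1}^n \sigma_i^2)\big)$, establishing the first form; the case $t = 0$ is trivial since the bound is then $\leq 1$.

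For the alternative (high-probability) form, I would simply invert the first bound. Setting the right-hand side $\exp(-t^2/(2s^2))$ equal to $\delta$ and solving for $t$ gives $t = \sqrt{2 s^2 \ln(1/\delta)} = \sqrt{2 \sum_{i=1}^n \sigma_i^2 \ln(1/\delta)}$; plugging this choice of $t$ back into the first inequality gives probability at most $\delta$, which is precisely the stated claim. Since this is a classical concentration result, I do not anticipate any genuine obstacle; the only points requiring care are verifying the additivity of variance for independent Gaussians, correctly evaluating the Gaussian moment generating function, and carrying out the scalar optimization over $\lambda$ — all of which are routine once the reduction to a single Gaussian random variable is made.
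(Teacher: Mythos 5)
Your proof is correct and complete. Note that the paper itself states this lemma as a standard result and provides no proof of its own, so there is nothing to compare against; your Chernoff/MGF argument is the canonical derivation. All the steps check out: $Z = \sum_{i=1}^n (X_i - \mu_i)$ is indeed $\N(0, s^2)$ with $s^2 = \sum_i \sigma_i^2$ by independence and additivity of variance; Markov's inequality applied to $e^{\lambda Z}$ together with the Gaussian MGF $\E[e^{\lambda Z}] = e^{\lambda^2 s^2/2}$ gives $\pr{Z \geq t} \leq e^{-\lambda t + \lambda^2 s^2/2}$; and optimizing at $\lambda^\star = t/s^2$ yields the stated exponent $-t^2/(2s^2)$ (your handling of $t=0$, where the bound trivially holds since the right-hand side is $1$, is a nice touch, as is the correct degenerate case $s^2 = 0$ being implicitly excluded or handled by convention). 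The inversion to the high-probability form by setting $\exp(-t^2/(2s^2)) = \delta$ and solving $t = \sqrt{2 s^2 \ln(1/\delta)}$ is exactly right and matches the second display in the lemma. One minor remark: since $Z$ is exactly Gaussian (not merely sub-Gaussian), one could instead bound the tail directly via the Gaussian integral, which gives a slightly sharper constant, but the Chernoff bound is precisely the form the lemma asserts, so your route is the appropriate one.
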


Note that an identical inequality holds good symmetrically for the event $\sum_{i=1}^{n} X_i - \mu_i \leq -t$, and so the probability that the event $\ellone{\sum_{i=1}^{n} X_i - \mu_i} > t$ holds, is at most twice the failure probability in the above inequalities.




\subparagraph{Product of an entrywise Gaussian matrix and a vector}

\begin{lemma}
\label{lem:gaussian-operator-norm}
Let $U$ be a $H_1 \times H_2$ matrix where each entry is sampled from $\mathcal{N}(0,\sigma^2)$. Let 
 $\vec{x}$ be an arbitrary vector in $\mathbb{R}^{H_2}$. Then, $U\vec{x} \sim \N(0,  \|\vec{x} \|_2^2 \sigma^2 I)$. 

\end{lemma}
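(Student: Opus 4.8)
The plan is to work coordinatewise and then assemble the joint distribution. Write $U$ in terms of its rows $\vec{u}_1, \dots, \vec{u}_{H_1}$, so that the $i$th coordinate of $U\vec{x}$ is the scalar $(U\vec{x})_i = \sum_{j=1}^{H_2} U_{ij} x_j = \vec{u}_i \cdot \vec{x}$. The key elementary fact I would invoke is that any fixed linear combination of independent (univariate) Gaussian random variables is again Gaussian. Since the entries $U_{ij} \sim \mathcal{N}(0,\sigma^2)$ are independent, each coordinate $(U\vec{x})_i$ is therefore a univariate Gaussian, and I only need to compute its first two moments.

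For the moments, by linearity of expectation $\mathbb{E}[(U\vec{x})_i] = \sum_{j} x_j \, \mathbb{E}[U_{ij}] = 0$, and by independence of the entries in a single row, $\mathrm{Var}((U\vec{x})_i) = \sum_{j} x_j^2 \, \mathrm{Var}(U_{ij}) = \sigma^2 \sum_j x_j^2 = \sigma^2 \|\vec{x}\|_2^2$. Hence each coordinate is marginally $\mathcal{N}(0, \|\vec{x}\|_2^2 \sigma^2)$, matching the claimed diagonal entries of the covariance.

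The remaining step is to establish that the coordinates are jointly Gaussian and mutually independent, so that the marginals assemble into the product distribution $\mathcal{N}(0, \|\vec{x}\|_2^2\sigma^2 I)$ rather than merely some distribution with the right marginals. I would argue this in one of two equivalent ways. The clean way: coordinate $i$ depends only on row $\vec{u}_i$ of $U$, and distinct rows are disjoint collections of independent entries; therefore $(U\vec{x})_1, \dots, (U\vec{x})_{H_1}$ are independent as functions of independent blocks, each already shown Gaussian, giving the product Gaussian directly. Alternatively, I would note that the whole vector $U\vec{x}$ is a linear image of the jointly Gaussian collection $\{U_{ij}\}$, hence jointly Gaussian, and compute the off-diagonal covariances $\mathbb{E}[(U\vec{x})_i (U\vec{x})_k] = \sum_{j,j'} x_j x_{j'} \mathbb{E}[U_{ij} U_{kj'}] = 0$ for $i \neq k$ (every term vanishes since $U_{ij}$ and $U_{kj'}$ are independent mean-zero variables when $i \neq k$); zero covariance for a jointly Gaussian vector implies independence.

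There is no real obstacle here: the statement is a routine consequence of the stability of the Gaussian family under linear combinations together with the independence of the entries of $U$. The only point requiring a word of care is the upgrade from ``each marginal is Gaussian'' to ``the vector is jointly Gaussian with diagonal covariance,'' which is exactly why I would phrase the argument through the disjoint-rows independence (or through the vanishing off-diagonal covariances of a jointly Gaussian image), rather than stopping at the marginal computation.
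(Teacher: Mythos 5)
Your proposal is correct and follows essentially the same route as the paper: the paper likewise computes $\E[U\vec{x}] = 0$ and the covariance $\E[U\vec{x}\vec{x}^T U^T]$ entrywise, obtaining $\sigma^2\|\vec{x}\|_2^2$ on the diagonal and $0$ off the diagonal by independence of distinct rows. If anything you are slightly more careful than the paper, which simply asserts that $U\vec{x}$ is multivariate Gaussian, whereas you explicitly justify joint Gaussianity (via the disjoint-rows block independence, or the linear-image argument) before concluding that diagonal covariance yields the product distribution.
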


\begin{proof}
$U \vec{x}$ is a random vector sampled from a multivariate Gaussian with mean $\E[U\vec{x}] = 0$ and co-variance $\E[U\vec{x}\vec{x}^T U^T]$. The $(i,j)$th entry in this covariance matrix is $\E[(\vec{u}_i^T\vec{x}) (\vec{u}_j^T\vec{x})]$ where $\vec{u}_i$ and $\vec{u}_j$ are the $i$th and $j$th row in $U$. When $i=j$, $\E[(\vec{u}_i^T\vec{x}) (\vec{u}_j^T\vec{x})] = \E[\|\vec{u}_i^T \vec{x}\|^2] = \sum_{h=1}^{H_2} \E[u_{ih}^2] x_h^2 = \sigma^2 \|\vec{x} \|_2^2$. When $i\neq j$, since $\vec{u}_i$ and $\vec{u}_j$ are independent random variables, we will have $\E[(\vec{u}_i^T\vec{x}) (\vec{u}_j^T\vec{x})] = \sum_{h=1}^{H_2} \E[u_{ih} x_h]  \sum_{h=1}^{H_2} \E[u_{jh} x_h] = 0$. 


 \end{proof}

 \subparagraph{Spectral Norm of Entry-wise Gaussian matrix}

The following result \citep{tropp12tailbounds} bounds the spectral norm of a matrix with Gaussian entries, with high probability:

\begin{lemma}
\label{lem:spec}
Let $U$ be a $H \times H$ matrix. Then,
\[\prsub{U \sim \N(0,\sigma^2I)}{\spec{U} > t} \leq 2H \exp(-t^2/2H\sigma^2)\]
or alternatively, for any $\delta > 0$,
\[\prsub{U \sim \N(0,\sigma^2I)}{\spec{U} > \sigma \sqrt{2H \ln \frac{2H}{\delta}} } \leq 2H \exp(-t^2/2H\sigma^2)\]

\end{lemma}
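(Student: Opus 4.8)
The plan is to reproduce the stated bound — in particular the linear prefactor $2H$ and the variance $H\sigma^2$ — by invoking the matrix Gaussian tail bound of \citet{tropp12tailbounds}, rather than a crude $\epsilon$-net argument (the latter is self-contained but only yields a prefactor exponential in $H$, not the clean $2H$). First I would write $U$ as a Gaussian matrix series, $U = \sigma \sum_{i=1}^{H}\sum_{j=1}^{H} \gamma_{ij} E_{ij}$, where $E_{ij}$ is the $H \times H$ matrix with a single $1$ in position $(i,j)$ and the $\gamma_{ij}$ are i.i.d.\ standard normals; this exactly realizes $U \sim \N(0,\sigma^2 I)$ entrywise as in the hypothesis.

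Since $\spec{U}$ is a singular value rather than an eigenvalue, the next step is to pass to the Hermitian dilation, which turns the problem into one about the top eigenvalue of a symmetric Gaussian series. Define
\[
\tilde{U} = \begin{pmatrix} 0 & U \\ U^{\intercal} & 0 \end{pmatrix}, \qquad \tilde{E}_{ij} = \begin{pmatrix} 0 & E_{ij} \\ E_{ij}^{\intercal} & 0 \end{pmatrix},
\]
so that $\tilde{U} = \sigma \sum_{i,j} \gamma_{ij} \tilde{E}_{ij}$ is a symmetric $2H \times 2H$ Gaussian series with $\lambda_{\max}(\tilde{U}) = \spec{U}$ (the eigenvalues of $\tilde{U}$ being $\pm$ the singular values of $U$). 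Tropp's master tail bound for a symmetric Gaussian series gives $\prsub{}{\lambda_{\max}(\tilde{U}) \geq t} \leq d \cdot \exp(-t^2/(2\nu^2))$, with ambient dimension $d = 2H$ and variance parameter $\nu^2 = \spec{\sigma^2 \sum_{i,j} \tilde{E}_{ij}^2}$.

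The only genuine computation is the variance proxy. A direct calculation gives $\tilde{E}_{ij}^2 = \mathrm{diag}(E_{ij}E_{ij}^{\intercal},\, E_{ij}^{\intercal}E_{ij})$, whose top-left block is the diagonal unit $E_{ii}$ and bottom-right block is $E_{jj}$. Summing over all $i,j \in \{1,\dots,H\}$ makes each block $H \cdot I_H$, so $\sum_{i,j} \tilde{E}_{ij}^2 = H \cdot I_{2H}$ and $\nu^2 = \sigma^2 H$. Substituting $d = 2H$ and $\nu^2 = H\sigma^2$ into the master bound yields precisely $\prsub{U \sim \N(0,\sigma^2 I)}{\spec{U} > t} \leq 2H\exp(-t^2/(2H\sigma^2))$, the first inequality. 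The second (high-probability) form then follows by setting $t = \sigma\sqrt{2H \ln(2H/\delta)}$: this makes $t^2/(2H\sigma^2) = \ln(2H/\delta)$, so $2H\exp(-t^2/(2H\sigma^2)) = 2H \cdot (\delta/2H) = \delta$, and the right-hand side of the lemma's second display (written as $2H\exp(-t^2/2H\sigma^2)$) should read $\delta$ — an apparent typo.

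\textbf{Main obstacle.} If Tropp's result is taken as given (as the paper's citation suggests), the remaining work is routine: verifying the dilation identity $\lambda_{\max}(\tilde{U}) = \spec{U}$ and the block computation $\sum_{i,j}\tilde{E}_{ij}^2 = H I_{2H}$. The genuinely hard ingredient — the matrix Laplace-transform method, which rests on Lieb's concavity theorem and underlies the $d\,e^{-t^2/2\nu^2}$ master bound — is supplied by the cited reference, so I would treat it as a black box; reproducing that subadditivity-of-matrix-cumulants argument from scratch would be the real difficulty, and it is what an $\epsilon$-net proof trades away (at the cost of the exact $2H$ prefactor).
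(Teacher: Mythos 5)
The paper gives no proof of this lemma at all---it is stated with a citation to Tropp (2012)---and your derivation (Hermitian dilation of $U$, Tropp's master tail bound for a symmetric Gaussian series with ambient dimension $2H$, and the variance computation $\sum_{i,j}\tilde{E}_{ij}^2 = H\,I_{2H}$, hence $\nu^2 = H\sigma^2$) is exactly the standard argument inside that cited reference, and every step checks out. You are also correct that the right-hand side of the lemma's second display is a typo and should read $\delta$, as your substitution $t = \sigma\sqrt{2H\ln(2H/\delta)}$ shows.
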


\subparagraph{KL divergence of Gaussians.} We will use the following KL divergence equality to bound the generalization error in our PAC-Bayesian analyses.

\begin{lemma}
\label{lem:kldivergence}
Let $P$ be the spherical Gaussian $\N(\vec{\mu}_1, \sigma^2 I)$ and $Q$ be the spherical Gaussian $\N(\vec{\mu}_2, \sigma^2 I)$. Then, the KL-divergence between $Q$ and $P$ is:
\[
\text{KL}(Q \| P) = \frac{\elltwo{\vec{\mu}_2 - \vec{\mu}_1}^2}{2\sigma^2}
\]
\end{lemma}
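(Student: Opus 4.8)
The plan is to prove this by a direct computation from the definition of KL divergence, exploiting the fact that the two Gaussians share the same covariance $\sigma^2 I$. Writing $n$ for the ambient dimension of the parameter space and letting $p,q$ denote the densities of $P = \N(\vec{\mu}_1,\sigma^2 I)$ and $Q = \N(\vec{\mu}_2,\sigma^2 I)$, I would start from $\text{KL}(Q\|P) = \Esub{\vec{x}\sim Q}{\ln q(\vec{x}) - \ln p(\vec{x})}$. Because both covariances equal $\sigma^2 I$, the log-normalizer $-\tfrac{n}{2}\ln(2\pi\sigma^2)$ is identical for $p$ and $q$ and cancels, so the log-density difference collapses to the purely quadratic term $\ln q(\vec{x}) - \ln p(\vec{x}) = \tfrac{1}{2\sigma^2}\big( \elltwo{\vec{x}-\vec{\mu}_1}^2 - \elltwo{\vec{x}-\vec{\mu}_2}^2 \big)$.

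The second step is to expand the difference of squared norms and take the expectation under $Q$. Expanding gives $\elltwo{\vec{x}-\vec{\mu}_1}^2 - \elltwo{\vec{x}-\vec{\mu}_2}^2 = 2\,\vec{x}\cdot(\vec{\mu}_2-\vec{\mu}_1) + \elltwo{\vec{\mu}_1}^2 - \elltwo{\vec{\mu}_2}^2$, where the $\elltwo{\vec{x}}^2$ terms cancel (again a consequence of equal covariance). Since $\Esub{\vec{x}\sim Q}{\vec{x}} = \vec{\mu}_2$, substituting $\vec{\mu}_2$ for $\vec{x}$ and simplifying yields exactly $\elltwo{\vec{\mu}_2 - \vec{\mu}_1}^2$, and dividing by $2\sigma^2$ gives the claimed identity. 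An equally clean alternative would be to use that a spherical Gaussian factorizes as a product of independent one-dimensional Gaussians across coordinates, together with the additivity of KL divergence over independent coordinates; each coordinate contributes $(\mu_{2,i}-\mu_{1,i})^2/(2\sigma^2)$, and summing recovers the same answer.

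There is essentially no real obstacle here, so this is a routine verification rather than a deep argument. The only two points requiring care are: taking the expectation with respect to $Q$ (the first argument of the divergence) rather than $P$, and recognizing that it is precisely the shared covariance $\sigma^2 I$ that forces both the log-normalizer and the second-moment ($\elltwo{\vec{x}}^2$) terms to cancel, leaving only the dependence on the mean separation $\vec{\mu}_2 - \vec{\mu}_1$. One could instead quote the general multivariate-Gaussian KL formula and specialize to $\Sigma_1 = \Sigma_2 = \sigma^2 I$, but the direct calculation above is self-contained and just as short.
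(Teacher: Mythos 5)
Your proof is correct. The paper states this lemma as a standard fact and provides no proof of its own, so there is no in-paper argument to compare against; your self-contained verification fills that gap cleanly. Both the cancellation of the log-normalizer and of the $\elltwo{\vec{x}}^2$ terms (each a consequence of the shared covariance $\sigma^2 I$), and the substitution $\Esub{\vec{x}\sim Q}{\vec{x}} = \vec{\mu}_2$ giving $2\vec{\mu}_2\cdot(\vec{\mu}_2-\vec{\mu}_1) + \elltwo{\vec{\mu}_1}^2 - \elltwo{\vec{\mu}_2}^2 = \elltwo{\vec{\mu}_2-\vec{\mu}_1}^2$, check out, and you correctly take the expectation under $Q$, the first argument of the divergence. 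Your coordinate-wise alternative via additivity of KL over independent components is equally valid and matches the direct calculation.
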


\section{PAC-Bayesian Theorem}

\label{app:pac-bayesian}

In this section, we will present our main PAC-Bayesian theorem that will guide our analysis of generalization in our framework. Concretely, our result extends the generalization bound provided by conventional PAC-Bayesian analysis \citep{mcallester03simplified} -- which is a generalization bound on the expected loss of a {\em distribution} of classifiers i.e., a stochastic classifier -- to a generalization bound on a deterministic classifier. The way we reduce the PAC-Bayesian bound to a standard generalization bound, is different from the one pursued in previous works \citep{neyshabur18pacbayes,langford02pacbayes}.

The generalization bound that we state below is a bit more general than standard generalization bounds on deterministic networks. Typically, generalization bounds are on the classification error; however, as discussed in the main paper we will be dealing with generalizing multiple different conditions on the interactions between the weights of the network from the training data to test data. 
So to state a bound that is general enough, we consider a set of generic  functions $\rho_r(\W, \vec{x}, \vec{y})$ for $r=1,2,\hdots R'$ (we use $R'$ to distinguish it from $R$, the number of conditions in the abstract classifier of Section~\ref{sec:framework}). Each of these functions compute a scalar value that corresponds to some { input-dependent  property} of the network with parameters $\W$ for the datapoint $(\vec{x},y)$. As an example, this property could simply be the margin of the function on the $y$th class i.e., $\nn{\W}{}{}{\vec{x}}[y] - \max_{j \neq y} \nn{\W}{}{}{\vec{x}}[j]$. 

\begin{restatable}{theorem}{pacbayesian}
\label{thm:pac-bayesian}
Let $P$ be a prior distribution over the parameter space that is chosen independent of the training dataset. Let $\U$ be a random variable sampled entrywise from $\N(0,\sigma^2)$. Let $\rho_{r}(\cdot, \cdot, \cdot)$ and $\Delta_r > 0$ for $r=1, 2, \hdots R'$, be a set of input-dependent properties and their corresponding margins.  
We define the network $\W$ to be noise-resilient with respect to all these functions, at a given data point $(\vec{x},y)$ if:
\begin{equation}
\label{eq:noise-resilience}
\prsub{\U \sim \N(0,\sigma^2)}{  \exists r \;  : \; \ellone{\rho_r(\W, \vec{x}, y) - \rho_r(\W + \U, \vec{x}, y)} > \frac{\Delta_r}{2}  }  \leq \frac{1}{\sqrt{m}}.
\end{equation}

Let $\mu_{\D}(\{(\rho_r,\Delta_r)\}_{r=1}^{R'},\W)$ denote the probability over the random draw of a point $(\vec{x},y)$ drawn from $\D$, that the network with weights $\W$ is {\bf not} noise-resilient 
 at $(\vec{x},y)$ according to Equation~\ref{eq:noise-resilience}. That is, let $\mu_{\D}(\{(\rho_r,\Delta_r)\}_{r=1}^{R'},\W) := $
 \[
\prsub{(\vec{x},y)\sim \D}{  \prsub{\U \sim \N(0,\sigma^2)}{  \exists r \;  : \; \ellone{\rho_r(\W, \vec{x}, y) - \rho_r(\W + \U, \vec{x}, y)} > \frac{\Delta_r}{2}  }  > \frac{1}{\sqrt{m}}  }
 \]

 Similarly, let $\hat{\mu}_{S}(\{(\rho_r,\Delta_r)\}_{r=1}^{R'},\W)$ denote the fraction of data points $(\vec{x},y)$ in a dataset $S$ for which the network is {\bf not} noise-resilient according to Equation~\ref{eq:noise-resilience}.
Then for any $\delta$, with probability $1-\delta$ over the draws of a sample set $S = \{ (\vec{x}_i, y_i) \sim \D \; | i=1,2,\hdots, m  \}$, for any $\W$ we have: 

\begin{align*}
 \prsub{(\vec{x},y) \sim \D}{\exists r \; : \; \rho_r(\W, \vec{x}, y) < 0}  \leq & \frac{1}{m} \sum_{(\vec{x},y) \in S} \mathbf{1} \left[ \exists r \; : \; \rho_r(\W, \vec{x}, y) <  \Delta_r \right]   + \hat{\mu}_{S}(\{(\rho_r,\Delta_r)\}_{r=1}^{R'},\W) \\ + \mu_{\D}(\{(\rho_r,\Delta_r)\}_{r=1}^{R'}, \W)  
& + 2\sqrt{\frac{2 KL(\N(\W, \sigma^2 I) \| P) + \ln \frac{2m}{\delta}}{m-1}} + \frac{2}{\sqrt{m}-1}.
\end{align*}
\end{restatable}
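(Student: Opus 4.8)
The plan is to apply a standard PAC-Bayesian bound (in the simplified form of \citet{mcallester03simplified}) to a single stochastic classifier, with the posterior $Q = \N(\W, \sigma^2 I)$ centered at the learned weights and the data-independent prior $P$, and then to ``sandwich'' the resulting stochastic quantities between the deterministic quantities appearing in the statement, using the noise-resilience hypothesis of Equation~\ref{eq:noise-resilience} to control the discrepancy. Concretely, I would introduce the $[0,1]$-valued stochastic loss
\[
\ell(\tilde{\W}, \vec{x}, y) := \mathbf{1}\left[ \exists r \; : \; \rho_r(\tilde{\W}, \vec{x}, y) < \tfrac{\Delta_r}{2} \right],
\]
whose threshold $\Delta_r/2$ is chosen to exactly match the slack allowed by the definition of noise-resilience. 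Applying the PAC-Bayesian bound to $\ell$ with posterior $Q$ and prior $P$ yields, with probability $1-\delta$ over $S$, that $\Esub{\U}{\Esub{(\vec{x},y)\sim\D}{\ell(\W+\U,\vec{x},y)}}$ is at most $\Esub{\U}{\Esub{(\vec{x},y)\sim S}{\ell(\W+\U,\vec{x},y)}}$ plus the complexity term $2\sqrt{(2KL(\N(\W,\sigma^2 I)\|P)+\ln\frac{2m}{\delta})/(m-1)}$. Note the posterior is allowed to depend on $S$ through $\W$, since the PAC-Bayesian guarantee holds uniformly over all posteriors; only the prior must be data-independent, which holds by assumption.

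The second step is the test-side lower bound: I would show $\Esub{\U}{\Esub{(\vec{x},y)\sim\D}{\ell}}$ controls $\prsub{(\vec{x},y)\sim\D}{\exists r : \rho_r(\W,\vec{x},y) < 0}$ from below. At any test point that is both noise-resilient and ``deterministically bad'' (some $\rho_r(\W,\vec{x},y) < 0$), Equation~\ref{eq:noise-resilience} guarantees that with probability at least $1 - 1/\sqrt{m}$ over $\U$ every property moves by at most $\Delta_r/2$, so the bad property satisfies $\rho_r(\W+\U,\vec{x},y) < \Delta_r/2$ and hence $\ell(\W+\U,\vec{x},y) = 1$; thus $\Esub{\U}{\ell} \geq 1 - 1/\sqrt{m}$ at such points. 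Integrating over $\D$ and splitting on the noise-resilience event, the mass of non-noise-resilient points is by definition at most $\mu_{\D}$, so $\Esub{\U}{\Esub{\D}{\ell}} \geq (1-1/\sqrt{m})(\prsub{\D}{\exists r : \rho_r(\W) < 0} - \mu_{\D})$. Rearranging and using $\frac{1}{1-1/\sqrt{m}} = 1 + \frac{1}{\sqrt{m}-1}$ together with $\Esub{\U}{\Esub{\D}{\ell}} \le 1$ gives $\prsub{\D}{\exists r : \rho_r(\W) < 0} \leq \Esub{\U}{\Esub{\D}{\ell}} + \mu_{\D} + \frac{1}{\sqrt{m}-1}$.

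The third step is the symmetric train-side upper bound on $\Esub{\U}{\Esub{(\vec{x},y)\sim S}{\ell}}$. At any training point that is both noise-resilient and ``deterministically good'' (every $\rho_r(\W,\vec{x},y) \geq \Delta_r$), the same high-probability event forces $\rho_r(\W+\U,\vec{x},y) \geq \Delta_r/2$ for all $r$, so $\ell(\W+\U,\vec{x},y)=0$ except on a $\U$-event of probability at most $1/\sqrt{m}$; hence $\Esub{\U}{\ell} \leq 1/\sqrt{m}$ there, and $\Esub{\U}{\ell}\le 1$ elsewhere. Bounding the ``elsewhere'' points by those that violate the margin ($\exists r : \rho_r(\W) < \Delta_r$, contributing the empirical margin term) plus those that are not noise-resilient (contributing $\hat{\mu}_S$), I get $\Esub{\U}{\Esub{S}{\ell}} \leq \frac{1}{m}\sum_{(\vec{x},y)\in S}\mathbf{1}[\exists r : \rho_r(\W,\vec{x},y)<\Delta_r] + \hat{\mu}_S + \frac{1}{\sqrt{m}}$. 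Chaining the three steps through the PAC-Bayesian inequality and collecting the residuals via $\frac{1}{\sqrt{m}-1} + \frac{1}{\sqrt{m}} \le \frac{2}{\sqrt{m}-1}$ produces exactly the claimed bound.

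I expect the main obstacle to be conceptual rather than computational: correctly deploying the single noise-resilience hypothesis in both directions at once --- as a lower bound on the stochastic loss at deterministically-misclassified test points and as an upper bound at correctly-margin-classified training points --- which is precisely what forces the $\Delta_r/2$ threshold in the definition of $\ell$, and carefully accounting for the points where noise-resilience fails as the explicit additive error terms $\mu_{\D}$ and $\hat{\mu}_S$ rather than assuming resilience everywhere. The remaining care is purely bookkeeping of the $O(1/\sqrt{m})$ residuals so that they collapse into the stated $2/(\sqrt{m}-1)$.
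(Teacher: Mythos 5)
Your proposal is correct and follows essentially the same route as the paper's proof: you apply the McAllester-style PAC-Bayesian bound to the same intermediate stochastic loss $\mathcal{L}_{1/2}(\tilde{\W},\vec{x},y)=\mathbf{1}\left[\exists r \,:\, \rho_r(\tilde{\W},\vec{x},y)<\Delta_r/2\right]$, sandwich it between the deterministic test loss (threshold $0$) and the deterministic train margin loss (threshold $\Delta_r$) via the high-probability favorable-perturbation event, and charge the non-noise-resilient points to $\mu_{\D}$ and $\hat{\mu}_S$ exactly as the paper does. The only cosmetic difference is that you integrate the pointwise perturbation argument directly over $\D$ and $S$ instead of first conditioning on the noise-resilient subpopulations $\D'$ and $S'$, and your residual bookkeeping ($1/\sqrt{m}+1/(\sqrt{m}-1)\leq 2/(\sqrt{m}-1)$) matches the paper's.
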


The reader maybe curious about how one would bound the term $\mu_{\D}$ in the above bound, as this term corresponds to noise-resilience with respect to {\em test} data. This is precisely what we bound later when we generalize the noise-resilience-related conditions satisfied on train data over to test data.

\subsection{Key advantage of our bound.}
\label{app:advantage}
The above approach differs from previous approaches used by \citet{neyshabur18pacbayes,langford02pacbayes} in how strong a noise-resilience we require of the classifier to provide the generalization guarantee. The stronger the noise-resilience requirement, the more price we have to pay when we jump from the PAC-Bayesian guarantee on the stochastic classifier to a guarantee on the deterministic classifier. We argue that {\em our noise-resilience requirement is a much milder condition and therefore promises tighter guarantees}. Our requirement is in fact philosophically similar to \cite{london16stability,mcallester03simplified}, although technically different.

More concretely, to arrive at a reasonable generalization guarantee in our setup, we would need that $\mu_{\mathcal{D}}$ and $\hat{\mu}_{S}$ are both only as large as $\mathcal{O}(1/\sqrt{m})$. In other words, we would want the following for $(\vec{x},y) \sim \D$ and for $(\vec{x},y) \sim S$:
\[
\prsub{(\vec{x},y)} { \prsub{\U \sim \N(0,\sigma^2)}{ \exists r \;  : \; \ellone{\rho_r(\W, \vec{x}, y) - \rho_r(\W + \U, \vec{x}, y)} > \frac{\Delta_r}{2}  }  > \frac{1}{\sqrt{m}} } = \mathcal{O}(1/\sqrt{m}) .
\]

Previous works require a noise resilience condition of the form that with high probability a particular perturbation does not perturb the classifier output on {\em any input}. For example, the noise-resilience condition used in \citet{neyshabur18pacbayes} written in terms of our notations, would be:
\[
\prsub{\U \sim \N(0,\sigma^2)}{ \focus{\exists {\vec{x}}} \; : \; \exists r \;  : \; \ellone{\rho_r(\W, \vec{x}, y) - \rho_r(\W + \U, \vec{x}, y)} > \frac{\Delta_r}{2}  } \leq \frac{1}{2}.
\]

The main difference between the above two formulations is in what makes a particular perturbation (un)favorable for the classifier. In our case, we deem a perturbation unfavorable only after fixing the datapoint. However, in the earlier works, a perturbation is deemed unfavorable if it perturbs the classifier output sufficiently on {\em some} datapoint from the domain of the distribution. While this difference is subtle, the earlier approach would lead to a much more pessimistic analysis of these perturbations.  In our analysis, this weakened noise resilience condition will be critical in analyzing the Gaussian perturbations more carefully than in \citet{neyshabur18pacbayes} i.e., we can bound the perturbation in the classifier output more tightly by analyzing the Gaussian perturbation for a fixed input point.

Note that one way our noise resilience condition would seem stronger in that on a given datapoint we want less than $1/\sqrt{m}$ mass of the perturbations to be unfavorable for us, while in previous bounds, there can be as much as $1/2$ probability mass of perturbations that are unfavorable. In our analysis, this will only weaken our generalization bound by a $\ln \sqrt{m}$ factor in comparison to previous bounds (while we save other significant factors).




\subsection{Proof for Theorem~\ref{thm:pac-bayesian}}

\begin{proof}

The starting point of our proof is a standard PAC-Bayesian theorem  \cite{mcallester03simplified} which bounds the generalization error of a stochastic classifier.
Let $P$ be a data-independent prior over the parameter space. Let $\mathcal{L}(\W, \vec{x}, y)$ be any loss function that takes as input the network parameter, and a datapoint $\vec{x}$ and its  true label $y$ and outputs a value in $[0,1]$. Then, we have that, with probability $1-\delta$ over the draw of $S \sim \D^m$, for every distribution $Q$ over the parameter space, the following holds:

\begin{equation}
\label{eq:main-pac-bayes}
\Esub{\tilde{\W} \sim Q}{\Esub{(\vec{x},y) \sim \D}{\mathcal{L}(\tilde{\W} , \vec{x}, y)}} \leq \Esub{\tilde{\W} \sim Q}{\frac{1}{m}\sum_{(\vec{x}, y) \in S}\mathcal{L}(\tilde{\W} , \vec{x}, y)} + 2\sqrt{\frac{2 KL(Q \| P) + \ln \frac{2m}{\delta}}{m-1}}
\end{equation}

In other words, the statement tells us that except for a $\delta$ proportion of bad draws of $m$ samples, the test loss of the stochastic classifier $\tilde{\W} \sim Q$ would be close to its train loss. This holds for every possible distribution $Q$, which allows us to cleverly choose $Q$ based on $S$. As is the convention, we choose $Q$ to be the distribution of the stochastic classifier picked from $\N(\W,\sigma^2 I)$ i.e., a Gaussian perturbation of the deterministic classifier $\W$.


\subparagraph{Relating test loss of stochastic classifier to deterministic classifier.}

Now our task is to bound the loss for the deterministic classifier $\W$, $\prsub{(\vec{x},y) \sim \D}{\exists r \; | \; \rho_r(\W , \vec{x}, y) < 0}$. To this end, let us define the following margin-based variation of this loss for some $c \geq 0$:

\[
\mathcal{L}_{c}(\W, \vec{x}, y) = \begin{cases}
1 & \exists r \; : \; \rho_r(\W, \vec{x}, y) < c \Delta_r \\
0 & \text{otherwise},
\end{cases}
\]
and so we have $\prsub{(\vec{x},y) \sim \D}{ \exists r \; | \; \rho_r(\W , \vec{x}, y) < 0} = \mathbb{E}_{(\vec{x},y) \sim \D} \left[ \mathcal{L}_{0}(\W, \vec{x}, y)\right]$. First, we will bound the expected $\mathcal{L}_0$ of a deterministic classifier by the expected $\mathcal{L}_{1/2}$ of the stochastic classifier; then we will bound the test $\mathcal{L}_{1/2}$ of the stochastic classifier using the PAC-Bayesian bound.

We will split the expected loss of the deterministic classifier into an expectation over datapoints for which it is noise-resilient with respect to Gaussian noise and an expectation over the rest. To write this out, we define, for a datapoint $(\vec{x},y)$, $\mathfrak{N}(\W,\vec{x},y)$ to be the event that $\W$ is noise-resilient at $(\vec{x},y)$ as defined in Equation~\ref{eq:noise-resilience} in the theorem statement: 

\begin{align*}
\mathbb{E}_{(\vec{x},y) \sim \D} \left[ \mathcal{L}_{0}(\W, \vec{x}, y)\right]  & = \Esub{(\vec{x},y) \sim \D}{\left. {\mathcal{L}_{0}({\W}, \vec{x}, y)}  \right| \; \mathfrak{N}(\W,\vec{x},y) } \prsub{(\vec{x},y) \sim \D}{\mathfrak{N}(\W,\vec{x},y)} \\
& +\underbrace{ \Esub{(\vec{x},y) \sim \D}{ \left. \mathcal{L}_{0}({\W}, \vec{x}, y)  \right| \; \lnot \mathfrak{N}(\W,\vec{x},y) }}_{\leq 1} \underbrace{ \prsub{(\vec{x},y) \sim \D}{\lnot \mathfrak{N}(\W,\vec{x},y)}}_{\mu_{\D}(\{(\rho_r,\Delta_r)\}_{r=1}^{R'},\W)}\\
& \leq \Esub{(\vec{x},y) \sim \D}{\left. {\mathcal{L}_{0}({\W}, \vec{x}, y)}  \right|\; \mathfrak{N}(\W,\vec{x},y) } \prsub{(\vec{x},y) \sim \D}{\mathfrak{N}(\W,\vec{x},y)} \\
 &+ \mu_{\D}(\{(\rho_r,\Delta_r)\}_{r=1}^{R'},\W) \numberthis \label{eq:deterministic-loss-upper-bound} \\
\end{align*}

To further continue the upper bound on the left hand side, 
we turn our attention to
the stochastic classifier's loss on the noise-resilient part of the distribution $\D$ (we will lower bound this term in terms of the first term on the right hand side above). For simplicity of notations, we will write $\D'$ to denote the distribution $\D$ conditioned on $\mathfrak{N}(\W,\vec{x},y)$. Also, let $\mathfrak{U}(\tilde{\W},\vec{x},y)$ be the favorable event that for a given data point $(\vec{x}, y)$ and a draw of the stochastic classifier, $\tilde{\W}$, it is the case that for every $r$, $|\rho_r(\W,\vec{x},y) - \rho_r(\tilde{\W},\vec{x},y)| \leq \Delta_r/2$.
Then, the stochastic classifier's loss $\mathcal{L}_{1/2}$ on $\D'$ is:
\begin{align*}
\Esub{\tilde{\W}\sim Q}{\Esub{(\vec{x},y) \sim \D'}{ \mathcal{L}_{1/2}(\tilde{\W}, \vec{x}, y) }}  &= \Esub{(\vec{x},y) \sim \D'}{\Esub{\tilde{\W}\sim Q}{\mathcal{L}_{1/2}(\tilde{\W}, \vec{x}, y)}} 
\end{align*} 

{splitting the inner expectation over the favorable and unfavorable perturbations, and using linearity of expectations,}
\begin{align*}
& = \Esub{(\vec{x},y) \sim \D'}{\Esub{\tilde{\W}\sim Q}{ \left. \mathcal{L}_{1/2}(\tilde{\W}, \vec{x}, y)  \right| \; \mathfrak{U}(\tilde{\W},\vec{x},y)}\prsub{\tilde{\W}\sim Q}{\mathfrak{U}(\tilde{\W},\vec{x},y)}} \\
& + \Esub{(\vec{x},y) \sim \D'}{{ \Esub{\tilde{\W}\sim Q}{ \left. \mathcal{L}_{1/2}(\tilde{\W}, \vec{x}, y)  \right|\; \lnot \mathfrak{U}(\tilde{\W},\vec{x},y) }}\prsub{\tilde{\W}\sim Q}{\lnot \mathfrak{U}(\tilde{\W},\vec{x},y)}} \\
\intertext{to lower bound this, we simply ignore the second term (which is positive)} 
& \geq \Esub{(\vec{x},y) \sim \D'}{\Esub{\tilde{\W}\sim Q}{ \left. \mathcal{L}_{1/2}(\tilde{\W}, \vec{x}, y)  \right| \; \mathfrak{U}(\tilde{\W},\vec{x},y)}\prsub{\tilde{\W}\sim Q}{\mathfrak{U}(\tilde{\W},\vec{x},y)}}. \\
\end{align*}

Next, we use the following fact: if $\mathcal{L}_{1/2}(\tilde{\W},\vec{x},y) = 0$, then for all $r$, $\rho_r(\tilde{\W},x,y) \geq \Delta_r/2$ and if $\tilde{\W}$ is a favorable perturbation of $\W$, then for all $r$, $\rho_r(\W,x,y) \geq \rho_r(\tilde{\W},\vec{x},y) - \Delta_r/2 > 0$ i.e., $\mathcal{L}_{1/2}(\tilde{\W},\vec{x},y) = 0$ implies $\mathcal{L}_{0}({\W}, \vec{x}, y)  = 0$. Hence if $\tilde{\W}$ is a favorable perturbation then, $\mathcal{L}_{1/2}(\tilde{\W}, \vec{x}, y)  \geq \mathcal{L}_{0}({\W}, \vec{x}, y) $. Therefore, we can lower bound  the above expression by replacing the stochastic classifier with the deterministic classifier (and thus ridding ourselves of the expectation over $Q$):

\begin{align*}
& \geq \Esub{(\vec{x},y) \sim \D'}{{ \mathcal{L}_{0}({\W}, \vec{x}, y) } \prsub{\tilde{\W}\sim Q}{\mathfrak{U}(\tilde{\W},\vec{x},y)} }.  \\
\end{align*}

Since the favorable perturbations for a fixed datapoint drawn from $\D'$ have sufficiently high probability (that is, $\prsub{\tilde{\W}\sim Q}{\mathfrak{U}(\tilde{\W},\vec{x},y)} \geq 1-1/\sqrt{m}$), we have:

\begin{align*}
  \geq  \left( 1-\frac{1}{\sqrt{m}} \right) \Esub{(\vec{x},y) \sim \D'}{  \mathcal{L}_{0}({\W}, \vec{x}, y)  }. \\
\end{align*}

Thus, we have a lower bound on the stochastic classifier's loss that is in terms of the deterministic classifier's loss on the noise-resilient datapoints. Rearranging it, we get an upper bound on the latter:

\begin{align*}
\Esub{(\vec{x},y) \sim \D'}{  \mathcal{L}_{0}({\W}, \vec{x}, y)  } & \leq \frac{1}{\left(1-\frac{1}{\sqrt{m}} \right)} \Esub{\tilde{\W}\sim Q}{\Esub{(\vec{x},y) \sim \D'}{ \mathcal{L}_{1/2}(\tilde{\W}, \vec{x}, y) }}  \\
& \leq \left(1 + \frac{1}{\sqrt{m}-1} \right) \Esub{\tilde{\W}\sim Q}{\Esub{(\vec{x},y) \sim \D'}{ \mathcal{L}_{1/2}(\tilde{\W}, \vec{x}, y) }} \\
&  \leq \Esub{\tilde{\W}\sim Q}{\Esub{(\vec{x},y) \sim \D'}{ \mathcal{L}_{1/2}(\tilde{\W}, \vec{x}, y) }}\\
& +  \frac{1}{\sqrt{m}-1}  \underbrace{\Esub{\tilde{\W}\sim Q}{\Esub{(\vec{x},y) \sim \D'}{ \mathcal{L}_{1/2}(\tilde{\W}, \vec{x}, y) }}}_{\leq 1}\\
& \leq \Esub{\tilde{\W}\sim Q}{\Esub{(\vec{x},y) \sim \D'}{ \mathcal{L}_{1/2}(\tilde{\W}, \vec{x}, y) }} + \frac{1}{\sqrt{m}-1} \\
\end{align*}

Thus, we have an upper bound on the expected loss of the deterministic classifier $\W$ on the noise-resilient part of the distribution. Plugging this back in the first term of the upper bound on the deterministic classifier's loss on the whole distribution $\D$ in Equation~\ref{eq:deterministic-loss-upper-bound} we get :

\begin{align*}
\mathbb{E}_{(\vec{x},y) \sim \D} \left[ \mathcal{L}_{0}(\W, \vec{x}, y)\right]  
\leq & \left( \Esub{\tilde{\W}\sim Q}{\Esub{(\vec{x},y) \sim \D'}{ \mathcal{L}_{1/2}(\tilde{\W}, \vec{x}, y) }} + \frac{1}{\sqrt{m}-1} \right) \prsub{(\vec{x},y) \sim \D}{\mathfrak{N}(\W,\vec{x},y)} + \\
&  \mu_{\D}(\{(\rho_r,\Delta_r)\}_{r=1}^{R'},\W) \\
\intertext{rearranging, we get:}
 \leq & \left( \Esub{\tilde{\W}\sim Q}{\Esub{(\vec{x},y) \sim \D'}{ \mathcal{L}_{1/2}(\tilde{\W}, \vec{x}, y) }} \right) \prsub{(\vec{x},y) \sim \D}{\mathfrak{N}(\W,\vec{x},y)} + \\
&  \mu_{\D}(\{(\rho_r,\Delta_r)\}_{r=1}^{R'},\W) +  \frac{1}{\sqrt{m}-1} \underbrace{\prsub{(\vec{x},y) \sim \D}{\mathfrak{N}(\W,\vec{x},y)}}_{\leq 1}\\
\intertext{rewriting the expectation over $\D'$  explicitly as an expectation over $\D$ conditioned on $\mathfrak{N}(\W,\vec{x},y)$, we get: } 
 \leq & \left( \Esub{\tilde{\W}\sim Q}{\Esub{(\vec{x},y) \sim \D}{ \left. \mathcal{L}_{1/2}(\tilde{\W}, \vec{x}, y) \right| \; \mathfrak{N}(\W,\vec{x},y)} \prsub{(\vec{x},y) \sim \D}{\mathfrak{N}(\W,\vec{x},y)}  }   \right)  +\\
& \mu_{\D}(\{(\rho_r,\Delta_r)\}_{r=1}^{R'},\W)  + \frac{1}{\sqrt{m}-1} \intertext{the first term above is essentially an expectation of a loss over the distribution $\D$ with the loss set to be zero over the non-noise-resilient datapoints and set to be $\mathcal{L}_{1/2}$ over the noise-resilient datapoints; thus we can upper bound it with the expectation of the $\mathcal{L}_{1/2}$ loss over the whole distribution $\D$:} \\
\leq &  \Esub{\tilde{\W}\sim Q}{\Esub{(\vec{x},y) \sim \D}{ \mathcal{L}_{1/2}(\tilde{\W}, \vec{x}, y) }} +\cdot \mu_{\D}(\{(\rho_r,\Delta_r)\}_{r=1}^{R'},\W)  + \frac{1}{\sqrt{m}-1} \numberthis \label{eq:deterministic-loss-final-upper-bound}
\end{align*}

Now observe that we can upper bound the first term here using the PAC-Bayesian bound by plugging in $\mathcal{L}_{1/2}$ for the generic $\mathcal{L}$ in Equation~\ref{eq:main-pac-bayes}; however, the bound would still be in terms of the stochastic classifier's train error. To get the generalization bound we seek, which involves the deterministic classifier's train error, we need to take one final step mirroring these tricks on the train loss. 

\subparagraph{Relating the stochastic classifier's train loss to deterministic classifier's train loss.}

Our analysis here is almost identical to the previous analysis. Instead of working with the distribution $\D$ and $\D'$ we will work with the training data set $S$ and a subset of it $S'$ for which noise resilience property is satisfied by $\W$. Below, to make the presentation neater, we use $(\vec{x},y) \sim S$ to denote uniform sampling from $S$.

First, we upper bound the stochastic classifier's train loss ($\mathcal{L}_{1/2}$) as follows:

\begin{align*}
\Esub{\tilde{\W}\sim Q}{\Esub{(\vec{x},y) \sim S}{ \mathcal{L}_{1/2}(\tilde{\W}, \vec{x}, y) }}  &= \Esub{(\vec{x},y) \sim S}{\Esub{\tilde{\W}\sim Q}{\mathcal{L}_{1/2}(\tilde{\W}, \vec{x}, y)}}  \\
\intertext{splitting over the noise-resilient points $S'$ ($(\vec{x},y) \in S$ for which $\mathfrak{N}(\W,\vec{x},y)$ holds) like in Equation~\ref{eq:deterministic-loss-upper-bound}, we can upper bound as:}
& \leq \Esub{(\vec{x},y) \sim S'}{\Esub{\tilde{\W}\sim Q}{\mathcal{L}_{1/2}(\tilde{\W}, \vec{x}, y)}} \prsub{(\vec{x},y) \sim S}{(\vec{x},y) \in S'} \\ & + \hat{\mu}_{S}(\{(\rho_r,\Delta_r)\}_{r=1}^{R'},\W) \numberthis \label{eq:stochastic-loss-upper-bound} \\
\end{align*} 

We can upper bound the first term by first splitting it over the favorable and unfavorable perturbations like we did before:

\begin{align*}
& \Esub{(\vec{x},y) \sim S'}{\Esub{\tilde{\W}\sim Q}{\mathcal{L}_{1/2}(\tilde{\W}, \vec{x}, y)}} \\
& = \Esub{(\vec{x},y) \sim S'}{\Esub{\tilde{\W}\sim Q}{ \left. \mathcal{L}_{1/2}(\tilde{\W}, \vec{x}, y)  \right| \; \mathfrak{U}(\tilde{\W},\vec{x},y)}\prsub{\tilde{\W}\sim Q}{\mathfrak{U}(\tilde{\W},\vec{x},y)}} \\
& + \Esub{(\vec{x},y) \sim S'}{{ \Esub{\tilde{\W}\sim Q}{ \left. \mathcal{L}_{1/2}(\tilde{\W}, \vec{x}, y)  \right|\; \lnot \mathfrak{U}(\tilde{\W},\vec{x},y) }}\prsub{\tilde{\W}\sim Q}{\lnot \mathfrak{U}(\tilde{\W},\vec{x},y)}} \\
\end{align*}

To upper bound this, we apply a similar argument. First, if $\mathcal{L}_{1/2}(\tilde{\W},\vec{x},y) = 1$, then $\exists r$ such that $\rho_r(\tilde{\W},x,y) < \Delta_r/2$ and if $\tilde{\W}$ is a favorable perturbation then for that value of $r$, $\rho_r(\W,x,y) < \rho_r(\tilde{\W},\vec{x},y) + \Delta_r/2 < \Delta_r$. Thus if $\tilde{\W}$ is a favorable perturbation then, $\mathcal{L}_{1}({\W}, \vec{x}, y)  = 1$ whenever $\mathcal{L}_{1/2}(\tilde{\W}, \vec{x}, y) = 1$ i.e., $\mathcal{L}_{1/2}(\tilde{\W}, \vec{x}, y)  \leq \mathcal{L}_{1}({\W}, \vec{x}, y) $. Next, we use the fact that the unfavorable perturbations for a fixed datapoint drawn from $S'$ have sufficiently low probability i.e., $\prsub{\tilde{\W}\sim Q}{\lnot \mathfrak{U}(\tilde{\W},\vec{x},y)} \leq 1/\sqrt{m}$. Then, we get the following upper bound on the above equations, by replacing the stochastic classifier with the deterministic classifier (and thus ignoring the expectation over $Q$):

\begin{align*}
& \leq \Esub{(\vec{x},y) \sim S'}{\Esub{\tilde{\W}\sim Q}{ \left. \mathcal{L}_{1}({\W}, \vec{x}, y)  \right| \; \mathfrak{U}(\tilde{\W},\vec{x},y)}  \underbrace{\prsub{\tilde{\W}\sim Q}{\mathfrak{U}(\tilde{\W},\vec{x},y)}}_{\leq 1}   } \\ 
&+ \Esub{(\vec{x},y) \sim S'}{{ \underbrace{\Esub{\tilde{\W}\sim Q}{ \left. \mathcal{L}_{1/2}(\tilde{\W}, \vec{x}, y)  \right|\; \lnot \mathfrak{U}(\tilde{\W},\vec{x},y) }}_{\leq 1}} \frac{1}{\sqrt{m}}} \\
& \leq \Esub{(\vec{x},y) \sim S'}{ \mathcal{L}_{1}({\W}, \vec{x}, y) } + \frac{1}{\sqrt{m}} \\
\end{align*}

Plugging this back in the first term of Equation~\ref{eq:stochastic-loss-upper-bound}, we get:

\begin{align*}
\Esub{\tilde{\W}\sim Q}{\Esub{(\vec{x},y) \sim S}{ \mathcal{L}_{1/2}(\tilde{\W}, \vec{x}, y) }} 
  \leq & \left( \Esub{(\vec{x},y) \sim S'}{ \mathcal{L}_{1}({\W}, \vec{x}, y) } + \frac{1}{\sqrt{m}}  \right) \prsub{(\vec{x},y) \sim S}{(\vec{x},y) \in S'}\\
 &  + \hat{\mu}_{S}(\{(\rho_r,\Delta_r)\}_{r=1}^{R'},\W) \\
 \leq &   \Esub{(\vec{x},y) \sim S'}{ \mathcal{L}_{1}({\W}, \vec{x}, y) }   \prsub{(\vec{x},y) \sim S}{(\vec{x},y) \in S'} \\
  &  + \hat{\mu}_{S}(\{(\rho_r,\Delta_r)\}_{r=1}^{R'},\W)  + \frac{1}{\sqrt{m}}  \underbrace{\prsub{(\vec{x},y) \sim S}{(\vec{x},y) \in S'}}_{\leq 1}  \\
 \leq  &  \Esub{(\vec{x},y) \sim S'}{ \mathcal{L}_{1}({\W}, \vec{x}, y) }  \prsub{(\vec{x},y) \sim S}{(\vec{x},y) \in S'} +\frac{1}{\sqrt{m}} \\
 &  + \hat{\mu}_{S}(\{(\rho_r,\Delta_r)\}_{r=1}^{R'},\W) \intertext{since the first term is effectively the expectation of a loss over the whole distribution  with the loss set to be zero on the non-noise-resilient points and set to $\mathcal{L}_{1}$ over the rest, we can upper bound it by setting the loss to be $\mathcal{L}_{1}$ over the whole distribution:} \\
  & \leq  \Esub{(\vec{x},y) \sim S}{ \mathcal{L}_{1}({\W}, \vec{x}, y) } + \hat{\mu}_{S}(\{(\rho_r,\Delta_r)\}_{r=1}^{R'},\W) +\frac{1}{\sqrt{m}}  \\ 
\end{align*}

Applying the above upper bound and the bound in Equation~\ref{eq:deterministic-loss-final-upper-bound} into the PAC-Bayesian result of Equation~\ref{eq:main-pac-bayes} yields our result (Note that combining these equations would produce the term $\frac{1}{\sqrt{m}} + \frac{1}{\sqrt{m}-1}$ which is at most $\frac{2}{\sqrt{m}-1}$, which we reflect in the final bound. 
).

\end{proof}

\section{Proof for Theorem~\ref{thm:generic-generalization}}
\label{app:abstract}
In this section, we present the proof for the abstract generalization guarantee presented in Section~\ref{sec:pac-bayesian}. Our proof is based on the following recursive inequality that we demonstrate for all $r \leq R$ (we will prove a similar, but slightly different inequality for $r=R+1$):

\begin{align*} 
\prsub{(\vec{x},y) \sim \D}{\exists q \focus{\leq} r, \exists l \; \rho_{q,l}(\W, \vec{x},y) < 0  }  \leq & \; \prsub{(\vec{x},y) \sim \D}{\exists q \focus{ < } r, \exists l \; \rho_{q,l}(\W, \vec{x},y) < 0  } \\  
& + \underbrace{\tilde{\mathcal{O}} \left( \sqrt{\frac{2 KL(\N(\W, \sigma^2 I) \| P) }{m-1}}  \right)}_{\text{generalization error for condition }r} \numberthis \label{eq:recursion}
\end{align*}

Recall that the $r$th condition in Equation~\ref{eq:condition} is that $\forall l$, $\rho_{r,l} > \Delta^{\star}_{r,l}$. Above, we bound the probability mass of test points such that any one of the first $r$ conditions in Equation~\ref{eq:condition} is not even approximately satisfied, in terms of the probability mass of points where one of the first $r-1$ conditions is not even approximately satisfied, and a term that corresponds to how much error there can be in generalizing the $r$th condition from the training data. 

Our proof crucially relies on Theorem~\ref{thm:pac-bayesian}. This theorem provides an upper bound on the proportion of test data that fail to satisfy a set of conditions, in terms of four quantities. The first quantity is the proportion of training data that do not satisfy the conditions; the second and third quantities, which we will in short refer to as $\hat{\mu}_S$ and $\hat{\mu}_{\D}$, correspond to the proportion of training and test data on which the properties involved in the conditions are not noise-resilient. The fourth quantity is the generalization error. 

First, we consider the base case when $r=1$, and apply the PAC-Bayes-based guarantee from Theorem~\ref{thm:pac-bayesian} on the first set of properties $\{\rho_{1,1},\rho_{1,2},\hdots \}$ and their corresponding constants $\{\Delta^\star_{1,1},\Delta^\star_{1,2},\hdots \}$. First we have from our assumption (in the main theorem statement) that on all the training data, the condition $\rho_{1,l}(\W,\vec{x},y) > \Delta_{1,l}^\star$ is satisfied for all possible $l$. Thus, the first term in the upper bound in Theorem~\ref{thm:pac-bayesian} is zero. Next, we can show that the terms $\hat{\mu}_S$ and $\hat{\mu}_{\D}$ would be zero too. This follows from the fact that the constraint in Equation~\ref{eq:generic-noise-resilience} holds in this framework. Specifically, applying this equation for $r=1$, for $\sigma = \sigma^\star$, we get that for all possible $(\vec{x},y)$ the following inequality holds:
\[
\prsub{\U \sim \N(0,(\sigma^\star)^2 I)}{\forall l\; \;|\rho_{1,l}(\W+\U,{\vec{x},y}) - \rho_{l,1}(\W,{\vec{x},y}) | > \frac{\Delta_{1,l}(\sigma^\star)}{2}} \leq \frac{1}{R\sqrt{m}}.
\]
Since, $\sigma^\star$ was chosen such that $\Delta_1(\sigma^\star) \leq \Delta_1^\star$, we have:
\[
\prsub{\U}{\forall l\; \;|\rho_{1,l}(\W+\U,{\vec{x},y}) - \rho_{l,1}(\W,{\vec{x},y}) | > \frac{\Delta_1^\star}{2}} \leq \frac{1}{R\sqrt{m}}.
\]

Effectively this establishes that the noise-resilience requirement of Equation~\ref{eq:noise-resilience} in Theorem~\ref{thm:pac-bayesian} holds on all possible inputs, thus proving our claim that the terms $\hat{\mu}_S$ and $\hat{\mu}_{\D}$ would be zero.
Thus, we will get that 

\begin{align*} 
\prsub{(\vec{x},y) \sim \D}{\exists l \; \rho_{1,l}(\W, \vec{x},y) < 0  } \leq & \;  {\tilde{\mathcal{O}} \left( \sqrt{\frac{2 KL(\N(\W, \sigma^2 I) \| P) }{m-1}}  \right)}
\end{align*}

which proves the recursion statement for the base case.

To prove the recursion for some arbitrary $r \leq R$, we again apply the PAC-Bayes-based guarantee from Theorem~\ref{thm:pac-bayesian}, but on the union of the first $r$ sets of properties. Again, we will have that the first term in the guarantee would be zero, since the corresponding conditions are satisfied on the training data. Now, to bound the proportion of bad points $\hat{\mu}_S$ and $\hat{\mu}_{\D}$, we make the following claim:

\begin{quote}
the network is noise-resilient as per Equation~\ref{eq:noise-resilience} in Theorem~\ref{thm:pac-bayesian} for any input that satisfies the $r-1$ conditions approximately i.e., $\forall q \leq r-1$ and $\forall l$, $\rho_{q,l}(\W, \vec{x}, y) > 0$. 
\end{quote}

The above claim can be used to prove Equation~\ref{eq:recursion} as follows. Since all the conditions are assumed to be satisfied by a margin on the training data, this claim immediately implies that $\hat{\mu}_{S}$ is zero. Similarly, this claim implies that for the test data, we can bound $\mu_{\D}$ in terms of $\prsub{(\vec{x},y) \sim \D}{\exists q \focus{ < } r \; \exists l \; \rho_{q,l}(\W, \vec{x},y) < 0  }$, thus giving rise to the recursion in Equation~\ref{eq:recursion}.

Now, to prove our claim, consider an input $(\vec{x},y)$ such that $\rho_{q,l}(\W, \vec{x}, y) > 0$ for $q=1,2,\hdots, r-1$ and for all possible $l$. First from the assumption in our theorem statement that $\Delta_{q,l}(\sigma^\star) \leq \Delta_{q,l}^\star$, we  have the following upper bound on the proportion of parameter perturbations under which any of the properties in the first $r$ sets suffer a large perturbation:

\begin{align*}
& \prsub{\U}{  \exists q \leq r \; \exists l \; : \; \ellone{\rho_{q,l}(\W, \vec{x}, y) - \rho_{q,l}(\W + \U, \vec{x}, y)} > \frac{\Delta_{q,l}^*}{2}  }  \\ 
& \leq \prsub{\U}{  \exists q \leq r \; \exists l \; : \; \ellone{\rho_{q,l}(\W, \vec{x}, y) - \rho_{q,l}(\W + \U, \vec{x}, y)}  > \frac{\Delta_{q,l}(\sigma^\star)}{2}  } \intertext{Now, this can be expanded as a summation over $q=1,2,\hdots, r$ as:}\\
& \leq \sum_{q=1}^{r} Pr\Big[ \exists l  \; |\rho_{q,l}(\W+\U,{\vec{x},y}) - \rho_{q,l}(\W,{\vec{x},y}) | > \frac{\Delta_q(\sigma^\star)}{2} \wedge\\
& \; \; \;\; \; \; \; \; \; \forall q' < q, \; \exists l \ellone{\rho_{q',l}(\W, \vec{x}, y) - \rho_{q',l}(\W + \U, \vec{x}, y)} {<} \frac{\Delta_{q'}(\sigma^\star)}{2} \Big] \intertext{and because $(\vec{x},y)$ satisfies $\rho_{q,l}(\W, \vec{x}, y) > 0$ for $q=1,2,\hdots, r-1$ and for all possible $l$, by the constraint assumed in Equation~\ref{eq:generic-noise-resilience}, we have:} \\
& \leq \sum_{q=1}^{r} \frac{1}{(R+1)\sqrt{m}} \leq \frac{1}{\sqrt{m}} \\
\end{align*}

Thus, we have that $(\vec{x},y)$ satisfies the noise-resilience condition from Equation~\ref{eq:noise-resilience} in Theorem~\ref{thm:pac-bayesian} if it also satisfies $\rho_{q,l}(\W, \vec{x}, y) > 0$ for $q=1,2,\hdots, r-1$ and for all possible $l$. This proves our claim, and hence in turn proves the recursion in Equation~\ref{eq:recursion}.

Finally, we can apply a similar argument for the $R+1$th set of input-dependent properties (which is a singleton set consisting of the margin of the network) with a small change since the first term in the guarantee from Theorem~\ref{thm:pac-bayesian} is not explicitly assumed to be zero; we will get an inequality in terms of the number of training points that are not classified correctly by a margin, giving rise to the margin-based bound:

\begin{align*} 
\prsub{(\vec{x},y) \sim \D}{\exists q {\leq} R\focus{+1} \; \exists l, \rho_{q,l}(\W, \vec{x},y) < 0  }  \leq & \; \frac{1}{m}\sum_{(\vec{x},y) \in S} \mathbf{1}[\rho_{R+1,1}(\W,\vec{x},y) < \Delta_{R,1}] \\
& +\prsub{(\vec{x},y) \sim \D}{\exists q \leq R \; \exists l \; \rho_{q,l}(\W, \vec{x},y) < 0  } \\  
& + {\tilde{\mathcal{O}} \left( \sqrt{\frac{2 KL(\N(\W, \sigma^2 I) \| P) }{m-1}}  \right)} 
\end{align*}

Note that in the first term on the right hand side, $\rho_{R+1,1}(\W,\vec{x},y)$ corresponds to the margin of the classifier on $(\vec{x},y)$.  Now, by using the fact that the test error is upper bounded by the left hand side in the above equation, applying the recursion on the right hand side $R+1$ times, we get our final result.

\section{Perturbation Bounds}
\label{app:noise-resilience}

\subsection{Overview of the bounds.}
\label{app:noise-resilience-overview}

In this section, we will quantify the noise resilience of a network in different aspects. 
Each of our bounds has the following structure: we fix an input point $(\vec{x},y)$, and then say that with high probability over a Gaussian perturbation of the network's parameters, a particular {input-dependent property} of the network (say the output of the network, or the pre-activation value of a particular unit $h$ at a particular layer $d$, or say the Frobenius norm sof its active weight matrices), changes only by a small magnitude proportional to the variance $\sigma^2$ of the Gaussian perturbation of the parameters. 





A key feature of our bounds is that they do not involve the product of the spectral norm of the weight matrices and hence save us an exponential factor in the final generalization bound.  Instead, the bound in the perturbation of a particular property 
will be in terms of i) the magnitude of the some `preceding'  properties (typically, these are properties of the lower layers) of the network, and ii) how those preceding properties themselves respond to perturbations. For example, an upper bound in the perturbation of the $d$th layer's output would involve the $\ell_2$ norm of the lower layers $d' < d$, and how much they would blow up under these perturbations.

\subsection{Some notations.}
To formulate our lemma statement succinctly, we design a notation wherein we define a set of `tolerance parameters' which we will use to denote the extent of perturbation suffered by a particular property of the network.

Let $\tolset$  denote a `set' (more on what we mean by a set below) of positive tolerance values, consisting of the following elements:

\begin{enumerate}
\item $\toloutput{d}$, for each layer $d=1,\hdots, D-1$ (a tolerance value for the $\ell_2$ norm of the output of layer $d$)
\item $\tolpreact{d}$ for each layer $d=1,\hdots, D$. (a tolerance value for the magnitude of the pre-activations of layer $d$)
\item  $\toljacob{d'}{d}$ for each layer $d= 1, 2, \hdots, D$, and $d' = 1, \hdots, d$ (a tolerance value for the $\ell_2$ norm of each row of the Jacobians at layer $d$)
\item  $\tolspec{d'}{d}$ for each layer $d= 1, 2, \hdots, D$, and $d' = 1, \hdots, d$ (a tolerance value for the spectral norm of the Jacobians at layer $d$)
\end{enumerate}

{\bf Notes about (abuse of) notation:} 
\begin{itemize}
\item We call $\tolset$  a `set' to denote a group of related constants into a single symbol. Each element in this set has a particular semantic associated with it, unlike the standard notation of a set, and so when we refer to, say $\toljacob{d'}{d} \in \tolset$, we are indexing into the set to pick a particular element. 
\item We will use the subscript $\tolset_d$ to index into a subset of only those tolerance values corresponding to layers from $1$ until $d$. 
\end{itemize}

Next we define two events. The first event formulates the scenario that for a given input, a particular perturbation of the weights until layer $d$ brought about very little change in the properties of these layers (within some tolerance levels). The second event formulates the scenario that the perturbation did not flip the activation states of the network.

\begin{definition}
\label{def:tolerance-parameters}
Given an input $\vec{x}$, and an arbitrary set of constants $\tolset'$, for any perturbation $\U$ of $\W$, 
we denote by $\boundedperturbationevent(\W\focus{+\U},\tolset', \vec{x})$ the event that:
\begin{itemize}
	\item for each $\toloutput{d}' \in \tolset'$, the perturbation in the $\ell_2$ norm of layer $d$ activations is bounded as $\ellone{\elltwo{\nn{\W}{d}{}{\vec{x}}} - \elltwo{\nn{\W\focus{+\U}}{d}{}{\vec{x}}}} \leq \toloutput{d}'$.
	\item for each $\tolpreact{d}' \in \tolset'$, the maximum perturbation in the preactivation of hidden units on layer $d$ is bounded as $\max_h \ellone{{\nn{\W}{d}{h}{\vec{x}}} - {\nn{\W\focus{+\U}}{d}{h}{\vec{x}}}}\leq \tolpreact{d}'$.
	\item for each $\toljacob{d'}{d}' \in \tolset'$, the  maximum perturbation in the $\ell_2$ norm of a  row of the Jacobian  $\verbaljacobian{d'}{d}$ is bounded as $\max_h \ellone{\elltwo{\jacobian{\W}{d'}{d}{h}{\vec{x}}}-\elltwo{\jacobian{\W\focus{+\U}}{d'}{d}{h}{\vec{x}}}} \leq \toljacob{d'}{d}'$.
	\item for each $\tolspec{d'}{d}' \in \tolset'$, the  perturbation in the spectral norm of  the Jacobian  $\verbaljacobian{d'}{d}$ is bounded as $\ellone{\spec{\jacobian{\W}{d'}{d}{}{\vec{x}}}-\spec{\jacobian{\W\focus{+\U}}{d'}{d}{}{\vec{x}}}} \leq \tolspec{d'}{d}'$.

\end{itemize} 
\end{definition}

\subparagraph{Note:} If we supply only a subset of $\tolset$ (say $\tolset_d$ instead of the whole of $\tolset$) to the above event, $\boundedperturbationevent(\W+\U,\cdot, \vec{x})$, then it would denote the event that the perturbations suffered by only that subset of properties is within the respective tolerance values.

Next, we define the event that the perturbations do not affect the activation states of the network. 

\begin{definition}
For any perturbation $\U$ of the matrices $\W$, let  $\unchangedacts_{d}(\W+\U, \vec{x})$ denote the event that none of the activation states of the first $d$ layers change on perturbation. 
\end{definition}

\subsection{Main lemma.}


Our results here are styled similar to the equations required by Equation~\ref{eq:generic-noise-resilience} presented in the main paper.
For a given input point and for a particular property of the network, roughly, we bound the
the probability that a perturbation affects the value of the property while none of the preceding
preceding properties themselves are perturbed beyond a certain tolerance level.

\begin{lemma}
\label{lem:noise-resilience-induction}
Let $\tolset$ be a set of constants (that denote the amount of perturbation in the properties preceding a considered property).  For any $\toldelta > 0$, define $\tolset'$ (which is a bound on the perturbation of a considered property) in terms of $\tolset$ and the perturbation parameter $\sigma$, for all $d=1,2,\hdots, D$ and for all  $d'= d-1, \hdots, 1$ as follows:
\begin{align*}
\toloutput{d}' & := \sigma   \sum_{d'=1}^{d} \frob{\jacobian{\W}{d'}{d}{}{\vec{x}}} \left(\|{\nn{\W}{d'-1}{}{\vec{x}}}\| + \toloutput{d'-1}\right) \sqrt{2 \ln \frac{2DH}{\toldelta}}  \\
\tolpreact{d}' & := \sigma \sum_{d'=1}^{d}  \matrixnorm{\jacobian{\W}{d'}{d}{}{\vec{x}}}{2,\infty} \left(\|{\nn{\W}{d'-1}{}{\vec{x}}}\| + \toloutput{d'-1}\right) \sqrt{2 \ln \frac{2DH}{\toldelta}}  \\
\toljacob{d'}{d}'&:= \sigma\left( \frob{ \jacobian{\W}{d'}{d-1}{}{\vec{x}}} + \toljacob{d'}{d-1} \sqrt{H}\right)\sqrt{4\ln \frac{DH}{\toldelta}} \\
& + \sigma \sum_{d''=d'+1}^{d-1} \matrixnorm{W_d}{2,\infty} \spec{\jacobian{\W}{d''}{d-1}{}{\vec{x}}}
  \left( \frob{ \jacobian{\W}{d'}{d''-1}{}{\vec{x}}} + \toljacob{d'}{d''-1} \sqrt{H}\right)\sqrt{4\ln \frac{DH}{\toldelta}} \\
\tolspec{d'}{d}'&:=  \sigma \sqrt{H}  \left( \spec{ \jacobian{\W}{d'}{d-1}{}{\vec{x}}} + \tolspec{d'}{d-1}\right)\sqrt{2\ln \frac{2DH}{\toldelta}}\\
& + \sigma \sqrt{H} \sum_{d''=d'+1}^{d-1} \spec{W_d}\spec{\jacobian{\W}{d''}{d-1}{}{\vec{x}}} \left( \spec{ \jacobian{\W}{d'}{d''-1}{}{\vec{x}}} + \tolspec{d'}{d''-1}\right)\sqrt{2\ln \frac{2DH}{\toldelta}}\\
\toloutput{0}' &= 0\\
\toljacob{d}{d}' & := 0\\
\tolspec{d}{d}' & := 0\\
\end{align*}

Let $\U_d$ be sampled entrywise from $\N(0,\sigma^2)$ for any $d$. 
Then, the following statements hold good:

\textbf{1. Bound on perturbation of  of $\ell_2$ norm of the output of layer $d$.} For all $d = 1, 2, \hdots, D$,
\begin{align*}
  & Pr_{\U} \Big[ 
 \lnot  \boundedperturbationevent(\W+\U, {\{ \toloutput{d}' \}},\vec{x}) \; \wedge \\
  & {\boundedperturbationevent(\W+\U,\tolset_{d-1}\bigcup \{ \toljacob{d'}{d} \}^{d}_{d'= 1},\vec{x}) \; \wedge \; \unchangedacts_{d-1}(\W+\U,\vec{x})}  \Big] \leq  \toldelta
\end{align*}

\textbf{2. Bound on perturbation of pre-activations at layer $d$.} For all $d = 1, 2, \hdots, D$,
\begin{align*}
& Pr_{\U} \Big[ 
 \lnot \boundedperturbationevent(\W+\U,{ \{ \tolpreact{d}' \}}
 ,\vec{x}) \; \wedge \\
 & {\boundedperturbationevent(\W+\U,\tolset_{d-1}\bigcup \{ \toljacob{d'}{d} \}^{d}_{d'=1} \bigcup {\{ \toloutput{d} \}} ,\vec{x})  \; \wedge \; \unchangedacts_{d-1}(\W+\U,\vec{x})}  \Big] \leq  \toldelta
\end{align*}
\textbf{3. Bound on perturbation of $\ell_2$ norm  on the rows of the Jacobians $\verbaljacobian{d'}{d}$.} 
\begin{align*}
 & Pr_{\U} \Big[ 
 \lnot \boundedperturbationevent(\W+\U, {\{ \toljacob{d'}{d}'\}_{d'=1}^{d}}   ,\vec{x}) \; \wedge \\
  & 
 \boundedperturbationevent(\W+\U,\tolset_{d-1},\vec{x}) \; \wedge \; \unchangedacts_{d-1}(\W+\U,\vec{x}) \Big] \leq  \toldelta
\end{align*}
\textbf{4. Bound on perturbation of spectral norm of the Jacobians $\verbaljacobian{d'}{d}$.}
\begin{align*}
 & Pr_{\U} \Big[ 
 \lnot \boundedperturbationevent(\W+\U, {\{ \tolspec{d'}{d}'\}_{d'=1}^{d}}   ,\vec{x}) \; \wedge \\
  & 
 \boundedperturbationevent(\W+\U,\tolset_{d-1},\vec{x}) \; \wedge \; \unchangedacts_{d-1}(\W+\U,\vec{x}) \Big] \leq  \toldelta
\end{align*}

\end{lemma}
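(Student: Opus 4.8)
The plan is to establish all four bounds by one common argument, carried out separately for each of the four families of properties, that uses the event $\unchangedacts_{d-1}(\W+\U,\vec{x})$ to turn the network into a piecewise-linear map and then controls the resulting Gaussian expressions with Lemmas~\ref{lem:hoeffding}, \ref{lem:gaussian-operator-norm} and~\ref{lem:spec}. The key enabling observation is that, on the event $\unchangedacts_{d-1}$, the post-activation outputs of the first $d-1$ layers of $\W+\U$ coincide with those of the frozen network $\W[+\U]$, and every Jacobian $\jacobian{\W+\U}{d'}{d}{}{\vec{x}}$ is a genuine product of the activated perturbed weight matrices using the \emph{same} activation pattern as the unperturbed network. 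This is exactly what lets us avoid a product-of-spectral-norms factor: the quantity whose perturbation we bound becomes a low-degree polynomial in the independent Gaussian blocks $U_1,\ldots,U_d$.

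First I would write the total perturbation of the target layer-$d$ quantity as a telescoping (hybrid) sum over $d'=1,\ldots,d$, where the $d'$th increment is the effect of switching on the single fresh Gaussian block $U_{d'}$ on top of $\U_{d'-1}$, with the later blocks still absent. Conditioned on $\U_{d'-1}$, the increment to $\prenn{\W}{d}{}{\vec{x}}$ equals $\jacobian{\W+\U_{d'-1}}{d'}{d}{}{\vec{x}}\,U_{d'}\,\nn{\W+\U_{d'-1}}{d'-1}{}{\vec{x}}$; by Lemma~\ref{lem:gaussian-operator-norm} the vector $U_{d'}\,\nn{\W+\U_{d'-1}}{d'-1}{}{\vec{x}}$ is Gaussian with covariance $\sigma^2\elltwo{\nn{\W+\U_{d'-1}}{d'-1}{}{\vec{x}}}^2 I$ and is independent of the already-fixed Jacobian factor. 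I would then apply the concentration appropriate to each property: for the pre-activation bound (part 2) coordinate $h$ is a scalar Gaussian of standard deviation $\sigma\,\elltwo{\jacobian{\W+\U_{d'-1}}{d'}{d}{h}{\vec{x}}}\elltwo{\nn{\W+\U_{d'-1}}{d'-1}{}{\vec{x}}}$, to which Lemma~\ref{lem:hoeffding} and a union bound over $h$ apply; for the output-norm bound (part 1) the $1$-Lipschitzness of the ReLU reduces the change in $\elltwo{\nn{\W}{d}{}{\vec{x}}}$ to the $\ell_2$ norm of the pre-activation change, which concentrates at the Frobenius scale $\sigma\,\frob{\jacobian{\W+\U_{d'-1}}{d'}{d}{}{\vec{x}}}\elltwo{\nn{\W+\U_{d'-1}}{d'-1}{}{\vec{x}}}$; and for the Jacobian perturbations (parts 3 and 4) I would expand the product defining $\jacobian{\W+\U}{d'}{d}{}{\vec{x}}$ into cross terms, each sandwiching one fresh block between two sub-Jacobians, and control the row $\ell_2$ norm with Lemma~\ref{lem:gaussian-operator-norm} and the spectral norm with Lemma~\ref{lem:spec}.

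The last step is to discharge the conditioning event $\boundedperturbationevent(\W+\U,\tolset_{d-1}\cup\cdots,\vec{x})$ by replacing every perturbed-network quantity in these term-by-term estimates with its unperturbed value plus the supplied tolerance, e.g.\ $\elltwo{\nn{\W+\U_{d'-1}}{d'-1}{}{\vec{x}}}\le\elltwo{\nn{\W}{d'-1}{}{\vec{x}}}+\toloutput{d'-1}$ together with the analogous bounds on the Jacobian row/spectral norms. Substituting these, summing the $d$ hybrid increments, and splitting a total failure probability $\toldelta$ uniformly across the at most $D$ layers and $H$ units by a union bound reproduces the closed forms of $\toloutput{d}'$, $\tolpreact{d}'$, $\toljacob{d'}{d}'$ and $\tolspec{d'}{d}'$, with the $\sqrt{2\ln(2DH/\toldelta)}$ and $\sqrt{4\ln(DH/\toldelta)}$ factors emerging from the respective tails.

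The main obstacle I expect is parts 3 and 4. Unlike the output and pre-activation, a Jacobian is itself a product of activated matrices, so its perturbation does not collapse to a single telescoping sum but to a nested sum of cross terms --- one per intermediate layer $d''$ --- and one must order the conditioning so that, within each cross term, the freshly perturbed Gaussian block is independent of the matrices multiplying it on both sides, all while keeping the activation pattern frozen so that the sub-Jacobians remain well-defined products. This bookkeeping is precisely what produces the two-part structure of $\toljacob{d'}{d}'$ and $\tolspec{d'}{d}'$ (the $d''$-summation carrying the $\matrixnorm{W_d}{2,\infty}\spec{\cdot}$ and $\spec{W_d}\spec{\cdot}$ prefactors), and is the most delicate and error-prone part of the argument.
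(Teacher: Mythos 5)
Your proposal is correct and follows essentially the same route as the paper's proof: freeze the activation pattern so the noise propagates linearly, telescope over which Gaussian block $U_{d'}$ (or $U_{d''}$) is fresh, observe that conditioned on the earlier blocks each increment is a deterministic Jacobian factor times a spherical Gaussian (Lemma~\ref{lem:gaussian-operator-norm}), apply the Gaussian tail bound coordinate-wise (and Lemma~\ref{lem:spec} for the spectral case) with union bounds over units and layers, and finally discharge the conditioning by replacing each perturbed-network norm with its unperturbed value plus the supplied tolerance, using $\unchangedacts_{d-1}$ to transfer the frozen-network bound to the actual perturbed network. You also correctly identify the delicate point in parts 3 and 4 — that only the blocks $d''>d'$ contribute, that each cross term sandwiches one fresh block between the unperturbed upper Jacobian and the already-perturbed lower Jacobian, and that peeling off the layer-$d$ factor yields the $\matrixnorm{W_d}{2,\infty}$ and $\spec{W_d}$ prefactors — which is exactly how the paper's proof proceeds.
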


\begin{proof}
For the most part of this discussion, we will consider a perturbed network where all the hidden units  are frozen to be at the same activation state as they were at, before the perturbation. We will denote the weights of such a network by $\W [+\U]$ and its output at the $d$th layer by $\nn{\W[+\U]}{d}{}{\vec{x}}$. 
By having the activations states frozen, the Gaussian perturbations propagate linearly through the activations, effectively remaining as Gaussian perturbations; then, we can enjoy the well-established properties of the Gaussian even after they propagate.


\subparagraph{Perturbation bound on the $\ell_2$ norm of layer $d$.} We bound the change in the $\ell_2$ norm of the $d$th layer's output by  applying a triangle inequality\footnote{Specifically, for two vectors $\vec{a}, \vec{b}$, we have from triangle inequality that $\elltwo{\vec{b}} \leq  \elltwo{\vec{a}} + \elltwo{\vec{b}-\vec{a}}$ and $\elltwo{\vec{a}} \leq \elltwo{\vec{b}} + \elltwo{\vec{a}-\vec{b}}$. As a result of this, we have: $-\elltwo{\vec{a}-\vec{b}} \leq \elltwo{\vec{a}} - \elltwo{\vec{b}} \leq \elltwo{\vec{a}-\vec{b}}$. We use this inequality in our proof.} after splitting it into a sum of vectors. Each summand here (which we define as $\vec{v}_{d'}$ for each $d' \leq d$) is the difference in the $d$th layer output on introducing noise in weight matrix $d'$ after having introduced noise into all the first $d'-1$ weight matrices.

\begin{align*}
\ellone{\elltwo{\nn{\W[+\U_{d}]}{d}{}{\vec{x}}} - \elltwo{\nn{\W}{d}{}{\vec{x}} }  } & \leq \elltwo{\nn{\W[+\U_{d}]}{d}{}{\vec{x}} - \nn{\W}{d}{}{\vec{x}}} \\
\intertext{because the activations are ReLU, we can replace this with the perturbation of the pre-activation} 
 & \leq \elltwo{\prenn{\W[+\U_{d}]}{d}{}{\vec{x}} - \prenn{\W}{d}{}{\vec{x}}} \\
& \leq \elltwo{\sum_{d'=1}^{d} \left( \underbrace{\prenn{\W[+\U_{d'}]}{d}{}{\vec{x}} - \prenn{\W[+\U_{d'\focus{-1}}]}{d}{}{\vec{x}}}_{:= \vec{v}_{d'}} \right)} \\
& \leq \sum_{d'=1}^{d} \elltwo{\vec{v}_{d'}} =  {\sum_{d'=1}^{d} \sqrt{\sum_{h} { v^2_{d',h}}} } \numberthis      \label{align:l2normeqn} \\
\end{align*}

Here, $v_{d',h}$ is the perturbation in the preactivation of hidden unit $h$ on layer $d'$, brought about by perturbation of the $d'$th weight matrix in a network where only the first $d'-1$ weight matrices have already been perturbed.

Now, for each $h$, we bound $v_{d',h}$ in Equation~\ref{align:l2normeqn}. Since the activations have been frozen we can rewrite each $v_{d',h}$ as the product of the $h$th row of the unperturbed network's Jacobian $\verbaljacobian{d'}{d}$ , followed by only the perturbation matrix $U_{d'}$, and then the output of the layer $d'-1$. Concretely, we have\footnote{Below, we have used $H_d$ to denote the number of units on the $d$th layer (and this equals $H$ for the hidden units and $K$ for the output layer).}  \footnote{Note that the succinct formula below holds good even for the corner case $d'=d$, where the first Jacobian-row term becomes a vector with zeros on all but the $h$th entry and therefore only the $h$th row of the perturbation matrix $U_{d'}$ will participate in the expression of $v_{d',h}$.
}:
\begin{align*}
{{v}}_{d',h} = 
\overbrace{\jacobian{\W}{d'}{d}{h}{\vec{x}}}^{1 \times H_{d'}} \underbrace{ \overbrace{U_{d'}}^{H_{d'} \times H_{d'-1}} \overbrace{\nn{\W[+\U_{d'-1}]}{d'-1}{}{\vec{x}}}^{H_{d'-1} \times 1} }_{\text{spherical Gaussian}}
\end{align*}


What do these random variables $v_{d',h}$ look like? 
Conditioned on $\U_{d'-1}$, the second part of our expansion of ${{v}}_{d',h}$, namely, $U_{d'} \nn{\W[+\U_{d'-1}]}{d'-1}{}{\vec{x}}$ is a multivariate spherical Gaussian (see Lemma~\ref{lem:gaussian-operator-norm}) of the form $\N(0, \sigma^2 \| \nn{\W[+\U_{d'-1}]}{d'-1}{}{\vec{x}}\|^2 I)$. As a result, conditioned on $\U_{d'-1}$, ${{v}}_{d',h}$ is a univariate Gaussian $\N(0, \sigma^2 \|{\jacobian{\W}{d'}{d}{h}{\vec{x}}}\|^2 \|{\nn{\W[+\U_{d'-1}]}{d'-1}{}{\vec{x}}}\|^2)$.

Then, we can apply a standard Gaussian tail bound (see Lemma~\ref{lem:hoeffding}) to conclude that with probability $1-\toldelta/DH$ over the draws of $U_{d'}$ (conditioned on any $\U_{d'-1}$), $v_{d',h}$ is bounded as:

\begin{align*}
|v_{d',h}| & \leq \sigma \elltwo{\jacobian{\W}{d'}{d}{h}{\vec{x}}} \|{\nn{\W[+\U_{d'-1}]}{d'-1}{}{\vec{x}}}\| \sqrt{2 \ln \frac{2DH}{\toldelta}}.  \numberthis \label{align:l2normeqn2}  \\
\end{align*}

Then, by a union bound over all the hidden units on layer $d$, and for each $d'$, we have that with probability $1-\toldelta$, Equation~\ref{align:l2normeqn} is upper bounded as:

\begin{align*}
\sum_{d'} \sqrt{\sum_{h} v_{d',h}^2} \leq \sum_{d'=1}^{d} \sigma \frob{\jacobian{\W}{d'}{d}{}{\vec{x}}} \|{\nn{\W[+\U_{d'-1}]}{d'-1}{}{\vec{x}}}\| \sqrt{2 \ln \frac{2DH}{\toldelta}}. \numberthis \label{align:l2normeqn3}
\end{align*}

Using this we prove the probability bound in the lemma statement. To simplify notations, let us denote $\tolset_{d-1} \bigcup \{ \toljacob{d'}{d} \}^{d}_{ d'= 1}$ by $\prev{\tolset}$. Furthermore, we will drop redundant symbols in the arguments of the events we have defined. Then, recall that we want to upper bound the following probability (we ignore the arguments $\W+\U$ and $\vec{x}$ for brevity):

\begin{align*}
& \pr{
 \left(\lnot \boundedperturbationevent\left(  { \{ \toloutput{d}'\} }\right)  \right) \wedge \boundedperturbationevent(\prev{\tolset}) \wedge \unchangedacts_{d-1}}
 \end{align*}

Recall that Equation~\ref{align:l2normeqn3} is a bound on the perturbation of the $\ell_2$ norm of the  $d$th layer's output when the activation states are explicitly frozen. If the perturbation we randomly draw happens to satisfy 
$\unchangedacts_{d-1}$ then the bound in Equation~\ref{align:l2normeqn3} holds good even in the case where the activation states are not explicitly frozen. Furthermore, when $\boundedperturbationevent(\prev{\tolset})$ holds, the bound in Equation~\ref{align:l2normeqn3} can be upper-bounded by $\toloutput{d}'$ as defined in the lemma statement, because under $\boundedperturbationevent(\prev{\tolset})$, the middle term in Equation~\ref{align:l2normeqn3} can be upper bounded using triangle inequality as $\|{\nn{\W[+\U_{d'-1}]}{d'-1}{}{\vec{x}}}\| \leq \|{\nn{\W}{d'-1}{}{\vec{x}}}\| + \toloutput{d'-1}$. Hence, the event above happens only for the perturbations for which Equation~\ref{align:l2normeqn3} fails and hence we have that the above probability term is upper bounded by $\toldelta$.

\subparagraph{Perturbation bound on the preactivation values of layer $d$.} Following the same analysis as above, the bound we are seeking here is essentially $\max_h \sum_{d'=1}^{d} |v_{d',h}|$. The bound follows similarly from Equation~\ref{align:l2normeqn2}.

\subparagraph{Perturbation bound on the $\ell_2$ norm of the rows of the Jacobian $\verbaljacobian{d'}{d}$.} 
We split this term like we did in the previous subsection, and apply triangle equality as follows:
\begin{align*}
& \max_h \ellone{{\elltwo{\jacobian{\W}{d'}{d}{h}{\vec{x}}}-\elltwo{\jacobian{\W[+\U_{d}]}{d'}{d}{h}{\vec{x}}}}} \\
& \leq \max_h \elltwo{\jacobian{\W}{d'}{d}{h}{\vec{x}}-{\jacobian{\W[+\U_{d}]}{d'}{d}{h}{\vec{x}}}} \\
& \leq \elltwo{\sum_{d''=1}^{d} \left( \underbrace{\jacobian{\W[+\U_{d''}]}{d'}{d}{h}{\vec{x}}-{\jacobian{\W[+\U_{d''\focus{-1}}]}{d'}{d}{h}{\vec{x}}}}_{:= \vec{y}^{d''}_{h}} \right)} \\
& \leq \max_h {\sum_{d''=1}^{d} \elltwo{\vec{y}^{d''}_{h}}} = \max_h \sum_{d''=1}^{d} \sqrt{\sum_{h'} \sq{ y_{d'',h,h'}} } \numberthis      \label{align:jacobnormeqn} \\
\end{align*}

Here, we have defined $\vec{y}^{d''}_h$ to be the vector that corresponds to the difference in the $h$th row of the Jacobian $\verbaljacobian{d'}{d}$ brought about by perturbing the $d''$th weight matrix, given that the first $d''-1$ matrices have already been perturbed.  We use $h$ to iterate over the units in the $d$th layer and $h'$ to iterate over the units in the $d'$th layer. 

Now, under the frozen activation states, when we perturb the weight matrices from $1$ uptil $d'$, since these matrices are not involved in the Jacobian $\verbaljacobian{d'}{d}$, fortunately, the Jacobian  $\verbaljacobian{d'}{d}$ is not perturbed (as the set of active weights in $\verbaljacobian{d'}{d}$ are the same when we perturb $\W$ as $\W[+\U_{d'}]$).  So, we will only need to bound $y_{d'',h,h'}$ for $d'' > d'$. 

What does the distribution of $y_{d'',h,h'}$ look like for $d'' > d'$? We can expand\footnote{Again, note that the below succinct formula works even for corner cases like $d'' = d'$ or $d'' = d$.} $y_{d'',h,h'}$ as the product of i) the $h$th row of the Jacobian $\verbaljacobian{d''}{d}$ ii) the perturbation matrix $\U_{d''}$ and iii) the $h'$th column of the Jacobian $\verbaljacobian{d''-1}{d'}$ for the perturbed network:

\begin{align*}
y_{d'',h,h'} = \overbrace{\jacobian{\W}{d''}{d}{h}{\vec{x}}}^{1 \times H_{d''}} \; \underbrace{ \overbrace{U_{d''}}^{H_{d''} \times H_{d''-1}} \; \overbrace{\jacobian{\W[+\U_{d''-1}]}{d'}{d''-1}{:,h'}{\vec{x}}}^{H_{d''-1} \times 1}  }_{\text{spherical Gaussian}}
\end{align*}

Conditioned on $\U_{d''-1}$, the second part of this expansion, namely, $U_{d''} \jacobian{\W[+\U_{d''-1}]}{d'}{d''-1}{:,h'}{\vec{x}}$ is a multivariate spherical Gaussian (see Lemma~\ref{lem:gaussian-operator-norm}) of the form $\N(0, \sigma^2 \| \jacobian{\W[+\U_{d''-1}]}{d'}{d''-1}{:,h'}{\vec{x}}\|^2 I)$. As a result, conditioned on $\U_{d''-1}$, $y_{d'',h,h'}$ is a univariate Gaussian $\N(0, \sigma^2\|\jacobian{\W}{d''}{d}{h}{\vec{x}} \|^2\| \jacobian{\W[+\U_{d''-1}]}{d'}{d''-1}{:,h'}{\vec{x}}\|^2 )$. 

Then, by applying a standard Gaussian tail bound we have that with probability $1-\frac{\toldelta}{D^2H^2}$ over the draws of $U_{d''}$ conditioned on $\U_{d''-1}$, each of these quantities is bounded as:

\begin{align*}
|y_{d'',h,h'}| \leq \sigma \|\jacobian{\W}{d''}{d}{h}{\vec{x}} \| \| \jacobian{\W[+\U_{d''-1}]}{d'}{d''-1}{:,h'}{\vec{x}}\| \sqrt{2\ln \frac{D^2H^2}{\toldelta}} \numberthis\label{eq:vanilla-jacobian-bound}
\end{align*}

We simplify the bound on the right hand side a bit further so that it does not involve any Jacobian of layer $d$. Specifically, when $d'' < d$, $\elltwo{\jacobian{\W}{d''}{d}{h}{\vec{x}}}$ can be written as the product of the spectral norm of the Jacobian $\verbaljacobian{d''}{d'-1}$ and the $\ell_2$ norm of the $h$th row of Jacobian $\verbaljacobian{d}{d-1}$. Here, the latter can be upper bounded by the $\ell_2$ norm of the $h$th row of $W_{d}$ since the Jacobian (for a ReLU network) is essentially $W_d$ but with some columns zerod out. When $d=d''$, $\elltwo{\jacobian{\W}{d'}{d}{h}{\vec{x}}}$ is essentially $1$ as the Jacobian is merely the identity matrix. Thus, we have:
\begin{align*}
|y_{d'',h,h'}|  \leq \begin{cases}  \sigma \elltwo{\vec{w}^{d}_{h}} \spec{\jacobian{\W}{d''}{d-1}{}{\vec{x}}} \| \jacobian{\W[+\U_{d''-1}]}{d'}{d''-1}{:,h'}{\vec{x}}\| \sqrt{4\ln \frac{DH}{\toldelta}} & d'' < d \\
  \sigma \| \jacobian{\W[+\U_{d''-1}]}{d'}{d''-1}{:,h'}{\vec{x}}\| \sqrt{4\ln \frac{DH}{\toldelta}} & d'' = d  \\ 
\end{cases}
\end{align*}

By a union bound on all $d''$, we then get that with probability $1-\frac{\toldelta}{D}$ over the draws of $\U_{d}$, we can upper bound Equation~\ref{align:jacobnormeqn} as:

\begin{align*}
&\max_h \sum_{d''=1}^{d}\sqrt{\sum_{h'}\sq{y_{d'',h,h'}}} \leq  \sigma  \frob{ \jacobian{\W[+\U_{d''-1}]}{d'}{d-1}{}{\vec{x}}} \sqrt{4\ln \frac{DH}{\toldelta}} +  \\
&   \sum_{d''=d'+1}^{d-1} \sigma \max_h \elltwo{\vec{w}^{d}_{h}} \spec{\jacobian{\W}{d''}{d-1}{}{\vec{x}}}  \frob{ \jacobian{\W[+\U_{d''-1}]}{d'}{d''-1}{}{\vec{x}}} \sqrt{4\ln \frac{DH}{\toldelta}}
\end{align*}

By again applying a union bound for all $d'$, we get the above bound to hold simultaneously for all $d'$ with probability at least $1-\toldelta$. Then, by a similar argument as in the case of the perturbation bound on the output of each layer,  we get the result of the lemma.

\subparagraph{Perturbation bound on the spectral norm of the Jacobian $\verbaljacobian{d'}{d}$.} 

Again,  we split this term and apply triangle equality as follows:
\begin{align*}
& \ellone{{\spec{\jacobian{\W}{d'}{d}{}{\vec{x}}}-\spec{\jacobian{\W[+\U_{d}]}{d'}{d}{}{\vec{x}}}}} \\
& \leq \spec{\jacobian{\W}{d'}{d}{}{\vec{x}}-{\jacobian{\W[+\U_{d}]}{d'}{d}{}{\vec{x}}}} \\
& \leq \spec{\sum_{d''=1}^{d} \left( \underbrace{\jacobian{\W[+\U_{d''}]}{d'}{d}{}{\vec{x}}-{\jacobian{\W[+\U_{d''\focus{-1}}]}{d'}{d}{}{\vec{x}}}}_{:= Y_{d''}} \right)} \\
& \leq \sum_{d''=1}^{d} \spec{Y_{d''}} \numberthis      \label{align:jacobspeceqn}
\end{align*}

Here, we have defined $Y_{d''}$ to be the matrix that corresponds to the difference in the Jacobian $\verbaljacobian{d'}{d}$ brought about by perturbaing the the $d''$th weight matrix, given that the first $d''-1$ matrices have already been perturbed.

As argued before, under the frozen activation states, when we perturb the weight matrices from $1$ uptil $d'$, since these matrices are not involved in the Jacobian $\verbaljacobian{d'}{d}$, fortunately, the Jacobian  $\verbaljacobian{d'}{d}$ is not perturbed (as the set of active weights in $\verbaljacobian{d'}{d}$ are the same when we perturb $\W$ as $\W[+\U_{d'}]$).  So, we will only need to bound $Y_{d''}$ for $d'' > d'$.

Recall that we can exapnd $Y_{d''}$ for $d'' > d'$, $y_{d'',h,h'}$ as the product of i) Jacobian $\verbaljacobian{d''}{d}$ ii) the perturbation matrix $\U_{d''}$ and iii)  the Jacobian $\verbaljacobian{d''-1}{d'}$ for the perturbed network\footnote{Again, note that the below succinct formula works even for corner cases like $d'' = d'$ or $d'' = d$.}:

\begin{align*}
Y_{d''} = \overbrace{\jacobian{\W}{d''}{d}{}{\vec{x}}}^{H_{d} \times H_{d''}} \; \underbrace{ \overbrace{U_{d''}}^{H_{d''} \times H_{d''-1}} \; \overbrace{\jacobian{\W[+\U_{d''-1}]}{d'}{d''-1}{}{\vec{x}}}^{H_{d''-1} \times H_{d'}}  }_{\text{spherical Gaussian}}
\end{align*}

Now, the spectral norm of $Y_{d''}$ is at most the products of the spectral norms of each of these three matrices. 
Using Lemma~\ref{lem:spec}, the spectral norm of the middle term $U_{d''}$ can be bounded by $\sigma \sqrt{2H \ln \frac{2DH}{\toldelta}}$ with high probability $1-\frac{\toldelta}{D}$ over the draws of $U_{d''}$. \footnote{Although Lemma~\ref{lem:spec} applies only to the case where $U_{d''}$ is a $H \times H$ matrix, it can be easily extended to the corner cases when $d''=1$ or $d'' =D$. When $d''=1$, $U_{d''}$ would be a $H \times N$ matrix, where $H > N$; one could imagine adding more random columns to this matrix, and applying Lemma~\ref{lem:spec}. Since adding columns does not reduce the spectral norm, the bound on the larger matrix would apply on the original matrix too. A similar argument would apply to $d''=D$, where the matrix would be $K \times H$. } 

We will also decompose the spectral norm of the first term so that our final bound does not involve any Jacobian of the $d$th layer. When $d'' = d$, this term has spectral norm $1$ because the Jacobian $\verbaljacobian{d}{d}$ is essentially the identity matrix. When $d'' < d$,  we have that $\spec{\jacobian{\W}{d''}{d}{}{\vec{x}}} \leq \spec{\jacobian{\W}{d-1}{d}{}{\vec{x}}} \spec{\jacobian{\W}{d''}{d-1}{}{\vec{x}}}$. Furthermore, since, for a ReLU network, $\jacobian{\W}{d-1}{d}{}{\vec{x}}$ is effectively $W_{d}$ with some columns zerod out, the spectral norm of the Jacobian is upper bounded by the spectral norm $W_d$.

Putting all these together, we have that with probability $1-\frac{\toldelta}{D}$ over the draws of $U_{d''}$, the following holds good:

\begin{align*}
\spec{Y_{d''}} \leq \begin{cases} \sigma \spec{W_d} \spec{\jacobian{\W}{d''}{d-1}{}{\vec{x}}}  \spec{\jacobian{\W[+\U_{d''-1}]}{d'}{d''-1}{}{\vec{x}}} \sqrt{2H \ln \frac{2DH}{\toldelta}} & d'' < d\\
\sigma \spec{\jacobian{\W[+\U_{d''-1}]}{d'}{d''-1}{}{\vec{x}}} \sqrt{2H \ln \frac{2DH}{\toldelta}} & d'' = d\\
\end{cases}
\end{align*}

By a union bound, we then get that with probability $1-{\toldelta}$ over the draws of $\U_{d}$, we can upper bound Equation~\ref{align:jacobspeceqn} as:

\begin{align*}
\sum_{d''=1}^{d}\spec{Y_{d'}} \leq & \sigma \spec{\jacobian{\W[+\U_{d''-1}]}{d'}{d''-1}{}{\vec{x}}} \sqrt{2H \ln \frac{2DH}{\toldelta}} \\
& + \sigma \sum_{d''=d'+1}^{d-1}  \spec{W_d} \spec{\jacobian{\W}{d''}{d-1}{}{\vec{x}}} \spec{\jacobian{\W[+\U_{d''-1}]}{d'}{d''-1}{}{\vec{x}}} \sqrt{2H \ln \frac{2DH}{\toldelta}}
\end{align*}

Note that the above bound simultaneously holds over all $d'$ (without the application of a union bound). Finally we get the result of the lemma by a similar argument as in the case of the perturbation bound on the output of each layer.

\end{proof}

\section{Proof for Theorem~\ref{thm:generalization}}
\label{app:generalization}

Below, we present our main result for this section, a generalization bound on a class of networks that is based on certain norm bounds on the {\em training data}. We provide a more intuitive presentation of these bounds after the proof in Appendix~\ref{app:interpret-generalization}.

 \begin{theorem*}
 \textbf{\em \ref{thm:generalization}} 
For any $\delta > 0$, with probability $1-\delta$ over the draw of samples $S\sim\D^m$, for any $\W$,
 we have that: 

\begin{align*}
\prsub{(\vec{x},y) \sim \D}{ \mathcal{L}_{0}(\nn{\W}{}{}{\vec{x}},y)}  \leq &  \prsub{(\vec{x},y) \sim S}{ \mathcal{L}_{\classmargin}(\nn{\W}{}{}{\vec{x}},y)} + \\
& +  \bigoh{D \cdot \sqrt{\frac{1}{m-1} \cdot \left( 2 \sum_{d=1}^{D} \elltwo{W_d-Z_d}^2 \frac{1}{(\sigma^\star)^2} + \ln \frac{Dm}{\delta} \right) }} \\
\end{align*}
where $\frac{1}{\sigma^\star}$ is $\bigoh{ \sqrt{H} \cdot \sqrt{\ln \left(DH\sqrt{m}\right)} \times \max \{\boundoutput,\boundhiddenpreact, \boundjacob, \boundspec,  \boundoutputpreact \}}$, where:

\begin{align*}
\boundoutput& := \bigoh{  \max_{1 \leq d < D} \frac{\sum_{d'=1}^{d} {\trainjacob{d'}{d}} \trainoutput{d'-1} }{ \trainoutput{d} } }  \\
\boundhiddenpreact & := \bigoh{ \max_{1 \leq d < D}
\frac{\sum_{d'=1}^{d}  \trainjacob{d'}{d} \trainoutput{d'-1} }{    \sqrt{H} \trainpreact{d} }
}   \\
\boundjacob & :=  \bigoh{ \max_{1 \leq d < D}  \max_{1 \leq d' < d \leq D}  \frac{{\trainjacob{d'}{d-1} } + \matrixnorm{W_d}{2,\infty} \sum_{d''=d'+1}^{d-1} \trainspec{d''}{d-1}  \trainjacob{d'}{d''-1}}{  \trainjacob{d'}{d} } }   \\
 \boundspec & :=  \bigoh{ \max_{1 \leq d < D}  \max_{1 \leq d' < d \leq D}  \frac{ \trainspec{d'}{d-1} + \spec{W_d} \sum_{d''=d'+1}^{d-1} \trainspec{d''}{d-1} \trainspec{d'}{d''-1}  }{  \trainspec{d'}{d} } } \\
 \boundoutputpreact & := \bigoh{ 
\frac{\sum_{d=1}^{D} \trainjacob{d}{D} \trainoutput{d-1} }{   \sqrt{H} \classmargin }
}   \\
\end{align*}
where,\\
 $\trainoutput{d} := \max \left(\max_{(\vec{x},y) \in S} \elltwo{\nn{\W}{d}{}{\vec{x}}}, 1 \right)$ is an upper bound on the $\ell_2$ norm of the output of each hidden layer $d=0,1,\hdots, D-1$ on the training data. Note that for layer $0$, this would correspond to the $\ell_2$ norm of the input.\\

$\trainpreact{d} := \min_{(\vec{x},y) \in S} \min_h \ellone{\nn{\W}{d}{h}{\vec{x}}}$ is a lower bound on the absolute values of the pre-activations for each layer $d = 1, \hdots, D$ on the training data.\\

$\trainjacob{d'}{d} := \max \left(\max_{(\vec{x},y) \in S} \matrixnorm{\jacobian{\W}{d'}{d}{}{\vec{x}}}{2,\infty}, 1 \right)$ is an upper bound on the row $\ell_2$ norms of the Jacobian for each layer $d= 1, 2, \hdots, D$, and $d' = 1, \hdots, d$ on the training data. \\

$\trainspec{d'}{d} := \max \left(\max_{(\vec{x},y) \in S} \spec{\jacobian{\W}{d'}{d}{}{\vec{x}}}, 1 \right)$ is an upper bound on the spectral norm of the Jacobian for each layer $d= 1, 2, \hdots, D$, and $d' = 1, \hdots, d$ on the training data.
\end{theorem*}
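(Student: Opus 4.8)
The plan is to obtain Theorem~\ref{thm:generalization} as a concrete instantiation of the abstract guarantee in Theorem~\ref{thm:generic-generalization}, with Lemma~\ref{lem:noise-resilience-induction} supplying exactly the perturbation bounds required by the noise-resilience constraint of Equation~\ref{eq:generic-noise-resilience}. First I would cast the four families of norm conditions --- the layer output $\ell_2$ norms, the pre-activation magnitudes, the Jacobian row $\ell_2$ norms, and the Jacobian spectral norms --- as input-dependent properties $\rho_{r,l}$ with positive constants $\trainmargin_{r,l}$ in the sense of Equation~\ref{eq:condition}. The pre-activation condition maps directly ($\rho = \ellone{\nn{\W}{d}{h}{\vec{x}}}$, constant $\trainpreact{d}$), while each upper-bound condition (e.g. $\elltwo{\nn{\W}{d}{}{\vec{x}}} \le \trainoutput{d}$) is rephrased as a lower bound on the slack $\trainoutput{d} - \elltwo{\nn{\W}{d}{}{\vec{x}}}$ (with a small buffer built into the constant so the condition holds strictly on training data), and the classifier margin forms the final $(R+1)$th singleton property with constant $\classmargin$. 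The crucial design choice is the ordering: I index the properties layer by layer, so that $R = \bigoh{D}$ and so that the perturbation in each property is governed only by strictly earlier properties, which is precisely the dependency structure Equation~\ref{eq:generic-noise-resilience} demands.

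Next I would verify Equation~\ref{eq:generic-noise-resilience} by invoking Lemma~\ref{lem:noise-resilience-induction}, which gives for each property a high-probability perturbation bound $\tolset'$ expressed recursively through the tolerances $\tolset$ of the preceding layers. The key maneuver is to condition on the event $\boundedperturbationevent$ that the earlier properties remain within their tolerances --- this is exactly the ``$\forall q < r$'' preservation clause inside the probability in Equation~\ref{eq:generic-noise-resilience}. Under this event each preceding tolerance ($\toloutput{d'-1}$, $\trainjacob{d'}{d}$, $\trainspec{d'}{d}$, and so on) may be replaced by its training-data constant, collapsing the recursive bounds into closed-form expressions $\Delta_{r,l}(\sigma)$ depending only on $\sigma$ and the constants. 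Bounding $\frob{\jacobian{\W}{d'}{d}{}{\vec{x}}}$ by $\sqrt{H}$ times its max-row norm, and spectral norms of sub-Jacobians by products of $\spec{W_d}$-free analogues, is what injects the $\sqrt{H}$ factor while avoiding the product of spectral norms. I then set each $\trainmargin_{r,l}$ to a fixed fraction of the relevant training bound and solve $\Delta_{r,l}(\sigma^\star) \le \trainmargin_{r,l}$ simultaneously over all $r,l$; the binding constraints are precisely $\boundoutput$, $\boundhiddenpreact$, $\boundjacob$, $\boundspec$, $\boundoutputpreact$, yielding $1/\sigma^\star = \logoh{\sqrt{H}\max\{\boundoutput,\boundhiddenpreact,\boundjacob,\boundspec,\boundoutputpreact\}}$.

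Finally, with $\sigma^\star$ determined I would apply Theorem~\ref{thm:generic-generalization} directly. Choosing the prior $P = \N(\Z,(\sigma^\star)^2 I)$ centered at the random initialization and computing the divergence via Lemma~\ref{lem:kldivergence} gives $KL(\N(\W,(\sigma^\star)^2 I)\|P) = \sum_{d=1}^{D}\frob{W_d - Z_d}^2/(2(\sigma^\star)^2)$; substituting this together with $R = \bigoh{D}$ into the generic bound produces the stated expression. I expect the main obstacle to be the second step: correctly decoupling the recursive perturbation bounds of Lemma~\ref{lem:noise-resilience-induction} so that, conditioned on preservation of the earlier conditions, every $\Delta_{r,l}(\sigma)$ reduces to a function of $\sigma$ and the training constants alone, while simultaneously fixing the $\trainmargin_{r,l}$ so this chain of substitutions is self-consistent across all layers and never reintroduces the product of spectral norms. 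Matching the book-keeping here --- the union bounds over the $\bigoh{D}$ conditions and the factor-of-two slack in Equation~\ref{eq:generic-noise-resilience} --- to the exact constants in the five $\mathcal{B}$-terms is where the real work lies.
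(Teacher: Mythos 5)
Your plan reproduces the paper's proof architecture almost exactly: the same four families of properties recast as one-sided slacks (the paper uses $2\trainoutput{d}-\elltwo{\nn{\W}{d}{}{\vec{x}}}$ with margin $\trainoutput{d}$, analogously for Jacobian row and spectral norms, the pre-activation slack $\ellone{\nn{\W}{d}{h}{\vec{x}}}-\trainpreact{d}/2$, and the classifier margin last), the same layer-wise ordering giving $R\le 4D$, the same instantiation of Lemma~\ref{lem:noise-resilience-induction} with tolerances $\tolset=\marginset/2$ and $\toldelta=1/(4D\sqrt{m})$ to verify Equation~\ref{eq:generic-noise-resilience}, and the same prior $\N(\Z,(\sigma^\star)^2 I)$ combined with Lemma~\ref{lem:kldivergence}. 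However, there is a genuine gap in your final step: Theorem~\ref{thm:generic-generalization} requires the margin constants $\trainmargin_{r,l}$ --- and therefore $\sigma^\star$, which also fixes the prior --- to be chosen \emph{before} the sample is drawn, whereas you set each $\trainmargin_{r,l}$ to ``a fixed fraction of the relevant training bound,'' and the quantities $\trainoutput{d},\trainpreact{d},\trainjacob{d'}{d},\trainspec{d'}{d}$ in the theorem statement are maxima/minima over the realized sample $S$. Applied as you describe, the abstract theorem is invoked with a data-dependent prior and data-dependent conditions, which is invalid. The paper closes this with a covering argument you never mention: each of the at most $4D^2$ constants is confined to a range of the form $[1,\mathrm{poly}(H,m,\classmargin)]$ outside of which the bound is vacuous anyway, that range is gridded geometrically, Theorem~\ref{thm:generic-generalization} is instantiated at every grid point with failure probability $\delta/(\mathrm{poly}(H,D,m,\classmargin))^{4D^2}$, and a union bound plus rounding of the realized constants to the nearest grid point costs only constant multiplicative factors and an additive $\sqrt{D^2\log(\mathrm{poly}(H,D,m,\classmargin))/m}$ term absorbed in the $\bigoh{\cdot}$. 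Without this step the theorem as stated does not follow from your argument.

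A second, smaller omission: the perturbation bounds of Lemma~\ref{lem:noise-resilience-induction} are all stated conjunctively with the event $\unchangedacts_{d-1}(\W+\U,\vec{x})$, while Equation~\ref{eq:generic-noise-resilience} contains no such event, so one must show that the activation states provably do not flip. The paper does this by combining the precondition $\rho_{q,l}>0$ (which for the pre-activation properties means $\min_h\ellone{\nn{\W}{d}{h}{\vec{x}}}\ge\trainpreact{d}/2$ on the given input) with the preservation event $\boundedperturbationevent(\W+\U,\marginset_{d-1}/2,\vec{x})$ (pre-activation perturbations at most $\trainpreact{d}/4$), concluding that no unit below the current layer changes state, so $\boundedperturbationevent$ implies $\unchangedacts$; this is precisely why the lower bound on pre-activation magnitudes must itself be one of the generalized properties. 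Your conditioning step has the right raw material but never draws this inference, and without it the lemma's probability bounds cannot be matched to the form required by the framework.
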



\subsection{Notations.}
To make the presentation of our proof cleaner, we will set up some notations. First, we use $\trainset$  to denote the `set' of constants related to the norm bounds on training set defined in the Theorem above. (Here we use the term set loosely, like we noted in the previous section.) Based on these training set related constants, we also define $\testset$ to be the following constants corresponding to weaker norm-bounds related to the {\em test data}:


\begin{enumerate}
\item $\testoutput{d} := \focus{2}\trainoutput{d}$, for each hidden layer $d=0,1,\hdots, D-1$, (we will use this to bound $\ell_2$ norms of the outputs of the layers of the network on a test input)
\item $\testpreact{d} := \trainpreact{d}/\focus{2}$ for each layer $d = 1, \hdots, D$, (we will use this to bound magnitudes of the preactivations values of the network on a test input).
\item  $\testjacob{d'}{d}:= \focus{2}\trainjacob{d'}{d}$ for each layer $d= 1, 2, \hdots, D$, and $d' = 1, \hdots, d$ (we will use this to bound $\ell_2$ norms of rows in the Jacobians of the network for a test input)
\item  $\testspec{d'}{d}:= \focus{2}\trainspec{d'}{d}$ for each layer $d= 1, 2, \hdots, D$, and $d' = 1, \hdots, d$ (we will use this to bound spectral norms of the Jacobians of the network for a test input)

\end{enumerate}

{\bf Note about (abuse of) notation:} We reiterate a point about our notation which we also made in Appendix~\ref{app:noise-resilience}.
We call $\trainset$ and $\testset$ a `set' to denote a group of related constants by a single symbol. Each element in this set has a particular semantic associated with it, unlike the standard notation of a set.

Now, for any given set of constants $\tempset$, for a particular weight configuration $\W$, and for a given input $\vec{x}$, we define the following event which holds when the network satisfies certain norm-bounds defined by the constants $\tempset$ (that are favorable for noise-resilience). 

\begin{definition}
\label{def:good-point}
For a set of constants $\tempset$, for network parameters $\W$ and for any input $\vec{x}$, we define $\boundednormevent(\W,\tempset,  \vec{x})$ to be the event that all the following hold good: 
\begin{enumerate}
	\item for all $\tempoutput{d} \in \tempset$,
	 $\elltwo{\nn{\W}{d}{}{\vec{x}}} <\tempoutput{d}$ (Output of the layer does not have too large an $\ell_2$ norm). 
	\item for all $\temppreact{d} \in \tempset$,
	 $\min_h \ellone{\nn{\W}{d}{h}{\vec{x}}} > \temppreact{d}$. (Pre-activation values are not too small). 
	\item for all $\tempjacob{d'}{d} \in \tempset$, 
	$\max_h  \elltwo{\jacobian{\W}{d'}{d}{h}{\vec{x}}} < \tempjacob{d'}{d}$ (Rows of Jacobian do not have too large an $\ell_2$ norm). 
	\item for all $\tempspec{d'}{d} \in \tempset$, 
	$\spec{\jacobian{\W}{d'}{d}{}{\vec{x}}} < \tempspec{d'}{d}$ (Jacobian does not have too large a spectral norm). 
\end{enumerate}
\end{definition}

\subparagraph{Note:} (Similar to a note under Definition~\ref{def:tolerance-parameters}) A subtle point in the above definition (which we will make use of, to state our theorems) is that if we supply only a subset of $\tempset$  to the above event, $\boundednormevent(\W,\cdot, \vec{x})$ then it would denote the event that  only those subset of properties satsify the respective norm bounds.

\subsection{Proof for Theorem~\ref{thm:generalization}}

\begin{proof}

To apply the framework, we will have to first define and order the input-dependent properties $\rho$ and the corresponding margins $\Delta^{\star}$ used in Theorem~\ref{thm:generic-generalization}. 
We will define these properties in terms of the following functions: $\elltwo{\nn{\W}{d}{}{\vec{x}}}$, $\nn{\W}{d}{h}{\vec{x}}$,  $\elltwo{\jacobian{\W}{d'}{d}{h}{\vec{x}}}$ and $\spec{\jacobian{\W}{d'}{d}{}{\vec{x}}}$. Following this definition, we will create an ordered grouping of these properties. 
\begin{definition}
\label{def:properties-and-margins}
For ReLU networks, we enumerate the input-dependent properties (on the left below) and their corresponding margins (on the right below) denoted with a superscript $\Delta$:
\[
\begin{array}{ccl}
 2\trainoutput{d}-\elltwo{\nn{\W}{d}{}{\vec{x}}} &  \marginoutput{d} := \trainoutput{d} & \text{for } d= 0, 1, 2, \hdots, D-1 \\
\ellone{\nn{\W}{d}{h}{\vec{x}}} - \frac{\trainpreact{d}}{2} & \marginpreact{d} := \frac{\trainpreact{d}}{2} & \text{for } d= 1, 2, \hdots, D-1 \; \; \text{for all possible } h \\ 
  2\trainjacob{d'}{d} -  \elltwo{\jacobian{\W}{d'}{d}{h}{\vec{x}}} & \marginjacob{d'}{d} :=  \trainjacob{d'}{d}  &  \text{for } \; d= 1,\hdots, D \;  \; d' = 1, \hdots, d-1 \; \; \text{for all possible } h\\ 
  2\trainspec{d'}{d} - \spec{\jacobian{\W}{d'}{d}{}{\vec{x}}} & \marginspec{d'}{d} :=  \trainspec{d'}{d}  &  \text{for } \; d= 1,\hdots, D \; \; d' = 1, \hdots, d-1
\end{array}
\]
and for the output layer $D$:
\begin{align*}
 {\nn{\W}{}{}{\vec{x}}}[y] -  \max_{j \neq y}{\nn{\W}{}{}{\vec{x}}}[j] & \; \; & \marginpreact{D}  := {\classmargin} \\
\end{align*}
We will use the notation $\marginset$ to denote the sets of all margin terms defined on the right side above, and $\marginset/2$ to denote the values in that set divided by $2$.
\end{definition}

\subparagraph{On the choice of the above functions and margin values.}
 Recall that for a specific input-dependent property $\rho$ and its margin $\Delta^\star$, the condition in Equation~\ref{eq:condition} requires that $\rho(\W,\vec{x},y) > \Delta^\star$. When we generalize these conditions in Theorem~\ref{thm:pac-bayesian}, we will assume that these are satisfied on the training data, and we show that on the test data the approximate version of these conditions, namely $\rho(\W,\vec{x},y) > 0$ hold. Below, we show what these conditions and their approximate versions translate to, in terms of norm-bounds on  $\elltwo{\nn{\W}{d}{}{\vec{x}}}$, $\nn{\W}{d}{h}{\vec{x}}$,  $\elltwo{\jacobian{\W}{d'}{d}{h}{\vec{x}}}$ and $\spec{\jacobian{\W}{d'}{d}{}{\vec{x}}}$; we encapsulate our statements in the following fact for easy reference later in our proof.

\begin{fact}
\label{fact:norm-bounds}
 When $\rho, \Delta^\star$ correspond to  
 $2\trainoutput{d}-\elltwo{\nn{\W}{d}{}{\vec{x}}}$ and  $\marginoutput{d}$, the conditions translate to upper bounds on the $\ell_2$ norm of the layer as:
 \begin{align*}
 \rho(\W,\vec{x},y) > \Delta^\star  \equiv &  {\elltwo{\nn{\W}{d}{}{\vec{x}}} < \trainoutput{d}}  \\
 \rho(\W,\vec{x},y) > 0  \equiv & \elltwo{\nn{\W}{d}{}{\vec{x}}} < 2\trainoutput{d} = \testoutput{d}
 \end{align*}

 When $\rho, \Delta^\star$ correspond to  
$\ellone{\nn{\W}{d}{h}{\vec{x}}} - \frac{\trainpreact{d}}{2}$ and $\marginpreact{d} := \frac{\trainpreact{d}}{2}$, 
then the conditions translate to lower bounds on the pre-activation values as:
 \begin{align*}
 \rho(\W,\vec{x},y) > \Delta^\star  \equiv &  { \ellone{\nn{\W}{d}{h}{\vec{x}}} > \trainpreact{d}}  \\
 \rho(\W,\vec{x},y) > 0  \equiv & \ellone{\nn{\W}{d}{h}{\vec{x}}} > \trainpreact{d}/2 = \testpreact{d}
 \end{align*}

 When $\rho, \Delta^\star$ correspond to  
  $2\trainjacob{d'}{d} -  \elltwo{\jacobian{\W}{d'}{d}{h}{\vec{x}}}$ and $\marginjacob{d'}{d}$, the conditions translate to upper bounds on the row $\ell_2$ norm of the Jacobian as:

   \begin{align*}
 \rho(\W,\vec{x},y) > \Delta^\star  \equiv & {\elltwo{\jacobian{\W}{d'}{d}{h}{\vec{x}}} < \trainjacob{d'}{d}} \\
 \rho(\W,\vec{x},y) > 0  \equiv & {\elltwo{\jacobian{\W}{d'}{d}{h}{\vec{x}}} < 2 \trainjacob{d'}{d}} = \testjacob{d'}{d}
 \end{align*}

 When $\rho, \Delta^\star$ correspond to  
$2\trainspec{d'}{d} - \spec{\jacobian{\W}{d'}{d}{}{\vec{x}}}$ and  $\marginspec{d'}{d}$, the conditions translate to upper bounds on the spectral norms of the Jacobian as:

   \begin{align*}
 \rho(\W,\vec{x},y) > \Delta^\star  \equiv & \spec{\jacobian{\W}{d'}{d}{}{\vec{x}}} < \trainspec{d'}{d} \\
 \rho(\W,\vec{x},y) > 0  \equiv & \spec{\jacobian{\W}{d'}{d}{}{\vec{x}}} > 2 \trainspec{d'}{d} = \testspec{d'}{d}
 \end{align*}

 When $\rho, \Delta^\star$ correspond to  
$ {\nn{\W}{}{}{\vec{x}}}[y] -  \max_{j \neq y}{\nn{\W}{}{}{\vec{x}}}[j]$ and $ \marginpreact{D} $, the conditions translate to lower bounds on the margin:
   \begin{align*}
 \rho(\W,\vec{x},y) > \Delta^\star  \equiv & {\nn{\W}{}{}{\vec{x}}}[y] -  \max_{j \neq y}{\nn{\W}{}{}{\vec{x}}}[j] > 0 \\
 \rho(\W,\vec{x},y) > 0  \equiv & {\nn{\W}{}{}{\vec{x}}}[y] -  \max_{j \neq y}{\nn{\W}{}{}{\vec{x}}}[j] > \classmargin
 \end{align*}
\end{fact}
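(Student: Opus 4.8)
The plan is to prove Fact~\ref{fact:norm-bounds} by a direct, case-by-case algebraic verification, with no probabilistic or analytic content. For each of the five families of input-dependent properties introduced in Definition~\ref{def:properties-and-margins}, I would substitute the explicit form of the property $\rho$ and the value of its margin $\Delta^\star$ into the two inequalities $\rho(\W,\vec{x},y) > \Delta^\star$ and $\rho(\W,\vec{x},y) > 0$, and then isolate the underlying norm quantity by a single rearrangement. The only external facts I rely on are the definitions of the test-set constants, namely $\testoutput{d} := 2\trainoutput{d}$, $\testpreact{d} := \trainpreact{d}/2$, $\testjacob{d'}{d} := 2\trainjacob{d'}{d}$ and $\testspec{d'}{d} := 2\trainspec{d'}{d}$, which identify the right-hand sides that appear in the statement.

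Concretely, for the layer-output property $\rho = 2\trainoutput{d} - \elltwo{\nn{\W}{d}{}{\vec{x}}}$ with $\Delta^\star = \marginoutput{d} = \trainoutput{d}$, I would subtract $2\trainoutput{d}$ and negate, turning $\rho > \trainoutput{d}$ into $\elltwo{\nn{\W}{d}{}{\vec{x}}} < \trainoutput{d}$ and $\rho > 0$ into $\elltwo{\nn{\W}{d}{}{\vec{x}}} < 2\trainoutput{d} = \testoutput{d}$. The Jacobian-row and Jacobian-spectral-norm properties share exactly this ``$2c$ minus the norm, margin $c$'' shape, so the identical rearrangement produces the bounds against $\trainjacob{d'}{d}$ and $\testjacob{d'}{d}$, and against $\trainspec{d'}{d}$ and $\testspec{d'}{d}$, respectively. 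The pre-activation property $\rho = \ellone{\nn{\W}{d}{h}{\vec{x}}} - \trainpreact{d}/2$ with margin $\marginpreact{d} = \trainpreact{d}/2$ has the opposite sign structure, so here I would instead add $\trainpreact{d}/2$, converting $\rho > \trainpreact{d}/2$ into $\ellone{\nn{\W}{d}{h}{\vec{x}}} > \trainpreact{d}$ and $\rho > 0$ into $\ellone{\nn{\W}{d}{h}{\vec{x}}} > \trainpreact{d}/2 = \testpreact{d}$.

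The only case needing slight care is the output-layer margin property, where $\rho$ is the raw margin $\nn{\W}{}{}{\vec{x}}[y] - \max_{j\neq y}\nn{\W}{}{}{\vec{x}}[j]$ itself rather than a constant minus a norm, and $\Delta^\star = \marginpreact{D} = \classmargin$. Here the two inequalities $\rho > \Delta^\star$ and $\rho > 0$ already read off directly as the margin bounds margin $> \classmargin$ and margin $> 0$, with no rearrangement at all; one simply records the two thresholds $\classmargin$ and $0$. I would flag that, since this property carries its sign directly (unlike the four ``$2c - {}$'' families), the bookkeeping of which line is the exact condition ($\rho > \Delta^\star$) and which is the approximate one ($\rho > 0$) must be matched against the convention fixed for the $R+1$-th property in Section~\ref{sec:framework}, where the test $0$-$1$ error corresponds to margin $\le 0$ and the training margin loss to margin $\le \classmargin$. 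There is no genuine obstacle in this proof: the whole statement is mechanical, and the sole thing worth double-checking is that each factor of two relating a $\trainset$ constant to its $\testset$ counterpart has been transcribed in the correct direction.
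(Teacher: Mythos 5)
Your proposal is correct and is precisely the paper's (implicit) argument: Fact~\ref{fact:norm-bounds} is never given a separate proof in the paper because it is exactly the mechanical substitution-and-rearrangement you describe, using the definitions $\testoutput{d} = 2\trainoutput{d}$, $\testpreact{d} = \trainpreact{d}/2$, $\testjacob{d'}{d} = 2\trainjacob{d'}{d}$, $\testspec{d'}{d} = 2\trainspec{d'}{d}$. Worth noting: your derivation yields the intended corrections of two typos in the printed statement --- in the spectral-norm case the approximate condition should read $\spec{\jacobian{\W}{d'}{d}{}{\vec{x}}} < 2\trainspec{d'}{d}$ (the paper's ``$>$'' is a sign slip), and in the output-margin case the literal reading of $\rho > \Delta^\star$ and $\rho > 0$ gives margin $> \classmargin$ and margin $> 0$ respectively, i.e.\ the two right-hand sides in the paper's display are swapped, which is exactly the bookkeeping subtlety you flagged.
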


\subparagraph{Grouping and ordering the properties.}
Now to apply the abstract generalization bound in Theorem~\ref{thm:generic-generalization}, recall that we need to come up with an ordered grouping of the functions above such that we can realize the constraint given in Equation~\ref{eq:generic-noise-resilience}. Specifically, this constraint effectively required that, for a given input, the perturbation in the properties grouped in a particular set be small, given that all the properties in the preceding sets satisfy the corresponding conditions on them. To this end, we make use of Lemma~\ref{lem:noise-resilience-induction} where we have proven perturbation bounds relevant to the properties we have defined above. Our lemma also naturally induces dependencies between these properties in a way that they can be ordered as required by our framework.

The order in which we traverse the properties is as follows, as dictated by Lemma~\ref{lem:noise-resilience-induction}. We will go from layer $0$ uptil $D$. For a particular layer  $d$, we will first group the properties corresponding to the spectral norms of the Jacobians of that layer whose corresponding margins are $\{\marginspec{d'}{d}\}_{d'=1}^{d}$. Next, we will group the row $\ell_2$ norms of the Jacobians of layer $d$, whose corresponding margins are $\{\marginjacob{d'}{d}\}_{d'=1}^{d}$. Followed by this, we will have a singleton set of the layer output's $\ell_2$ norm whose corresponding margin is $\marginoutput{d}$. We then will group the pre-activations of layer $d$, each of which has the corresponding margin $\marginpreact{d}$. For the output layer, instead of the pre-activations or the output $\ell_2$ norm, we will consider the margin-based property we have defined above.  \footnote{For layer $0$, the only property that we have defined is the $\ell_2$ norm of the input.}
\footnote{Note that the Jacobian for $\verbaljacobian{d}{d}$ is nothing but an identity matrix regardless of the input datapoint; thus we do not need any generalization analysis to bound its value on a test datapoint. Hence, we ignore it in our analysis, as can be seen from the list of properties that we have defined.} Observe that the number of sets $R$ that we have created in this manner, is at most $4D$ since there are at most $4$ sets of properties in each layer.

\subparagraph{Proving Constraint in Equation~\ref{eq:generic-noise-resilience}.}
Recall the constraint in Equation~\ref{eq:generic-noise-resilience}  that is required by our framework. For any $r$, the $r$th set of properties need to satisfy the following statement:
\begin{align*}
& \text{if } \forall q < r, \forall l, \rho_{q,l}(\W,\vec{x},y) > 0 \text{ then} \\
& Pr_{\U \sim \N(0,\sigma^2 I)} \Big[ 
{
\forall l  \; |\rho_{r,l}(\W+\U,{\vec{x},y}) - \rho_{r,l}(\W,{\vec{x},y}) | > \frac{\Delta_{r,l}(\sigma)}{2}
 }
 \; \; \text{ and } \\
& \; \; \; \; \; \; \; \; \; \; \; \; \; \; \; { \forall q {<} r,  \forall l \; \;  |\rho_{q,l}(\W+\U,{\vec{x},y}) - \rho_{q,l}(\W,{\vec{x},y}) |{<} \frac{\Delta_{q,l}(\sigma)}{2} } \Big]  \leq \frac{1}{(R+1)\sqrt{m}}. \;\;\; (\ref{eq:generic-noise-resilience})
\end{align*}

Furthermore, we want the perturbation bounds $\Delta_{r,l}(\sigma)$ to satisfy $\Delta_{r,l}(\sigma^\star) \leq \Delta_{r,l}^{\star} $, where $\sigma^\star$ is the standard deviation of the parameter perturbation chosen in the PAC-Bayesian analysis.

The next step in our proof is to show that our choice of $\sigma^\star$, and the input-dependent properties, all satisfy the above requirements. To do this, we instantiate Lemma~\ref{lem:noise-resilience-induction} with $\sigma=\sigma^\star$  as in Theorem~\ref{thm:generalization} (choosing appropriate constants), $\toldelta = \frac{1}{4D\sqrt{m}}$ and $\tolset = \marginset/2$. Then, it can be verified that the values of the perturbation bounds in $\tolset'$  in Lemma~\ref{lem:noise-resilience-induction} can be upper bounded by the corresponding value in $\marginset/2$. In other words, we have that for our chosen value of $\sigma$, the perturbations in all the properties and the output of the network can be bounded by the constants specified in $\marginset/2$.
Succinctly, let us say:
\begin{equation}
\tolset' \leq \marginset/2 \label{eq:tolerance-bound}
 \end{equation}


Given that these perturbation bounds hold for our chosen value of $\sigma$, we will focus on showing that a constraint of the form Equation~\ref{eq:generic-noise-resilience} holds for the row $\ell_2$ norms of the Jacobians $\verbaljacobian{d'}{d}$ for all $d' < d$. A similar approach would apply for the other properties. 

First, we note that the sets of properties preceding the ones corresponding to the row $\ell_2$ norms of Jacobian $\verbaljacobian{d'}{d}$, consists of all the properties upto layer $d-1$. Therefore, 
the precondition for Equation~\ref{eq:generic-noise-resilience} which is of the form $\rho(\W,\vec{x},y) > 0$ for all the previous properties $\rho$, translates to norm bound on these properties involving the constants $\testset_{d-1}$ as discussed in Fact~\ref{fact:norm-bounds}. Succinctly, these norm bounds can be expressed as $\boundednormevent(\W+\U,\testset_{d-1},\vec{x})$. 

Given that these norm bounds hold for a particular $\vec{x}$, our goal is to argue that the rest of the constraint in Equation~\ref{eq:generic-noise-resilience} holds.
To do this, we first argue that given these norm bounds,  if $\boundedperturbationevent(\W+\U,\marginset_{d-1}/2,\vec{x})$ holds, then so does 
$\unchangedacts_{d-1}(\W+\U,\vec{x})$. This is because, the event $\boundedperturbationevent(\W+\U,\marginset_{d-1}/2   ,\vec{x})$ implies that the pre-activation values of layer $d-1$ suffer a perturbation of at most  $\marginpreact{d-1}/2 = \trainpreact{d-1}/4$ i.e., $\max_h |\nn{\W+\U}{d-1}{h}{\vec{x}} - \nn{\W}{d-1}{h}{\vec{x}}| \leq \trainpreact{d-1}/4$. However, since $\boundednormevent(\W,\testset_{d-1},\vec{x})$ holds, we have that the preactivation values of this layer have a magnitude of at least $\testpreact{d-1} = \trainpreact{d-1}/2$ before perturbation i.e., $\min_h |\nn{\W}{d-1}{h}{\vec{x}}| \geq \trainpreact{d-1}/2$. From these two equations, we have that the hidden units even at layer $d-1$ of the network do not change their activation state (i.e., the sign of the pre-activation does not change) under this perturbation. We can similarly argue for the layers below $d-1$, thus proving that $\unchangedacts_{d-1}(\W+\U,\vec{x})$ holds under $\boundedperturbationevent(\W+\U,\marginset_{d-1}/2,\vec{x})$.

Then, from the above discussion on the activation states, and from Equation~\ref{eq:tolerance-bound}, we have that Lemma~\ref{lem:noise-resilience-induction} boils down to the following inequality, when we plug $\sigma = \sigma^\star$:

\begin{align*}
 Pr_{\U} \Big[ 
 \lnot \boundedperturbationevent(\W+\U, {\{ \marginjacob{d'}{d}/2\}}_{d'=1}^{d-1},\vec{x}) \wedge 
 \boundedperturbationevent(\W+\U,\marginset_{d-1}/2,\vec{x}) \Big] \leq  \frac{1}{4D \sqrt{m}}
\end{align*}

 First note that this inequality has the same form as the constraint required by Equation~\ref{eq:generic-noise-resilience} in our framework. Specifically, in place of the generic perturbation bound $\Delta(\sigma^\star)$, we have $\marginjacob{d'}{d}$. Furthermore, recall that our abstract generalization theorem in Theorem~\ref{thm:generic-generalization} required that the perturbation bound $\Delta_{r,l}(\sigma^\star)$ be smaller than the corresponding margin $\Delta_{r,l}^\star$. Since the margin here is $\marginjacob{d'}{d}$, this is indeed the case. 
Through identical arguments for the other sets of input-dependent properties that we have defined, we can similarly show how the constraint in Equation~\ref{eq:generic-noise-resilience} holds. 

Thus, the input-dependent properties we have devised satisfy all the requirements of our framework, allowing us to apply Theorem~\ref{thm:generic-generalization}, with $R \leq 4D$.  Here, we use a prior centered at the random initialization $\Z$; Lemma~\ref{lem:kldivergence} helps simplify the KL-divergence term between the posterior centered at $\W$ and the prior at the random initialization $\Z$.

\subparagraph{Covering argument.} To complete our proof, we need to take one more final step. First note that the guarantee in Theorem~\ref{thm:generic-generalization} requires that both $\sigma^\star$ and the margin constants $\Delta_{r,l}^{\star}$ in Equation~\ref{eq:condition} are all chosen before drawing the training dataset. Thus, to apply this bound in practice, one would have to train the network on multiple independent draws of the training dataset (roughly $\mathcal{O}(1/\delta)$ many draws), and then compute norm-bounds on the input-dependent properties across all these runs, and then choose the largest  $\sigma^\star$ based on all these norm-bounds. We emphasize that theoretically speaking, this sort of a bound is still a valid generalization bound that essentially applies to a restricted, norm-bounded class of neural networks. Indeed, the hope is that the implicit bias of stochastic gradient descent ensures that the networks it learns do satisfy these norm-bounds on the input-dependent properties across most draws of the training dataset. 

But for practical purposes, one may not be able to empirically determine norm-bounds that hold on $1-\delta$ of the training set draws, and one might want to get a generalization bound based on norm-bounds that hold on just a single draw. We take this final step in our proof in order to derive such a generalization bound.
We do this via the standard theoretical trick of `covering' the space of all possible norm-bounds. That is, consider the set of $\leq 4D^2$ different constants in $\trainset$ (that bound the different norms), based on which we choose $\sigma^\star$. We will create a `grid' of constants (independent of the training data) such that for any particular run of the algorithm, we can find a point on this grid (that corresponds to a configuration of the constants) for which the norm-bounds still hold for that run. These bounds will be looser, but only by a constant multiplicative factor. This will ensure that the bound resulting from choosing $\sigma^\star$ based on this point on the grid, is only a constant factor looser than choosing $\sigma^\star$ based on the actual norm-bounds for that training set. Then, we will instantiate Theorem~\ref{thm:generic-generalization} for all the points on the grid, and apply a union bound over all of these to get our final bound.

We create the grid based as follows. Observe that the bound we get from $\boundoutputpreact$ is at least as large as $\trainoutput{d-1}/(\sqrt{H}\classmargin)$. Then, for any value of $\trainoutput{d-1} = \omega{\sqrt{H}\classmargin \sqrt{m}}$, we will choose a value of $1/\sigma^\star$ that is $\omega{\sqrt{m}}$, rendering the final bound vacuous. Also note that $\trainoutput{d-1} \geq 1$. Thus, we will focus on the interval $[1,\bigoh{\sqrt{H}\classmargin \sqrt{m}} ]$, and grid it based on the points $1,2,4,8, \hdots, \bigoh{\sqrt{H}\classmargin \sqrt{m}}$. Observe that any value of $\trainoutput{d-1}$ can be approximated by one of these points within a multiplicative factor of $2$. Furthermore, this gives rise to at most $\bigoh{\log_2 \sqrt{H}\classmargin \sqrt{m}}$ many points on this grid.  Next, for a given point on this grid, by examining $\boundoutput$ and $\boundoutputpreact$, we can similarly argue how the range of values of $\trainjacob{d'}{d}$ is limited between $1$ and a polynomial in terms of $H$ and $\classmargin$; this range of values can similarly be split into a grid. Then, by examining $\boundhiddenpreact$, we can arrive at a similar grid for the quantity $1/\trainpreact{d}$; by examining $\boundjacob$, we can get a grid for $\trainspec{d'}{d}$ too. In this manner, we can grid the space of all possible configurations of the constants into at most $(poly(H,D,m,\classmargin))^{4D^2}$ many points (since there are not more than $4D^2$ different constants). 

For any given run, we can pick a point from this grid such that the norm-bounds are loose only by a constant multiplicative factor. Finally, we apply Theorem~\ref{thm:generic-generalization} for each of these grids by setting the failure probability to be $\delta/(poly(H,D,m,\classmargin))^{4D^2}$, and then combine them via a union bound. Note, that the resulting bound would have a $\sqrt{\log \frac{(poly(H,D,m,\classmargin))^{4D^2}}{\delta}}/\sqrt{m}$ term, that would only result in a $\sqrt{\frac{D^2 \log poly(H,D,m,\classmargin)}{m}}$ term that does not affect our bound in an asymptotic sense.

\end{proof}

\subsection{Additional Experiments.}
\label{app:interpret-generalization}
In this section, we provide more detailed demonstration of the dependence of the terms in our bound on the depth/width of the network. In all the experiments, including the ones in the main paper (except the one in Figure~\ref{fig:overall-bounds-depth} (b)) we use SGD with learning rate $0.1$ and mini-batch size $64$. We train the network on a subset of $4096$ random training examples from the MNIST dataset to minimize cross entropy loss. We stop training when we classify at least $0.99$ of the data perfectly, with a margin of $\classmargin=10$. In  Figure~\ref{fig:overall-bounds-depth} (b) where we train networks of depth $D=28$, the above training algorithm is quite unstable. Instead, we use Adam with a learning rate of $10^{-5}$ until the network achieves an accuracy of $0.95$ on the training dataset. Finally, we note that all logarithmic transformations in our plots are to the base $10$.

In Figure~\ref{fig:alpha-jacobian} we show how the norm-bounds on the input-dependent properties of the network do not scale as large as the product of spectral norms. 

\begin{figure}[!h]
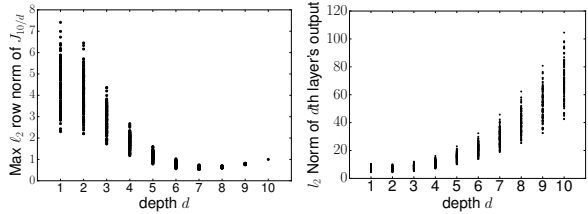

    \centering
    \begin{minipage}[t]{\textwidth}
    \centering
            \begin{minipage}{.28\textwidth}
        \centering
        \adjincludegraphics[width=\textwidth,trim={0 {0} 0 0},clip,valign=t]{Images/jacobian} %
    \end{minipage}%
        \begin{minipage}{.28\textwidth}
        \centering
        \adjincludegraphics[width=\textwidth,trim={0 {0} 0 0},clip,,valign=t]{Images/alpha} %
    \end{minipage}%
    \end{minipage}
          \caption{In all these plots, we train a network with $D=11$, $H=1280$.
       \textbf{Left}: Each black point corresponds to the maximum row $\ell_2$ norm of the Jacobian $\verbaljacobian{d}{10}$. Observe that for any $d$, these quantities are nowhere near as large as a naive upper bound that would roughly scale as $\prod_{d'=d}^{10} \|W_d\|_2 = 2^{10-d}$.  \textbf{Right}:  Each black point corresponds to a particular training example $\vec{x}$, and has $y$-value equal to the $\ell_2$ norm of the output of layer $d$ for that datapoint.  A naive upper bound on this value would be $\| \vec{x}\| \prod_{d'=1}^{d} \| W_d\|_2 \approx 10 \cdot 2^d$, which would be at least  $100$ times larger than the observed value for $d=10$. }
 \label{fig:alpha-jacobian}
\end{figure}

For the remaining experiments in this section, we will present a slightly looser bound than the one presented in our main result, motivated by the fact that computing our actual bound is expensive as it involves computing spectral norms of $\Theta(D^2)$ Jacobians on $m$ training datapoints. We note that even this looser bound does not have a dependence on the product of spectral norms, and has similar overall dependence on the depth.

 Specifically, we will consider a bound that is based on a slightly modified noise-resilience analysis. Recall that in Lemma~\ref{lem:noise-resilience-induction}, when we considered the perturbation in the row $\ell_2$ norm Jacobian $\verbaljacobian{d'}{d}$, we bounded Equation~\ref{eq:vanilla-jacobian-bound} in terms of the spectral norms of the Jacobians. Instead of taking this route, if we retained the bound in Equation~\ref{eq:vanilla-jacobian-bound}, we will get a slightly different upper bound on the perturbation of the Jacobian row  $\ell_2$ norm as:
\begin{align*}
\toljacob{d'}{d}' := \sigma \sum_{d''=d'+1}^{d} \frob{\jacobian{\W}{d''}{d}{}{\vec{x}}} (\frob{\jacobian{\W}{d'}{d''-1}{}{\vec{x}}} + \toljacob{d'}{d''-1}\sqrt{H})\sqrt{2\ln \frac{D^2H^2}{\toldelta}}
\end{align*}

By using this bound in our analysis, we can ignore the spectral norm terms $\trainspec{d'}{d}$ and derive a generalization bound that does not involve these terms. However, we would now have $\mathcal{O}(D^2)$ conditions instead of $\mathcal{O}(D)$. This is because, the perturbation bound for the row norms of Jacobian $\verbaljacobian{d'}{d}$ now depends on the row norms of Jacobian $\verbaljacobian{d''}{d}$, for all $d'' > d'$. Thus, the row $\ell_2$ norms of these Jacobians must be split into separate sets of properties, and the bound on them generalized one after the other (instead of grouped into one set and generalized all at one go as before). This would give us a similar generalization bound that is looser by a factor of $D$, does not involve $\boundspec$, and where $\boundjacob$ is redefined as:

\[\boundjacob :=  \bigoh{ \max_{1 \leq d < D}  \max_{1 \leq d' < d \leq D}  \frac{\sum_{d''=d'+1}^{d} \trainjacob{d''}{d-1}  \trainjacob{d'}{d''-1}}{  \trainjacob{d'}{d} } }   \]

All other terms remain the same. In the rest of the discussion, we plot this generalization bound that is looser by a $D$ factor, but still does not depend on the product of the spectral norms. 
In Figure~\ref{fig:quantities-vs-depth_1280} we show how the quantities in this bound and the bound itself varies with depth, for a  network of $H=1280$, wider than what we considered in Figure~\ref{fig:quantities-vs-depth}.

\begin{figure}[!h]
    \centering
    \begin{minipage}[t]{\textwidth}
    \centering
        \begin{minipage}{.25\textwidth}
        \centering
        \adjincludegraphics[width=1\textwidth,trim={0 {0} 0 0},clip,,valign=t]{Images/b_jac_vs_depth_log_width=1280} %
    \end{minipage}%
            \begin{minipage}{.25\textwidth}
        \centering
        \adjincludegraphics[width=1\textwidth,trim={0 {0} 0 0},clip,valign=t]{Images/b_out_vs_depth_log_width=1280} %
    \end{minipage}%
                \begin{minipage}{.25\textwidth}
        \centering
        \adjincludegraphics[width=1\textwidth,trim={0 {0} 0 0},clip,valign=t]{Images/b_output_preact_vs_depth_log_width=1280} %
    \end{minipage}%
               \begin{minipage}{.25\textwidth}
        \centering
        \adjincludegraphics[width=1\textwidth,trim={0 {0} 0 0},clip,valign=t]{Images/spectral_norm_term_vs_depth_log_width=1280} %
    \end{minipage}%
    \\
    \vspace{5pt}
                    \begin{minipage}{.25\textwidth}
        \centering
        \adjincludegraphics[width=1\textwidth,trim={0 {0} 0 0},clip,valign=t]{Images/b_hidden_preact_vs_depth_log_width=1280} %
    \end{minipage}%
                \begin{minipage}{.25\textwidth}
        \centering
        \adjincludegraphics[width=1\textwidth,trim={0 {0} 0 0},clip,valign=t]{Images/five_pc_b_hidden_preact_vs_depth_log_width=1280} %
    \end{minipage}%
                    \begin{minipage}{.25\textwidth}
        \centering
        \adjincludegraphics[width=1\textwidth,trim={0 {0} 0 0},clip,valign=t]{Images/median_b_hidden_preact_vs_depth_log_width=1280} %
    \end{minipage}%
                   \begin{minipage}{.25\textwidth}
        \centering
        \adjincludegraphics[width=1\textwidth,trim={0 {0} 0 0},clip,valign=t]{Images/overall_bound_vs_depth_log_width=1280} %
    \end{minipage}%
    \end{minipage}
        \caption{In the above figure, we plot the logarithm (to the base 10) of values of the terms in our bound and in existing bounds for $H=1280$. Again, we observe that  $\boundjacob, \boundoutput, \boundoutputpreact$ typically lie in the range of $[10^0, 10^2]$. In contrast, the equivalent term from \cite{neyshabur18pacbayes} consisting of the product of spectral norms can be as large as $10^5$ for $D=10$. Unfortunately, for large $H$, due to numerical precision issues, the smallest pre-activation value is rounded off to zero and hence $\boundhiddenpreact$ becomes undefined in such situations. However, as noted before, the hypothetical variations  $5\%$-$\boundhiddenpreact$  and median-$\boundhiddenpreact$ are bounded better and achieve significantly smaller values. 
        Finally, observe that our overall bound and all its hypothetical variations have a smaller slope than previous bounds.
         } \label{fig:quantities-vs-depth_1280}
\end{figure}

In Figure~\ref{fig:quantities-vs-width_7} and Figure~\ref{fig:quantities-vs-width_13} we show log-log (note that here even the $x$-axis has been transformed logarithmically) plots of all the quantities for networks of varying width and $D=8$ and $D=14$ respectively.  Here, we observe that $\boundjacob$ is width-independent. On the other hand $\boundoutput$ and the product-of-spectral-norm term mildly decrease with width; $\boundoutputpreact$ {\em decreases} with width at the rate of $1/\sqrt{H}$.

As far as the term $\boundhiddenpreact$ is concerned, recall from our discussion in the main paper that the minimum pre-activation value $\trainpreact{d}$ of the network tends to be quite small in practice (and can be rounded to zero due to precision issues). Therefore the term $\boundhiddenpreact$ can be arbitrarily large and exhibit considerable variance across different widths/depths and different training runs.  On the other hand, interestingly, the hypothetical variation median-$\boundhiddenpreact$ {\em decreases} with width at the rate of $1/\sqrt{H}$, while  $5\%$-$\boundhiddenpreact$ increases with a $\sqrt{H}$ dependence on width.

Theoretically speaking, as far as the width-dependence is concerned, the best-case scenario for $\boundhiddenpreact$ can be realized when the preactivation values of each layer (which has a total $\ell_2$ norm that is width-independent in practice) are equally spread out across the hidden units. Then we will have that the smallest pre-activation value to be as large as $\Omega(1/\sqrt{H})$.

\begin{figure}[!h]
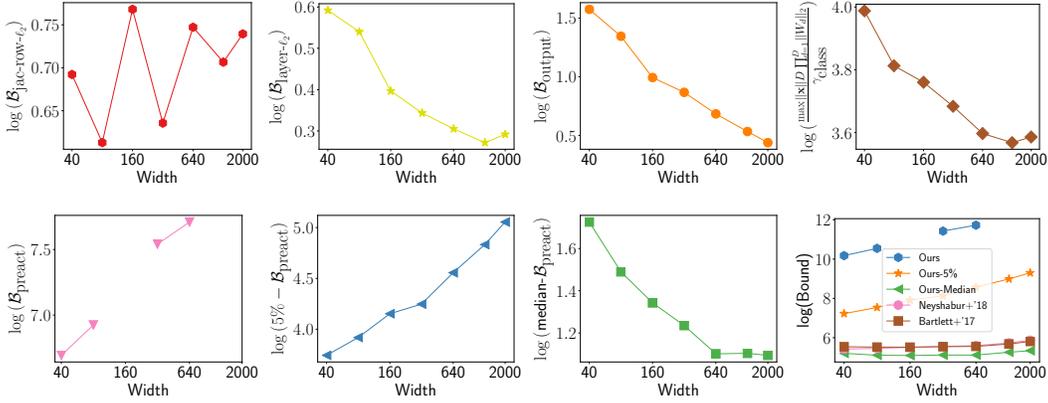

    \centering
    \begin{minipage}[t]{\textwidth}
    \centering
        \begin{minipage}{.25\textwidth}
        \centering
        \adjincludegraphics[width=1\textwidth,trim={0 {0} 0 0},clip,,valign=t]{Images/b_jac_vs_width_log_log_depth=7} %
    \end{minipage}%
            \begin{minipage}{.25\textwidth}
        \centering
        \adjincludegraphics[width=1\textwidth,trim={0 {0} 0 0},clip,valign=t]{Images/b_out_vs_width_log_log_depth=7} %
    \end{minipage}%
                \begin{minipage}{.25\textwidth}
        \centering
        \adjincludegraphics[width=1\textwidth,trim={0 {0} 0 0},clip,valign=t]{Images/b_output_preact_vs_width_log_log_depth=7} %
    \end{minipage}%
               \begin{minipage}{.25\textwidth}
        \centering
        \adjincludegraphics[width=1\textwidth,trim={0 {0} 0 0},clip,valign=t]{Images/spectral_norm_term_vs_width_log_log_depth=7} %
    \end{minipage}%
    \\
    \vspace{5pt}
                    \begin{minipage}{.25\textwidth}
        \centering
        \adjincludegraphics[width=1\textwidth,trim={0 {0} 0 0},clip,valign=t]{Images/b_hidden_preact_vs_width_log_log_depth=7} %
    \end{minipage}%
                \begin{minipage}{.25\textwidth}
        \centering
        \adjincludegraphics[width=1\textwidth,trim={0 {0} 0 0},clip,valign=t]{Images/five_pc_b_hidden_preact_vs_width_log_log_depth=7} %
    \end{minipage}%
                    \begin{minipage}{.25\textwidth}
        \centering
        \adjincludegraphics[width=1\textwidth,trim={0 {0} 0 0},clip,valign=t]{Images/median_b_hidden_preact_vs_width_log_log_depth=7} %
    \end{minipage}%
                   \begin{minipage}{.25\textwidth}
        \centering
        \adjincludegraphics[width=1\textwidth,trim={0 {0} 0 0},clip,valign=t]{Images/overall_bound_vs_width_log_log_depth=7} %
    \end{minipage}%
    \end{minipage}
        \caption{Log-log plots of various terms for $D=8$ and varying $H$.   Note that if the slope of the $\log y$ vs $\log H$ plot is $c$, then $y \propto H^c$.
         } \label{fig:quantities-vs-width_7}
\end{figure}

\begin{figure}[!h]
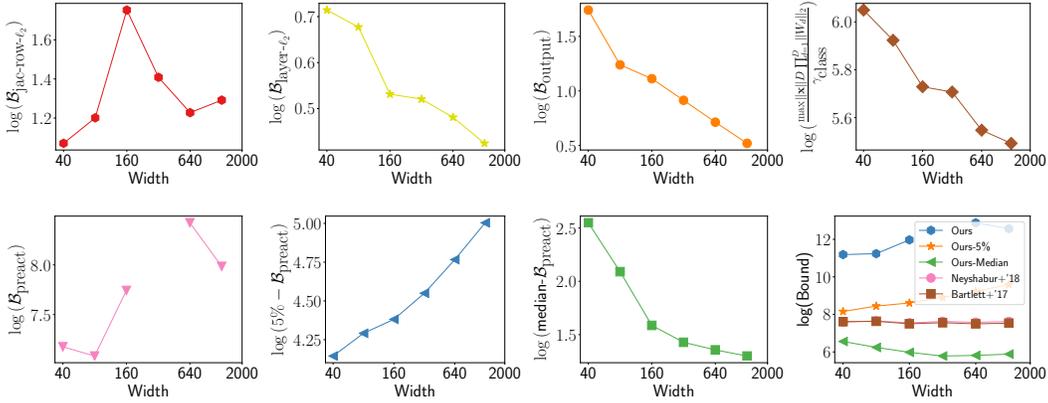

    \centering
    \begin{minipage}[t]{\textwidth}
    \centering
        \begin{minipage}{.25\textwidth}
        \centering
        \adjincludegraphics[width=1\textwidth,trim={0 {0} 0 0},clip,,valign=t]{Images/b_jac_vs_width_log_log_depth=13} %
    \end{minipage}%
            \begin{minipage}{.25\textwidth}
        \centering
        \adjincludegraphics[width=1\textwidth,trim={0 {0} 0 0},clip,valign=t]{Images/b_out_vs_width_log_log_depth=13} %
    \end{minipage}%
                \begin{minipage}{.25\textwidth}
        \centering
        \adjincludegraphics[width=1\textwidth,trim={0 {0} 0 0},clip,valign=t]{Images/b_output_preact_vs_width_log_log_depth=13} %
    \end{minipage}%
               \begin{minipage}{.25\textwidth}
        \centering
        \adjincludegraphics[width=1\textwidth,trim={0 {0} 0 0},clip,valign=t]{Images/spectral_norm_term_vs_width_log_log_depth=13} %
    \end{minipage}%
    \\
    \vspace{5pt}
                    \begin{minipage}{.25\textwidth}
        \centering
        \adjincludegraphics[width=1\textwidth,trim={0 {0} 0 0},clip,valign=t]{Images/b_hidden_preact_vs_width_log_log_depth=13} %
    \end{minipage}%
                \begin{minipage}{.25\textwidth}
        \centering
        \adjincludegraphics[width=1\textwidth,trim={0 {0} 0 0},clip,valign=t]{Images/five_pc_b_hidden_preact_vs_width_log_log_depth=13} %
    \end{minipage}%
                    \begin{minipage}{.25\textwidth}
        \centering
        \adjincludegraphics[width=1\textwidth,trim={0 {0} 0 0},clip,valign=t]{Images/median_b_hidden_preact_vs_width_log_log_depth=13} %
    \end{minipage}%
                   \begin{minipage}{.25\textwidth}
        \centering
        \adjincludegraphics[width=1\textwidth,trim={0 {0} 0 0},clip,valign=t]{Images/overall_bound_vs_width_log_log_depth=13} %
    \end{minipage}%
    \end{minipage}
        \caption{Log-log plots of various terms for $D=14$ and varying $H$.     
         } \label{fig:quantities-vs-width_13}
\end{figure}

\section{Comparison of our noise-resilience conditions with related work}
\label{app:comparison}

Recall from the discussion in the introduction in the main paper that, prior works \citep{neyshabur17exploring,arora18compression}
have also characterized noise resilience in terms of conditions on the interactions between the activated weight matrices. Below, we discuss the conditions assumed by these works, which parallel the conditions we have studied in our paper (such as the bounded $\ell_2$ norm in each layer). 

There are two main high level similarities between the conditions studied across these works. First, these conditions -- all of which characterize the interactions between the activated weights matrices in the network -- are assumed only for the training inputs; such an assumption implies noise-resilience of the network on training inputs. Second, there are two kinds of conditions assumed. The first kind  allows one to bound the propagation of noise through the network under the assumption that the activation states do not flip; the second kind  allows one to bound the extent to which the activation states do flip.

\subparagraph{Conditions in \cite{neyshabur17exploring}}
Using noise-resilience conditions assumed about the network on the training data,
\cite{neyshabur17exploring} derive a PAC-Bayes based generalization bound on a stochastic network. The first condition in \cite{neyshabur17exploring} characterizes how the Jacobians of different parts of the network interact with each other. Specifically, consider layers $d, d'$ and $d''$ such that $d'' \leq d' \leq d$. Then, consider the 
 Jacobian of layer $d'$ with respect to layer $d''$ and the Jacobian of layer $d$ with respect to $d'$. Then, they require that 
  $\frob{\jacobian{\W}{d'}{d}{}{\vec{x}}} \frob{\jacobian{\W}{d''}{d'-1}{}{\vec{x}}} = \mathcal{O}(\frob{\jacobian{\W}{d''}{d}{}{\vec{x}}})$. This specific condition allows one to bound how the noise injected into the parameters propagate through the network under the assumption that the activation states do not flip. In our paper, we pick an orthogonal approach by assuming an upper bound on 
the Jacobian $\ell_2$ norms and the layer output norms, which allows us to bound the propagation of noise under unchanged activation states.

The second condition in \cite{neyshabur17exploring} is that under a noise of variance $\sigma^2$, the number of units that flip their activation state in a particular layer must be bounded as $\mathcal{O}(H \sigma)$ i.e., smaller the noise, the smaller the proportion of units that flip their activation state. This condition is similar  to (although milder than) our lower bounds on the magnitudes of the pre-activation values (which allow us to pick a sufficiently large noise that does not flip the activation states).

Note that a bound on the Jacobian norms corresponds to a bound on the weights input to the active units in the network. However, since  \cite{neyshabur17exploring} allow a few units to flip activation states, they additionally require a bound on the weights input to the inactive units too. Specifically,  for every layer, the maximum row $\ell_2$ norm of the weight matrix $W_i$ is upper bounded in terms of the Frobenius norm of the Jacobian ${\jacobian{\W}{d-1}{d}{}{\vec{x}}}$.

\subparagraph{Conditions in \cite{arora18compression}}

In contrast to our work and \cite{neyshabur17exploring}, \cite{arora18compression} use their assumed noise-resilience conditions to derive a bound on a compressed network. Another small technical difference here is that, the kind of noise analysed here is Gaussian noise injected into the activations of each layer of the network (and not exactly the weights).

 The first condition here characterizes the interaction between the Jacobian of layer $d$ with respect to $d'$ and the output of layer $d'$. Specifically, this is a lower bound on the so-called `interlayer cushion', which is evaluated as

\[
\frac{\elltwo{\jacobian{\W}{d'}{d}{}{\vec{x}} \nn{\W}{d'}{}{\vec{x}}}} 
{\frob{\jacobian{\W}{d'}{d}{}{\vec{x}}} \elltwo{\nn{\W}{d'}{}{\vec{x}}}}
\]

Essentially when the interlayer cushion is sufficiently large, it means that the output of layer $d'$ is well-aligned with the larger singular directions of the Jacobian matrix above it; as a result it can be shown that noise injected at/below layer $d'$ diminishes as it propagates through the weights above layer $d'$, assuming the activation states do not flip. Again, our analysis is technically orthogonal to this style of analysis as we bound the propogation of the noise under unchanged activation states assuming that the norms of the Jacobians and the layer outputs are bounded.

Another important condition in \cite{arora18compression} is that of ``interlayer smoothness'' which effectively captures how far the set of activation states between two layers, say $d'$ and $d$, flip under noise. Roughly speaking, the assumption made here is that when noise is injected into layer $d'$, there is not much difference between a) the output of the $d$th layer with the activation states of the units in layers $d'$ until $d$ frozen at their original state and b) the output of the $d$th layer with the activation states of the units in layers $d'$ to $d$ allowed to flip under the noise. As stated before, this condition is a relaxed version of our condition that essentially implies that none of the activation states flip.

\end{document}